\documentclass{article}

\usepackage{iclr2022_conference,times}

\usepackage[utf8]{inputenc} 
\usepackage[T1]{fontenc}    
\usepackage[noend]{algorithmic}
\usepackage[ruled,vlined,noend]{algorithm2e}
\usepackage[colorlinks]{hyperref}       
\hypersetup{
     citecolor    = black,
     linkcolor = blue,
     urlcolor = blue
}

\usepackage{url}            
\usepackage{booktabs}       
\usepackage{amsfonts}       
\usepackage{nicefrac}       
\usepackage{microtype}      
\usepackage{xcolor}         

\usepackage{pdfpages}
\usepackage{amsmath}
\usepackage[capitalise,nameinlink]{cleveref}
\usepackage{subcaption}

\usepackage{bm}
\usepackage{amssymb}
\usepackage{amsthm}
\usepackage{multicol}
\usepackage{multirow}
\usepackage{mathtools} 
\usepackage{selectp}  
\usepackage{wrapfig} 

\hypersetup{
     colorlinks   = true,
     citecolor    = black
}

\DeclareMathOperator{\diag}{diag}

\newtheorem{theorem}{Theorem}
\newtheorem{definition}{Definition}
\newtheorem{lemma}{Lemma}

\providecommand{\realnum}{\mathbb{R}}
\providecommand{\naturalnum}{\mathbb{N}}

\definecolor{ddgray}{gray}{0.65}

\providecommand{\polyname}{Polynomial Nets}
\providecommand{\shortpolynamesingle}{PN}
\providecommand{\shortpolyname}{\shortpolynamesingle s}

\newcommand{\grigoris}[1]{\textcolor{purple}{Grigoris: #1}}

\iclrfinalcopy

\ificlrfinal  
    \newcommand{\rebuttal}[1]{{#1}}
\else
    \newcommand{\rebuttal}[1]{\textcolor{red}{#1}}
\fi

\title{Controlling the Complexity and Lipschitz Constant improves \polyname}

\author{
Zhenyu Zhu, \quad Fabian Latorre, \quad Grigorios G Chrysos, \quad Volkan Cevher\vspace{2mm} \\
{\hspace*{\fill}EPFL, Switzerland\hspace*{\fill}}\\
{\hspace*{\fill}\texttt{\{[first name].[surname]\}@epfl.ch}\hspace*{\fill}}
}

\begin{document}

\maketitle

\begin{abstract}

While the class of \polyname{}  demonstrates comparable performance to neural networks (NN), it currently has neither theoretical generalization characterization nor robustness guarantees. To this end, we derive new complexity bounds for the set of Coupled CP-Decomposition (CCP) and Nested Coupled CP-decomposition (NCP) models of \polyname{} in terms of the $\ell_\infty$-operator-norm and the $\ell_2$-operator norm. In addition, we derive bounds on the Lipschitz constant for both models to establish a theoretical certificate for their robustness. The theoretical results enable us to propose a principled regularization scheme that we also evaluate experimentally in six datasets and show that it improves the accuracy as well as the robustness of the models to adversarial perturbations. We showcase how this regularization can be combined with adversarial training, resulting in further improvements.

\end{abstract}

\vspace{-0.3cm}
\section{Introduction}
\label{sec:complexity_introduction}
\vspace{-0.2cm}

Recently, high-degree \polyname{} (\shortpolyname) have been
demonstrating {\color{teal}state-of-the-art performance in a range of challenging tasks} like
image generation~\citep{karras2018style, chrysos2020naps}, image
classification~\citep{wang2018non}, reinforcement
learning~\citep{jayakumar2020Multiplicative}, non-euclidean representation
learning~\citep{chrysos2020pinets} and sequence
models~\citep{su2020convolutional}. In particular, 
in public benchmarks like the \textit{Face verification on MegaFace}
task\footnote{\url{https://paperswithcode.com/sota/face-verification-on-megaface}}
\citep{kemelmacher2016megaface}, {\polyname{} are currently the
top performing model}.

A major advantage of \polyname{} over
traditional Neural Networks\footnote{\label{note1}with
non-polynomial activation functions.} is that
they are compatible with efficient \textit{Leveled Fully Homomorphic Encryption (LFHE)} protocols \citep{brakerski2014leveled}. Such protocols allow efficient computation on encrypted data, but they only support addition or multiplication operations i.e., polynomials. This has prompted an effort to adapt neural networks by replacing typical activation functions with polynomial approximations \citep{gilad-bachrach16,hesamifard2018privacy}. \polyname{} do not need any adaptation to work with LFHE.

Without doubt, these arguments motivate further investigation about
the inner-workings of \polyname{}. Surprisingly, little is known about the theoretical
properties of such high-degree polynomial expansions, despite their success.
Previous work on \shortpolyname{}~\citep{chrysos2020pinets,chrysos2020naps}
have focused on developing the foundational structure of the models as well
as their training, but do not provide an analysis of their generalization ability
or robustness to adversarial perturbations.

In contrast, such type of results are readily available for traditional
feed-forward Deep Neural Networks, in the form of
high-probability generalization error bounds
\citep{pmlr-v40-Neyshabur15,bartlett2017spectrallynormalized,
neyshabur2017pac,Golowich2018SizeIndependentSC} or upper bounds
on their Lipschitz constant \citep{Scaman2018lipschitz,fazlyab2019efficient,
latorre2020lipschitz}. Despite their similarity in the compositional structure,
{\color{teal}theoretical results for Deep Neural Networks$^{\ref{note1}}$  do not apply to \polyname{}}, as they are
essentialy two non-overlapping classes of functions.

{\color{teal}Why are such results important?} First, they provide key theoretical
quantities like the sample complexity of a hypothesis class: how many samples
are required to succeed at learning in the PAC-framework. Second, they provide
certified performance guarantees to adversarial perturbations
\citep{szegedy2013intriguing, goodfellow2015explaining} via a worst-case
analysis c.f. \cite{Scaman2018lipschitz}. Most importantly, the bounds
themselves {\color{teal}provide a principled way to regularize the hypothesis
class and improve their accuracy or robustness}.

For example, Generalization and Lipschitz constant bounds of Deep Neural
Networks that depend on the operator-norm of their weight matrices
\citep{bartlett2017spectrallynormalized,neyshabur2017pac} have layed out the
path for regularization schemes like spectral regularization
\citep{yoshida2017spectral, miyato2018spectral}, Lipschitz-margin training
\citep{tsuzuku2018lipschitz} and Parseval Networks \citep{cisse2017parseval},
to name a few.

Indeed, such schemes have been observed in practice to improve the performance
of Deep Neural Networks. {\color{teal}Unfortunately, similar regularization
schemes for \polyname{} do not exist due to the lack of analogous bounds.}
Hence, it is possible that \shortpolyname{} are not yet being used to their
fullest potential.  We believe that theoretical advances in their understanding
might lead to more resilient and accurate models. In this work, {we
aim to fill the gap in the theoretical understanding of \shortpolyname}. We
summarize our main contributions as follows:

\textbf{Rademacher Complexity Bounds.} We derive bounds on the Rademacher
Complexity of the Coupled CP-decomposition model (CCP) and Nested Coupled
CP-decomposition model (NCP) of \shortpolyname, under the assumption
of a unit $\ell_\infty$-norm bound on the input (\cref{theorem_CCP_RC_Linf,theorem_NCP_RC_Linf}), a natural
assumption in image-based applications. Analogous bounds for the $\ell_2$-norm
are also provided (\cref{ssec:CCP_RC_l2,ssec:ncp_rc_l2}). { Such bounds lead to the first known
generalization error bounds for this class of models}.

\textbf{Lipschitz constant Bounds.} To complement our understanding of the CCP
and NCP models, we derive upper bounds on their $\ell_\infty$-Lipschitz constants (\cref{theorem_CCP_LC_Linf,theorem_NCP_LC_Linf}), which are directly related
to their robustness against $\ell_\infty$-bounded adversarial
perturbations, and provide formal guarantees. Analogous results hold for any $\ell_p$-norm (\cref{ssec:complexity_proof_theorem_lc,ssec:lips_bound_ncp}).

\textbf{Regularization schemes.} We identify key quantities that simultaneously control both Rademacher
Complexity and Lipschitz constant bounds that we previously derived, i.e., the operator norms of
the weight matrices in the \polyname{}. Hence, we propose to regularize the CCP and NCP
models by constraining such operator norms. In doing so, our theoretical results indicate that both the
generalization and the robustness to adversarial perturbations should improve. We propose a Projected
Stochastic Gradient Descent scheme (\cref{alg:projection}), enjoying the same per-iteration complexity as vanilla back-propagation in the $\ell_\infty$-norm case, and a variant that augments the base algorithm with adversarial traning (\cref{alg:projection_at}).

\textbf{Experiments.} We conduct experimentation in \emph{five} widely-used datasets on image recognition and on dataset in audio recognition. The experimentation illustrates how the aforementioned regularization schemes impact the accuracy (and the robust accuracy) of both CCP and NCP models, outperforming alternative schemes such as Jacobian regularization and the $L_2$ weight decay. Indeed, for a grid of
regularization parameters we observe that there exists a sweet-spot for the
regularization parameter which not only increases the test-accuracy of the
model, but also its resilience to adversarial perturbations. Larger values of
the regularization parameter also allow a trade-off between accuracy and
robustness. The observation is consistent across all datasets and all adversarial attacks demonstrating the efficacy of the proposed regularization scheme.

 \vspace{-0.3cm}
\section{Rademacher Complexity and Lipschitz constant bounds for \polyname{}}
\label{sec:complexity_method}
\vspace{-0.2cm}

\paragraph{Notation.} The symbol $\circ$ denotes the Hadamard (element-wise)
product, the symbol $\bullet$ is the face-splitting product, while the symbol
$\star$ denotes a convolutional operator. Matrices are denoted by uppercase
letters e.g., $\bm{V}$. 
Due to the space constraints, a detailed notation is deferred to \cref{sec:complexity_suppl_background}.

\vspace{-1mm}
\paragraph{Assumption on the input distribution.}
Unless explicitly mentioned otherwise, we assume an $\ell_\infty$-norm unit
bound on the input data i.e., $\|\bm{z}\|_\infty \leq 1$ for any input $\bm{z}$. 
This is the most common assumption in image-domain applications in contemporary deep learning, i.e., each pixel takes values in $[-1, 1]$ interval.
Nevertheless,
analogous results for $\ell_2$-norm unit bound assumptions are presented in
\cref{ssec:CCP_RC_l2,ssec:ncp_rc_l2,ssec:complexity_proof_theorem_lc,ssec:lips_bound_ncp}.  

We now introduce the basic concepts that will be developed throughout the paper i.e., the Rademacher Complexity of a class of functions \citep{bartlett2002rademacher} and the Lipschitz constant.

\begin{definition}[Empirical Rademacher Complexity]
Let $Z = \left \{ \bm{z}_{1}, \ldots ,\bm{z}_{n} \right \} \subseteq \realnum^d$
and let $\left \{\sigma_{j}:j = 1,\ldots, n \right \}$ be
independent Rademacher random variables i.e., taking values uniformly in
$\left \{ -1, +1 \right \}$. Let $\mathcal{F}$ be a class of real-valued
functions over $\realnum^d$. The Empirical Rademacher complexity of
$\mathcal{F}$ with respect to $Z$ is defined as follows:
\begin{equation*} 
        \mathcal{R}_{Z}(\mathcal{F}) \coloneqq \mathbb{E}_{\sigma} \bigg[
            \sup_{f\in\mathcal{F}}\frac{1}{n}\sum_{j=1}^{n}\sigma_{j}f(\bm{z}_{j})
            \bigg]\,.
    \end{equation*}
\end{definition}

\begin{definition}[Lipschitz constant]
    Given two normed spaces ($\mathcal{X}$, $\| \cdot \|_{\mathcal{X}}$) and ($\mathcal{Y}$, $\| \cdot \|_{\mathcal{Y}}$), a function $f$ : $\mathcal{X}\rightarrow \mathcal{Y}$ is called Lipschitz continuous with Lipschitz constant $K \geq 0$ if for all $x_1, x_2$ in $\mathcal{X}$:
    \begin{equation*} 
    \| f(x_{1}) - f(x_{2}) \|_{\mathcal{Y}} \leq K \| x_{1} - x_{2} \|_{\mathcal{X}}\,.
    \end{equation*}
\end{definition}

\vspace{-1.5mm}
\subsection{Coupled CP-Decomposition of \polyname{} (CCP model)}
\label{ssec:complexity_method_CCP_model}
\vspace{-1mm}

\begin{figure}
    \centering
    \includegraphics[width=0.99\textwidth]{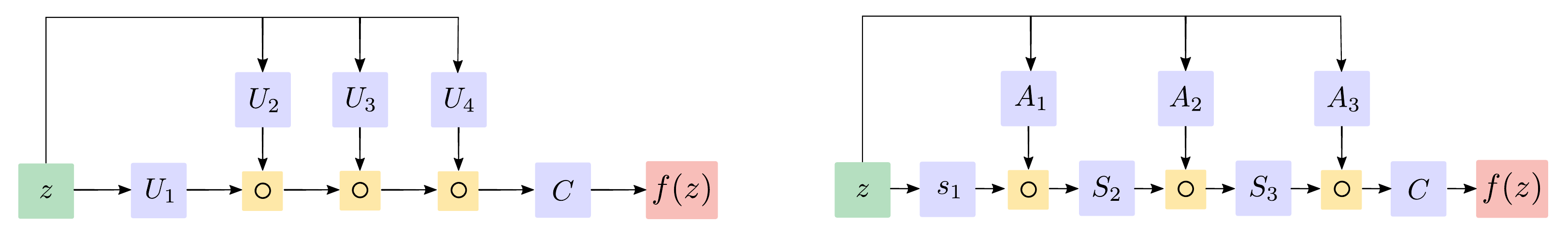}
    \caption{Schematic of CCP model (left) and NCP model (right), where $\circ$ denotes the Hadamard product. Blue boxes correspond to learnable parameters. Green and red boxes denote input and output, respectively. Yellow boxes denote operations.}
    \label{fig:ccp_model_reparamtr_schematic}
\end{figure}

The Coupled CP-Decomposition (CCP) model of
\shortpolyname{}~\citep{chrysos2020pinets} leverages a coupled CP Tensor
decomposition~\citep{kolda2009tensor} to vastly reduce the parameters required
to describe a high-degree polynomial, and allows its computation in a
compositional fashion, much similar to a feed-forward pass through a
traditional neural network. The CCP model was used in \citet{chrysos2020naps} to construct a generative model. CCP can be succintly defined as follows: 
\begin{equation} \label{eq_ccp_3}
    f(\bm{z})=\bm{C} \circ_{i=1}^{k}\bm{U}_{i}\bm{z}\,,
    \tag{{CCP}}
\end{equation}
where $ \bm{z} \in \realnum^{d}$ is the input data,
$f(\bm{z}) \in \realnum^{o}$ is the output of the model and $\bm{U}_{n}\in\realnum^{m
\times d}, \bm{C}\in\realnum^{o \times m}$ are the
learnable parameters, where $m \in \naturalnum$ is the hidden rank. In \cref{fig:ccp_model_reparamtr_schematic} we provide a schematic of the architecture, while in \cref{ssec:CCP_mode_repara} we include further details on
the original CCP formulation (and how to obtain our equivalent
re-parametrization) for the interested reader.

In \cref{theorem_CCP_RC_Linf} we derive an upper
bound on the complexity of CCP models with bounded $\ell_\infty$-operator-norms
of the face-splitting product of the weight matrices. Its proof can
be found in \cref{sssec:complexity_proof_of_theorem_rc}. For a given CCP model, we derive an upper bound on its $\ell_\infty$-Lipschitz constant in \cref{theorem_CCP_LC_Linf} and its proof is given in \cref{proof_theorem_CCP_LC_Linf}.

\begin{theorem}\label{theorem_CCP_RC_Linf} 
Let $Z=\{\bm{z}_1, \ldots, \bm{z}_n\} \subseteq \realnum^{d}$ and suppose that
$\|\bm{z}_j\|_\infty \leq 1$ for all $j=1, \ldots, n$. Let
\begin{equation*}
\mathcal{F}_\text{CCP}^k \coloneqq \left \{ f(\bm{z})=\left \langle \bm{c}, \circ_{i=1}^{k} \bm{U}_{i}
    \bm{z} \right \rangle: \|\bm{c}\|_1 \leq \mu, \left \| \bullet_{i=1}^{k}\bm{U}_{i}
    \right \|_{\infty} \leq \lambda \right \}\,.
\end{equation*}
    The Empirical Rademacher Complexity of $\text{CCP}_k$ ($k$-degree CCP polynomials) with respect to $Z$ is bounded as:
    \begin{equation*}
        \mathcal{R}_{Z}(\mathcal{F}_\text{CCP}^k) \leq 2 \mu \lambda \sqrt{\frac{2k\log(d)}{n}}\,.
    \end{equation*}
\end{theorem}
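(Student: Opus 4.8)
The plan is to linearize $f$ by exploiting the defining identity of the face-splitting product, which turns the Hadamard product of the $k$ affine maps into a single matrix acting on a tensorized input. Writing $\bm{u}_i^{(r)}$ for the $r$-th row of $\bm{U}_i$, the $r$-th coordinate of $\circ_{i=1}^{k}\bm{U}_i\bm{z}$ equals $\prod_{i=1}^{k}\langle\bm{u}_i^{(r)},\bm{z}\rangle = \langle \bm{u}_1^{(r)}\otimes\cdots\otimes\bm{u}_k^{(r)},\,\bm{z}^{\otimes k}\rangle$, where $\bm{z}^{\otimes k}\coloneqq\bm{z}\otimes\cdots\otimes\bm{z}\in\realnum^{d^k}$. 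Since the $r$-th row of $\bm{W}\coloneqq\bullet_{i=1}^{k}\bm{U}_i$ is precisely $\bm{u}_1^{(r)}\otimes\cdots\otimes\bm{u}_k^{(r)}$, this yields the linear representation $f(\bm{z})=\langle\bm{c},\bm{W}\bm{z}^{\otimes k}\rangle$ with $\bm{W}\in\realnum^{m\times d^k}$, subject to $\|\bm{c}\|_1\leq\mu$ and $\|\bm{W}\|_\infty\leq\lambda$. This reduction is the conceptual key: the constraint on the $\ell_\infty$-operator norm of the face-splitting product is exactly what controls the linearized model.

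Next I would push the Rademacher sum inside the inner product. Setting $\bm{v}_\sigma\coloneqq\frac1n\sum_{j=1}^{n}\sigma_j\bm{z}_j^{\otimes k}$, the averaged quantity $\frac1n\sum_j\sigma_j f(\bm{z}_j)$ equals $\langle\bm{c},\bm{W}\bm{v}_\sigma\rangle$. Taking the supremum over the constraint set and invoking $\ell_1$–$\ell_\infty$ duality twice gives a clean factorization: first $\sup_{\|\bm{c}\|_1\leq\mu}\langle\bm{c},\bm{W}\bm{v}_\sigma\rangle=\mu\|\bm{W}\bm{v}_\sigma\|_\infty$, and then, using that the $\ell_\infty$-operator norm is the maximal row-$\ell_1$-norm together with Hölder, $\|\bm{W}\bm{v}_\sigma\|_\infty=\max_r|\langle\bm{w}_r,\bm{v}_\sigma\rangle|\leq(\max_r\|\bm{w}_r\|_1)\|\bm{v}_\sigma\|_\infty=\|\bm{W}\|_\infty\|\bm{v}_\sigma\|_\infty\leq\lambda\|\bm{v}_\sigma\|_\infty$. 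Hence $\mathcal{R}_{Z}(\mathcal{F}_\text{CCP}^k)\leq\mu\lambda\,\mathbb{E}_\sigma[\|\bm{v}_\sigma\|_\infty]$, isolating all the randomness in a single expected sup-norm of a Rademacher average.

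The remaining and most delicate step is bounding $\mathbb{E}_\sigma[\|\bm{v}_\sigma\|_\infty]$ by a maximal inequality. The crucial observation is that every coordinate of $\bm{z}_j^{\otimes k}$ is a product of $k$ entries of $\bm{z}_j$, hence bounded in absolute value by $\|\bm{z}_j\|_\infty^k\leq 1$, while the ambient dimension is $d^k$. For each fixed coordinate $s$, the sum $\sum_j\sigma_j(\bm{z}_j^{\otimes k})_s$ is a sum of independent symmetric terms bounded by $1$, hence sub-Gaussian with proxy variance at most $n$ by Hoeffding's lemma. Massart's finite-class maximal inequality over the $d^k$ coordinates and their negations then yields $\mathbb{E}_\sigma[\|\bm{v}_\sigma\|_\infty]\leq\sqrt{2\log(2d^k)/n}$. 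Finally I would absorb the constant: since $d\geq 2$ and $k\geq 1$ force $\log 2\leq 3k\log d$, we get $\sqrt{2(\log 2+k\log d)/n}\leq 2\sqrt{2k\log d/n}$, closing the argument. I expect the bookkeeping of the sub-Gaussian parameter and the coordinate count $d^k$ inside the maximal inequality to be the main technical obstacle, whereas the linearization and duality steps are essentially forced by the structure of the problem.
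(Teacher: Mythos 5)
Your proposal is correct and follows essentially the same route as the paper's proof: the coordinate-wise identity $f(\bm{z})=\langle\bm{c},(\bullet_{i=1}^{k}\bm{U}_i)\,\bm{z}^{\otimes k}\rangle$ is exactly the paper's mixed-product-property step ($\circ_{i=1}^{k}(\bm{U}_i\bm{z}_j)=\bullet_{i=1}^{k}(\bm{U}_i)\ast_{i=1}^{k}(\bm{z}_j)$), followed by the same H\"older/operator-norm factorization extracting $\mu\lambda$ and the same Massart-type maximal inequality over the $d^k$ coordinates. The only cosmetic difference is in the final constant: you obtain the factor $2$ by including the $2d^k$ signed coordinates and absorbing $\log 2$ via $\log 2\leq 3k\log d$ (for $d\geq 2$), whereas the paper invokes Massart's lemma directly with a factor $2$ in front; both yield $2\mu\lambda\sqrt{2k\log(d)/n}$.
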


\paragraph{Proof sketch of \cref{theorem_CCP_RC_Linf}} \rebuttal{We now describe the core steps of the proof. For the interested reader, the complete and detailed proof steps are presented in \cref{sssec:complexity_proof_of_theorem_rc}.} \rebuttal{First, Hölder's inequality is used to bound the Rademacher complexity as: \begin{equation}
\label{eq:proofsketch1}
\mathcal{R}_{Z}(\mathcal{F}_\text{CCP}^k)= \mathbb{E} \sup_{f\in \mathcal{F}_\text{CCP}^k}\frac{1}{n} \left \langle \bm{c}, \sum_{j=1}^{n}[\sigma _{j}{\circ}_{i=1}^{k}(\bm{U}_{i}\bm{z}_{j})] \right \rangle \leq \mathbb{E} \sup_{f\in \mathcal{F}_\text{CCP}^k}\frac{1}{n} \|\bm{c}\|_1 \left \|\sum_{j=1}^{n}[\sigma _{j}{\circ}_{i=1}^{k}(\bm{U}_{i}\bm{z}_{j})] \right \|_\infty\,.
\end{equation}}
\rebuttal{This shows why the factor $\|c\|_1 \leq \mu$ appears in the final bound. Then, using the mixed product property \citep{slyusar1999family} and its extension to repeated Hadamard products (\cref{lemma:Mixed_Product_Property_2} in \cref{ssec:complexity_suppl_lemmas}), we can rewrite the summation in the right-hand-side of \eqref{eq:proofsketch1} as follows:
\begin{equation*}
\sum_{j=1}^n \sigma _{j}{\circ}_{i=1}^{k}(\bm{U}_{i}\bm{z}_{j}) = \sum_{j=1}^n\sigma_j \bullet_{i=1}^{k}(\bm{U}_{i}) \ast_{i=1}^{k}(\bm{z}_{j})
= \bullet_{i=1}^{k}(\bm{U}_{i}) \sum_{j=1}^n \sigma_j \ast_{i=1}^{k}(\bm{z}_{j})\,.
\end{equation*}
This step can be seen as a linearization of the polynomial by lifting the problem to a higher dimensional space. We use this fact and the definition of the operator norm to further bound the term inside the $\ell_\infty$-norm in the right-hand-side of \eqref{eq:proofsketch1}. Such term is bounded as the product of the $\ell_\infty$-operator norm of $\bullet_{i=1}^{k}(\bm{U}_{i})$, and the $\ell_\infty$-norm of an expression involving the Rademacher variables $\sigma_j$ and the vectors $\ast_{i=1}^{k}(\bm{z}_{j})$.
Finally, an application of Massart's Lemma (\cite{shalev}, Lemma 26.8) leads to the final result.}

\begin{theorem}\label{theorem_CCP_LC_Linf} 
The Lipschitz constant (with respect to the $\ell_\infty$-norm) of the  function defined in \cref{eq_ccp_3}, restricted to the set $\{\bm{z} \in \realnum^d: \|\bm{z} \|_\infty \leq 1 \}$ is bounded as:
\begin{equation*}
    \text{Lip}_\infty(f) \leq k \| \bm{C} \|_\infty \prod_{i=1}^{k} \| \bm{U}_i \|_\infty\,.
\end{equation*}
\end{theorem}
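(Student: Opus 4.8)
The plan is to bound the Lipschitz constant by controlling the $\ell_\infty$-operator norm of the Jacobian of $f$ uniformly over the (convex) input domain. Since $f$ is a polynomial it is continuously differentiable, and the set $\{\bm z\in\realnum^d:\|\bm z\|_\infty\le 1\}$ is convex, so for any two points $\bm z_1,\bm z_2$ in it the segment joining them stays inside and the mean value inequality gives $\|f(\bm z_1)-f(\bm z_2)\|_\infty \le \big(\sup_{\|\bm z\|_\infty\le 1}\|\bm J_f(\bm z)\|_\infty\big)\,\|\bm z_1-\bm z_2\|_\infty$, where $\|\cdot\|_\infty$ applied to the Jacobian $\bm J_f$ denotes the $\ell_\infty$-operator norm (the maximum absolute row sum). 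Hence it suffices to show $\|\bm J_f(\bm z)\|_\infty \le k\,\|\bm C\|_\infty \prod_{i=1}^k\|\bm U_i\|_\infty$ for every $\bm z$ with $\|\bm z\|_\infty\le 1$, which would immediately yield the stated bound on $\text{Lip}_\infty(f)$.

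Writing $g(\bm z)=\circ_{i=1}^k \bm U_i\bm z$ so that $f=\bm C\, g$, the chain rule factorizes the Jacobian as $\bm J_f=\bm C\,\bm J_g$, and submultiplicativity of the operator norm yields $\|\bm J_f\|_\infty\le \|\bm C\|_\infty\,\|\bm J_g\|_\infty$. It then remains to estimate $\|\bm J_g\|_\infty$. Differentiating the $m$-th coordinate $g_m(\bm z)=\prod_{i=1}^k(\bm U_i\bm z)_m$ by the product rule over the $k$ factors gives $(\bm J_g)_{mp}=\sum_{i=1}^k (\bm U_i)_{mp}\prod_{j\ne i}(\bm U_j\bm z)_m$.

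To bound the $\ell_\infty$-operator norm I would control each row sum by the triangle inequality, separating the summations over $i$ and over $p$: $\sum_p|(\bm J_g)_{mp}|\le \sum_{i=1}^k\big(\sum_p|(\bm U_i)_{mp}|\big)\prod_{j\ne i}|(\bm U_j\bm z)_m|$. Here the inner sum $\sum_p|(\bm U_i)_{mp}|$ is the $\ell_1$-norm of the $m$-th row of $\bm U_i$, hence at most $\|\bm U_i\|_\infty$; and the essential use of the domain assumption is that $|(\bm U_j\bm z)_m|\le \|\bm z\|_\infty\,\|\bm U_j\|_\infty\le \|\bm U_j\|_\infty$ since $\|\bm z\|_\infty\le 1$. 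Substituting, each of the $k$ summands collapses to the full product $\prod_{i=1}^k\|\bm U_i\|_\infty$, so every row sum is at most $k\prod_{i=1}^k\|\bm U_i\|_\infty$, giving $\|\bm J_g\|_\infty\le k\prod_{i=1}^k\|\bm U_i\|_\infty$; combined with the factorization above this proves the claim. The main obstacle is bookkeeping rather than conceptual: correctly computing the product-rule Jacobian of the degree-$k$ Hadamard product and verifying that the bound $\|\bm z\|_\infty\le 1$ is precisely what turns the partial products $\prod_{j\ne i}(\bm U_j\bm z)_m$ into the clean factors $\prod_{j\ne i}\|\bm U_j\|_\infty$; everything else reduces to submultiplicativity of operator norms and the triangle inequality.
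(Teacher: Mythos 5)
Your proof is correct and takes essentially the same route as the paper's: both reduce $\text{Lip}_\infty(f)$ to a uniform bound on the Jacobian's operator norm over the unit $\ell_\infty$-ball, compute the product-rule Jacobian $\sum_{i=1}^{k}\diag\bigl(\circ_{j\neq i}\bm{U}_j\bm{z}\bigr)\bm{U}_i$ of the Hadamard factor, bound each of the $k$ terms by $\prod_{i=1}^{k}\|\bm{U}_i\|_\infty$ using $\|\bm{z}\|_\infty\le 1$, and split off the factor $\|\bm{C}\|_\infty$ by submultiplicativity. The only difference is presentational: the paper proves the general $\ell_p$ statement via matrix-norm inequalities (the composition lemma, consistency of operator norms, and $\|\diag(\bm{v})\|_p\le\|\bm{v}\|_p$) and then specializes to $p=\infty$, whereas you argue directly at $p=\infty$ with explicit row sums.
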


\vspace{-1.5mm}
\subsection{Nested Coupled CP-Decomposition (NCP model)} 
\vspace{-1mm}
The Nested Coupled CP-Decomposition (NCP) model leverages a joint hierarchical decomposition, which provided strong results in both generative and discriminative tasks in \citet{chrysos2020pinets}. A slight re-parametrization of the model (\cref{ssec:NCP_model}) can be expressed with the following recursive relation:
\begin{equation} \label{eq:complexity_polynomial_ncp_model_recursive}
\bm{x}_{1} = (\bm{A}_{1}\bm{z})\circ(\bm{s}_{1}), \qquad \bm{x}_{n} = (\bm{A}_{n}\bm{z})\circ(\bm{S}_{n}\bm{x}_{n-1}) , \qquad f(\bm{z})=\bm{C} \bm{x}_{k}\,,
\tag{{NCP}}
\end{equation}
where $\bm{z}\in \realnum^{d}$ is the input vector and $\bm{C} \in \realnum^{o \times m}$, $\bm{A}_{n} \in \realnum^{m\times d}$, $\bm{S}_{n} \in \realnum^{m\times m}$ and $\bm{s}_1 \in \realnum^ m$ are the learnable parameters. In \cref{fig:ccp_model_reparamtr_schematic} we provide a schematic of the architecture.

In \cref{theorem_NCP_RC_Linf} we derive an upper
bound on the complexity of NCP models with bounded $\ell_\infty$-operator-norm of a matrix function of its parameters. Its proof can be found in \cref{proof_ncp_rad_linf}. For a given NCP model, we derive an upper bound on its $\ell_\infty$-Lipschitz constant in \cref{theorem_NCP_LC_Linf} and its proof is given in \cref{proof_theorem_NCP_LC_Linf}.
\begin{theorem}\label{theorem_NCP_RC_Linf} 
    Let $Z=\{\bm{z}_1, \ldots, \bm{z}_n\} \subseteq \realnum^{d}$ and suppose that
    $\|\bm{z}_j\|_\infty \leq 1$ for all $j=1, \ldots, n$. Define the matrix $\Phi(\bm{A}_1, \bm{S}_1, \ldots, \bm{A}_n, \bm{S}_n) \coloneqq (\bm{A}_{k}\bullet \bm{S}_{k})  \prod_{i=1}^{k-1} \bm{I} \otimes \bm{A}_i \bullet \bm{S}_i$. Consider the class of functions:   
    \begin{equation*}
   \mathcal{F}_\text{NCP}^k \coloneqq \left \{ f(\bm{z}) \text{ as in } \eqref{eq:complexity_polynomial_ncp_model_recursive}: \|\bm{C}\|_\infty \leq \mu, \left \|\Phi(\bm{A}_1, \bm{S}_1, \ldots, \bm{A}_k, \bm{S}_k) \right \|_\infty \leq \lambda \right \},
    \end{equation*}
     where $\bm{C} \in \realnum^{1\times m}$ (single output), \rebuttal{thus, we will write it as $\bm{c}$, and the corresponding bound also becomes $\|\bm{c}\|_1 \leq \mu$}. The Empirical Rademacher Complexity of $\text{NCP}_k$ (k-degree NCP polynomials) with respect to $Z$ is bounded as:
    \begin{equation*}
        \mathcal{R}_{Z}(\mathcal{F}_\text{NCP}^k) \leq 2 \mu \lambda \sqrt{\frac{2k\log(d)}{n}}\,.
    \end{equation*}
\end{theorem}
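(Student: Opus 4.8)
} The plan is to mirror the proof of \cref{theorem_CCP_RC_Linf}: I would reduce the NCP model to a \emph{linear} function of the lifted input $\bm{z}^{\otimes k} = \bm{z}\otimes\cdots\otimes\bm{z}\in\realnum^{d^k}$, after which the probabilistic argument becomes identical to the CCP case. Concretely, the first goal is to establish the linearization $f(\bm{z}) = \langle\bm{c}, \Phi\,\bm{z}^{\otimes k}\rangle$, where $\Phi = \Phi(\bm{A}_1,\bm{S}_1,\ldots,\bm{A}_k,\bm{S}_k)$ is exactly the matrix in the statement.

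This linearization is obtained by unrolling the recursion in \eqref{eq:complexity_polynomial_ncp_model_recursive} using two algebraic identities: the defining property of the face-splitting product $(\bm{A}\bm{x})\circ(\bm{B}\bm{y}) = (\bm{A}\bullet\bm{B})(\bm{x}\otimes\bm{y})$, and the Kronecker mixed-product rule $(\bm{I}\otimes\bm{M})(\bm{z}\otimes\bm{v}) = \bm{z}\otimes(\bm{M}\bm{v})$. Applying the first identity to $\bm{x}_n = (\bm{A}_n\bm{z})\circ(\bm{S}_n\bm{x}_{n-1})$ gives $\bm{x}_n = (\bm{A}_n\bullet\bm{S}_n)(\bm{z}\otimes\bm{x}_{n-1})$, with the base case $\bm{x}_1 = (\bm{A}_1\bm{z})\circ\bm{s}_1 = (\bm{A}_1\bullet\bm{s}_1)\bm{z}$, treating $\bm{s}_1$ as the $i=1$ slot $\bm{S}_1$. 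Writing $\bm{x}_n = \bm{M}_n\,\bm{z}^{\otimes n}$ and substituting $\bm{x}_{n-1} = \bm{M}_{n-1}\bm{z}^{\otimes(n-1)}$, the mixed-product rule yields the clean recursion $\bm{M}_n = (\bm{A}_n\bullet\bm{S}_n)(\bm{I}\otimes\bm{M}_{n-1})$, whose unrolling is precisely $\bm{M}_k=\Phi$. Since $f(\bm{z}) = \bm{c}^\top\bm{x}_k = \langle\bm{c}, \Phi\bm{z}^{\otimes k}\rangle$, this puts the NCP model in the same linear-in-$\bm{z}^{\otimes k}$ form as the CCP model, with $\bullet_{i=1}^k\bm{U}_i$ replaced by $\Phi$.

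With the linearization in hand, the remaining steps transcribe those of \cref{theorem_CCP_RC_Linf}. I would push the Rademacher variables inside, $\mathcal{R}_Z(\mathcal{F}_\text{NCP}^k) = \mathbb{E}_\sigma\sup\frac1n\langle\bm{c},\Phi\sum_j\sigma_j\bm{z}_j^{\otimes k}\rangle$, apply Hölder's inequality to extract $\|\bm{c}\|_1\le\mu$, use submultiplicativity of the $\ell_\infty$-operator norm together with $\|\Phi\|_\infty\le\lambda$ to extract $\lambda$, and reduce the problem to bounding $\mathbb{E}_\sigma\|\sum_j\sigma_j\bm{z}_j^{\otimes k}\|_\infty$. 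Since $\|\bm{z}_j\|_\infty\le 1$ implies $\|\bm{z}_j^{\otimes k}\|_\infty = \|\bm{z}_j\|_\infty^k\le 1$, an application of Massart's Lemma over the $2d^k$ signed coordinate directions, using $\log(2d^k)\le 2k\log d$, yields exactly $2\mu\lambda\sqrt{2k\log(d)/n}$, identical to the CCP bound.

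The main obstacle is the linearization, and specifically getting the matrix algebra right: one must verify that the face-splitting identity composes correctly with the Kronecker mixed-product rule so that each unrolling contributes exactly one factor $\bm{I}\otimes(\bm{A}_i\bullet\bm{S}_i)$ (with the identity blocks of the appropriate, growing sizes absorbed into the notation of $\Phi$), and that the base case with $\bm{s}_1$ slots in as the $i=1$ term. Once $\bm{x}_k = \Phi\bm{z}^{\otimes k}$ is established, everything downstream is a direct copy of the CCP argument and no new probabilistic difficulty arises, which is precisely why the two theorems share an identical bound.
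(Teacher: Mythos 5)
Your proposal is correct and follows essentially the same route as the paper's proof: the paper likewise unrolls the NCP recursion via the mixed-product property $(\bm{A}\bm{x})\circ(\bm{B}\bm{y})=(\bm{A}\bullet\bm{B})(\bm{x}\ast\bm{y})$ and the identity $(\bm{A}\bm{C})\ast(\bm{B}\bm{D})=(\bm{A}\otimes\bm{B})(\bm{C}\ast\bm{D})$ to expose exactly the matrix $\Phi$ acting on $\ast_{i=1}^{k}\bm{z}_j$ (your $\bm{z}^{\otimes k}$, which coincides with the Khatri--Rao power for vectors), then applies H\"older, submultiplicativity of the $\ell_\infty$-operator norm, and the same Massart step as in the CCP proof. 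The only cosmetic difference is ordering: you establish the deterministic linearization $f(\bm{z})=\langle\bm{c},\Phi\,\bm{z}^{\otimes k}\rangle$ up front, whereas the paper performs the unrolling inside the Rademacher expectation after H\"older, with no substantive change to the argument or the resulting bound.
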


\begin{theorem}\label{theorem_NCP_LC_Linf} 
The Lipschitz constant (with respect to the $\ell_\infty$-norm) of the  function defined in \cref{eq:complexity_polynomial_ncp_model_recursive}, restricted to the set $\{\bm{z} \in \realnum^d: \|\bm{z} \|_\infty \leq 1 \}$ is bounded as:
\begin{equation*}
    \text{Lip}_\infty(f) \leq k \| \bm{C} \|_\infty \prod_{i=1}^{k}(\|\bm{A}_i \|_\infty \|\bm{S}_i\|_\infty)\,.
\end{equation*}
\end{theorem}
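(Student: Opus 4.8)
The plan is to argue by induction on the level $n$ of the recursion, tracking \emph{two} quantities simultaneously: the $\ell_\infty$-Lipschitz constant $L_n$ of the map $\bm{z}\mapsto\bm{x}_n$ on the box $\{\|\bm{z}\|_\infty\leq 1\}$, and a uniform magnitude bound $M_n\coloneqq\sup_{\|\bm{z}\|_\infty\leq 1}\|\bm{x}_n\|_\infty$. Both are needed because of the product-rule structure of the Hadamard recursion: taking differences of a product $u\circ v$ generates one term governed by the Lipschitz constant of each factor times the \emph{magnitude} of the other. The two elementary estimates I will lean on are sub-multiplicativity of the induced operator norm, $\|\bm{M}\bm{N}\|_\infty\leq\|\bm{M}\|_\infty\|\bm{N}\|_\infty$, and the Hadamard inequality $\|\bm{a}\circ\bm{b}\|_\infty\leq\|\bm{a}\|_\infty\|\bm{b}\|_\infty$. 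To make the base case fit the general recursion I identify $\bm{S}_1$ with $\diag(\bm{s}_1)$, so that $\|\bm{S}_1\|_\infty=\|\bm{s}_1\|_\infty$ and $\bm{x}_1=(\bm{A}_1\bm{z})\circ(\bm{S}_1\bm{x}_0)$ with $\bm{x}_0\coloneqq\bm{1}$; correspondingly $M_0=1$ and $L_0=0$.

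First I would establish the magnitude bound. Applying the two inequalities above together with $\|\bm{z}\|_\infty\leq 1$ gives $M_n\leq(\|\bm{A}_n\|_\infty\|\bm{z}\|_\infty)(\|\bm{S}_n\|_\infty M_{n-1})\leq\|\bm{A}_n\|_\infty\|\bm{S}_n\|_\infty M_{n-1}$, so a one-line induction starting from $M_0=1$ yields $M_n\leq P_n$, where $P_n\coloneqq\prod_{i=1}^n\|\bm{A}_i\|_\infty\|\bm{S}_i\|_\infty$.

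Next I would bound $L_n$. Setting $u(\bm{z})=\bm{A}_n\bm{z}$ and $v(\bm{z})=\bm{S}_n\bm{x}_{n-1}(\bm{z})$, the add-and-subtract trick for the Hadamard product of two functions gives $L_n\leq M_u L_v+L_u M_v$, where $M_u\leq\|\bm{A}_n\|_\infty$, $L_u=\|\bm{A}_n\|_\infty$, $M_v\leq\|\bm{S}_n\|_\infty M_{n-1}$, and $L_v\leq\|\bm{S}_n\|_\infty L_{n-1}$. Substituting collapses this to the coupled recursion $L_n\leq\|\bm{A}_n\|_\infty\|\bm{S}_n\|_\infty(L_{n-1}+M_{n-1})$. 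Feeding in $M_{n-1}\leq P_{n-1}$ and the inductive hypothesis $L_{n-1}\leq(n-1)P_{n-1}$ turns the bracket into $nP_{n-1}$, and the prefactor $\|\bm{A}_n\|_\infty\|\bm{S}_n\|_\infty=P_n/P_{n-1}$ produces $L_n\leq nP_n$. Finally, since $f=\bm{C}\bm{x}_k$ is a linear post-composition, $\mathrm{Lip}_\infty(f)\leq\|\bm{C}\|_\infty L_k\leq k\|\bm{C}\|_\infty P_k$, which is exactly the claimed bound.

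The one genuine subtlety I would guard against is the multiplicative factor $k$: a naive argument using operator-norm sub-multiplicativity alone would produce $\|\bm{C}\|_\infty P_k$ and silently drop it. It appears precisely because the Hadamard product is \emph{not} controlled by sub-multiplicativity — each application of the product rule injects an additive $M_{n-1}$ term alongside the $L_{n-1}$ term, and it is the accumulation of these $k$ magnitude contributions, carried through the coupled $L_n/M_n$ induction, that yields the linear-in-$k$ prefactor. Keeping $M_n$ in the induction is therefore essential rather than incidental, and this coupling is the part of the proof I would write out most carefully.
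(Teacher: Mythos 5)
Your proof is correct and delivers exactly the claimed bound, but it takes a technically different route from the paper's. The paper obtains \cref{theorem_NCP_LC_Linf} as the $p=\infty$ instance of a general $\ell_p$ statement (\cref{theorem_NCP_LC}): it splits $f=g\circ h$ with $g(\bm{x})=\bm{C}\bm{x}$, invokes \cref{lemma_lip_comp_func,lemma_lip_to_jacobian} to reduce the problem to $\sup_{\bm{z}}\|J_{\bm{x}_n}\|_p$, and then bounds the Jacobian recursion $J_{\bm{x}_n}=\diag(\bm{A}_n\bm{z})\bm{S}_n J_{\bm{x}_{n-1}}+\diag(\bm{S}_n\bm{x}_{n-1})\bm{A}_n$ by induction, carrying along the same magnitude bound $\|\bm{x}_{n-1}\|_p\leq\prod_{i\leq n-1}\|\bm{A}_i\|_p\|\bm{S}_i\|_p$ that you call $M_{n-1}\leq P_{n-1}$. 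Your coupled recursion $L_n\leq\|\bm{A}_n\|_\infty\|\bm{S}_n\|_\infty(L_{n-1}+M_{n-1})$ is the finite-difference shadow of that Jacobian identity: your product-rule terms $M_uL_v$ and $L_uM_v$ correspond one-to-one with the two summands of $J_{\bm{x}_n}$, and your diagnosis of the factor $k$ (an additive magnitude term injected at each level) is exactly the paper's inductive step, where $(n-1)\prod_{i=1}^{n-1}(\cdot)+\prod_{i=1}^{n-1}(\cdot)$ collapses to $n\prod_{i=1}^{n-1}(\cdot)$. What your route buys: it is more elementary, since no differentiability is used and you bypass \cref{lemma_lip_to_jacobian}, whose sup-of-Jacobian characterization implicitly requires the domain to be convex (true for the $\ell_\infty$ ball, though left unremarked in the paper), whereas your add-and-subtract estimate is valid on any set over which the two points range; moreover your base-case device $\bm{S}_1=\diag(\bm{s}_1)$, $\bm{x}_0=\bm{1}$ is consistent with the convention stated after \cref{lemma_ncp}, since $\|\diag(\bm{s}_1)\|_\infty=\|\bm{s}_1\|_\infty$. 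What the paper's route buys: the result for all $\ell_p$ at once, with the $\ell_\infty$ theorem as a one-line corollary --- although your argument in fact generalizes verbatim, because the only inequalities you use, $\|\bm{a}\circ\bm{b}\|_p\leq\|\bm{a}\|_\infty\|\bm{b}\|_p$ and operator-norm sub-multiplicativity (\cref{lemma:consistent}), hold for every $p$.
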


\vspace{-3mm}
\section{Algorithms}
\label{sec:Training_and_Algorithm}
\vspace{-2mm}
By constraining the quantities in the upper bounds on the Rademacher complexity (\cref{theorem_CCP_RC_Linf,theorem_NCP_RC_Linf}), we can regularize the empirical loss minimization objective \citep[Theorem 3.3]{Mohri18}.
Such method would prevent overfitting and can lead to an improved accuracy. However, one issue with the quantities involved in \cref{theorem_CCP_RC_Linf,theorem_NCP_RC_Linf}, namely
\begin{equation}
\nonumber
 \left \| \bullet_{i=1}^{k}\bm{U}_{i} \right \|_{\infty}, \qquad \left \|(\bm{A}_{k}\bullet \bm{S}_{k})  \prod_{i=1}^{k-1} \bm{I} \otimes \bm{A}_i \bullet \bm{S}_i \right \|_\infty\,,
\end{equation}
is that projecting onto their level sets correspond to a difficult non-convex problem. Nevertheless, we can control an upper bound that depends on the $\ell_\infty$-operator norm of each weight matrix:
\begin{lemma}
\label{CCP_RC_Linf_relax_bound}
It holds that $ \left \| \bullet_{i=1}^{k}\bm{U}_{i} \right \|_{\infty}\leq \prod_{i=1}^{k} \| \bm{U}_i \|_\infty$.
\end{lemma}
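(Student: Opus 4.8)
The plan is to unfold the $\ell_\infty$-operator norm into its row-sum characterization and then exploit the row-wise structure of the face-splitting product. Recall that for any matrix $\bm{M}$, the $\ell_\infty$-operator norm equals the maximum absolute row sum, i.e. $\|\bm{M}\|_\infty = \max_r \sum_j |M_{rj}|$, which is precisely the largest $\ell_1$-norm among the rows of $\bm{M}$. The face-splitting product is defined row-wise: the $r$-th row of $\bullet_{i=1}^{k}\bm{U}_{i}$ is the Kronecker product $\bigotimes_{i=1}^{k} (\bm{U}_i)_{r,:}$ of the $r$-th rows of the individual factors. Consequently,
\[
\left \| \bullet_{i=1}^{k}\bm{U}_{i} \right \|_{\infty} = \max_r \left\| \bigotimes_{i=1}^{k} (\bm{U}_i)_{r,:} \right\|_1\,.
\]

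First I would establish the multiplicativity of the $\ell_1$-norm under Kronecker products of vectors. For vectors $\bm{v}_1, \ldots, \bm{v}_k$, the entries of $\bigotimes_{i=1}^{k}\bm{v}_i$ are exactly the products $\prod_{i=1}^{k}(\bm{v}_i)_{j_i}$ ranging over all index tuples $(j_1,\ldots,j_k)$, so summing their absolute values distributes over the product structure and factorizes as $\prod_{i=1}^{k}\|\bm{v}_i\|_1$. Applying this identity to each row of the face-splitting product yields $\left\| \bigotimes_{i=1}^{k} (\bm{U}_i)_{r,:} \right\|_1 = \prod_{i=1}^{k} \|(\bm{U}_i)_{r,:}\|_1$.

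I would then close the argument with the elementary inequality that the maximum of a product of nonnegative sequences is at most the product of the individual maxima:
\[
\max_r \prod_{i=1}^{k} \|(\bm{U}_i)_{r,:}\|_1 \leq \prod_{i=1}^{k} \max_r \|(\bm{U}_i)_{r,:}\|_1 = \prod_{i=1}^{k} \| \bm{U}_i \|_\infty\,,
\]
where the final equality is again the row-sum characterization of the $\ell_\infty$-operator norm, now applied to each $\bm{U}_i$ separately. Chaining the three displays gives the claim.

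The proof is essentially routine once the two ingredients are in place, so there is no serious obstacle; the only step requiring care is the bookkeeping that verifies the $\ell_1$-norm genuinely factorizes over the Kronecker product, which hinges on the precise indexing of the face-splitting product and on the distributivity of the sum of absolute values over the product of entries. Once that multiplicativity is pinned down, the bound follows immediately, and the slack in the inequality is entirely attributable to the max-of-products versus product-of-maxes gap.
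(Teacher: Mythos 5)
Your proposal is correct and follows essentially the same route as the paper's proof: the row-sum characterization of $\|\cdot\|_\infty$, the observation that each row of $\bullet_{i=1}^{k}\bm{U}_i$ is the Kronecker product of the corresponding rows, the multiplicativity of the $\ell_1$-norm over that Kronecker product, and finally the max-of-products versus product-of-maxes inequality. No gaps; your closing remark correctly identifies the last step as the sole source of slack.
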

\begin{lemma}
\label{NCP_RC_Linf_relax_bound}
It holds that $ \left \| (\bm{A}_{k}\bullet \bm{S}_{k})  \prod_{i=1}^{k-1} \bm{I} \otimes \bm{A}_i \bullet \bm{S}_i \right \|_{\infty}\leq \prod_{i=1}^{k} \| \bm{A}_i \|_\infty \|\bm{S}_i\|_\infty$.
\end{lemma}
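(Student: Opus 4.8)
The plan is to reduce the claim to the submultiplicativity of the $\ell_\infty$-operator norm together with two elementary facts describing how this norm interacts with the face-splitting product $\bullet$ and with the Kronecker product $\bm{I}\otimes(\cdot)$. Throughout I would work from the characterization of the $\ell_\infty$-operator norm as the maximum absolute row sum, $\|\bm{M}\|_\infty = \max_i \sum_j |\bm{M}_{ij}|$, which turns every step into a statement about row sums.

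First I would establish the two building blocks. (i) For any two matrices $\bm{A},\bm{S}$ with the same number of rows, the $i$-th row of $\bm{A}\bullet\bm{S}$ is the Kronecker product of the $i$-th rows of $\bm{A}$ and $\bm{S}$; hence its absolute row sum factorizes as $\bigl(\sum_p |\bm{A}_{ip}|\bigr)\bigl(\sum_q |\bm{S}_{iq}|\bigr)$. Taking the maximum over $i$ and using $\max_i a_i b_i \le (\max_i a_i)(\max_i b_i)$ for nonnegative $a_i,b_i$ yields $\|\bm{A}\bullet\bm{S}\|_\infty \le \|\bm{A}\|_\infty\|\bm{S}\|_\infty$. (ii) Since $\bm{I}\otimes\bm{M}$ is block-diagonal with copies of $\bm{M}$ on the diagonal (zeros elsewhere), every row of $\bm{I}\otimes\bm{M}$ has the same absolute sum as a corresponding row of $\bm{M}$, so $\|\bm{I}\otimes\bm{M}\|_\infty = \|\bm{M}\|_\infty$.

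With these in hand, I would apply submultiplicativity of the operator norm to the matrix product and write
$$\Bigl\| (\bm{A}_k \bullet \bm{S}_k)\prod_{i=1}^{k-1} \bm{I}\otimes(\bm{A}_i\bullet\bm{S}_i)\Bigr\|_\infty \le \|\bm{A}_k\bullet\bm{S}_k\|_\infty \prod_{i=1}^{k-1}\bigl\|\bm{I}\otimes(\bm{A}_i\bullet\bm{S}_i)\bigr\|_\infty\,.$$
Fact (ii) then removes the Kronecker factors, fact (i) bounds each $\|\bm{A}_i\bullet\bm{S}_i\|_\infty$ by $\|\bm{A}_i\|_\infty\|\bm{S}_i\|_\infty$, and collecting terms gives the stated product bound $\prod_{i=1}^{k}\|\bm{A}_i\|_\infty\|\bm{S}_i\|_\infty$.

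I expect the only real subtleties to be bookkeeping rather than conceptual. I would need to verify that the dimensions chain correctly so the matrix product is well defined, and to fix the parenthesization convention as $\bm{I}\otimes(\bm{A}_i\bullet\bm{S}_i)$ with the appropriate identity size at each factor. The result is genuinely a relaxation: both the step $\max_i a_i b_i \le (\max_i a_i)(\max_i b_i)$ and submultiplicativity can be strict, which is precisely why this per-matrix bound is convenient for projection even though it is looser than the exact norm appearing in \cref{theorem_NCP_RC_Linf}. The argument mirrors that of \cref{CCP_RC_Linf_relax_bound}, with the Kronecker-with-identity factors being the only additional ingredient.
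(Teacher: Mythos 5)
Your proposal is correct and takes essentially the same route as the paper's proof: submultiplicativity of the $\ell_\infty$-operator norm to split off each factor, the identity $\left\|\bm{I}\otimes \bm{M}\right\|_{\infty}=\left\|\bm{M}\right\|_{\infty}$ to discard the Kronecker-with-identity factors, and the bound $\left\|\bm{A}_i\bullet \bm{S}_i\right\|_{\infty}\leq \|\bm{A}_i\|_{\infty}\|\bm{S}_i\|_{\infty}$ to finish. The only cosmetic difference is that the paper obtains this last bound by citing \cref{CCP_RC_Linf_relax_bound} (whose proof is exactly your row-Kronecker factorization combined with $\max_i a_i b_i \leq (\max_i a_i)(\max_i b_i)$), whereas you re-derive it inline, and you are in fact slightly more explicit than the paper about justifying the Kronecker step.
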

The proofs of \cref{CCP_RC_Linf_relax_bound,NCP_RC_Linf_relax_bound} can be found in \cref{sssec:complexity_proof_lemma_relax_rc} and \cref{sssec:complexity_proof_lemma_relax_rc_ncp}. These results mean that by constraining the operator norms of each weight matrix, we can control the overall complexity of the CCP and NCP models.

Projecting a matrix onto an $\ell_\infty$-operator norm ball
is a simple task that can be achieved by projecting each row of the matrix onto an $\ell_1$-norm ball, for example, using the well-known algorithm from \citet{duchi2008efficient}. The final optimization objective for training a regularized CCP is the following:
\begin{equation}
\label{objective_ccp}
    \min_{\bm{C}, \bm{U}_1, \ldots, \bm{U}_k} \frac{1}{n} \sum_{i=1}^n L(\bm{C}, \bm{U}_1, \ldots, \bm{U}_k; \bm{x}_i, y_i) \qquad \text{subject to } \|\bm{C}\|_\infty \leq \mu, \|\bm{U}_i\|_\infty \leq \lambda\,,
\end{equation}
where $(\bm{x}_i, y_i)_{i=1}^n$ is the training dataset, $L$ is the loss function (e.g., cross-entropy) and $\mu, \lambda$ are the regularization parameters. We notice that the constraints on the learnable parameters $\bm{U}_i$ and $\bm{C}$ have the effect of simultaneously controlling the Rademacher Complexity and the Lipschitz constant of the CCP model. For the NCP model, an analogous objective function is used.

To solve the optimization problem in \cref{objective_ccp} we will use a Projected Stochastic Gradient Descent method \cref{algo:complexity_projection_only}. We also propose a variant that combines Adversarial Training with the projection step (\cref{algo:complexity_projection_and_adversarial_training}) with the goal of increasing robustness to adversarial examples. 
\renewcommand{\algorithmicrequire}{\textbf{Input:}}
\renewcommand{\algorithmicensure}{\textbf{Output:}}

\begin{minipage}[t]{0.38\textwidth}
    \begin{algorithm}[H]
    \caption{Projected SGD} \label{alg:projection}
        \begin{algorithmic}
            \REQUIRE dataset $Z$, learning rate $\left \{ \gamma _{t} > 0 \right \}_{t=0}^{T-1}$, iterations $T$, hyper-parameters $R$, $f$ , Loss $L$.
            \ENSURE model with parameters $\bm{\theta}$.
            \STATE Initialize $\bm{\theta}$.
            \FOR{$t = 0$  to $T-1$} 
                \STATE Sample $(\bm{x},y)$ from $Z$
                \STATE $\bm{\theta}  = \bm{\theta} - \gamma_{t} \triangledown_{\bm{\theta}} L(\bm{\theta} ;\bm{x}, y)$.
                \IF{$t\mod f = 0$}
                \STATE $\bm{\theta}  = \prod _{\left \{ \bm{\theta}:\left \| \bm{\theta} \right \|_{\infty}\leq R \right \}}(\bm{\theta})$
                \ENDIF
            \ENDFOR
            \vspace{5.95mm}
        \end{algorithmic}
        \label{algo:complexity_projection_only}
    \end{algorithm}
    \end{minipage}
    \quad
    \begin{minipage}[t]{0.58\textwidth}
     \begin{algorithm}[H]
    \caption{Projected SGD + Adversarial Training} \label{alg:projection_at}
        \begin{algorithmic}
            \REQUIRE dataset $Z$, learning rate $\left \{ \gamma _{t} > 0 \right \}_{t=0}^{T-1}$, iterations $T$ and $n$, hyper-parameters $R$, $f$ , $\epsilon $ and $\alpha$, Loss $L$
            \ENSURE model with parameters $\bm{\theta}$.
            \STATE Initialize $\bm{\theta}$.
            \FOR{$t = 0$ to $T-1$} 
                \STATE Sample $(\bm{x},y)$ from $Z$
                \FOR{$i = 0$  to $n-1$} 
                    \STATE $\bm{x}^{\text{adv}} = \prod_{\left \{ \bm{x}^{'}:||\bm{x}^{'}-\bm{x}||_{\infty }\leq \epsilon   \right \}}\left \{ \bm{x}+\alpha \nabla_{\bm{x}} L(\bm{\theta} ;\bm{x},y)\right \}$
                \ENDFOR
                \STATE $\bm{\theta}  = \bm{\theta} - \gamma_{t} \triangledown_{\bm{\theta}} L(\bm{\theta} ;\bm{x}^{\text{adv}}, y)$
                \IF{$t\mod f = 0$} 
                    \STATE $\bm{\theta}  = \prod _{\left \{ \bm{\theta}:\left \| \bm{\theta} \right \|_{\infty}\leq R \right \}}(\bm{\theta})$
                \ENDIF
            \ENDFOR
        \end{algorithmic}
        \label{algo:complexity_projection_and_adversarial_training}
    \end{algorithm}
    \end{minipage}

 In \cref{algo:complexity_projection_only,algo:complexity_projection_and_adversarial_training} the parameter $f$ is set in practice to a positive value, so that the projection (denoted by $\Pi$) is made only every few iterations. The variable $\bm{\theta}$ represents the weight matrices of the model, and the projection in the last line should be understood as applied independently for every weight matrix. The regularization parameter $R$ corresponds to the variables $\mu, \lambda$ in \cref{objective_ccp}.

\paragraph{Convolutional layers}
\label{ssec:complexity_method_convolution}
Frequently, convolutions are employed in the literature, especially in the image-domain. It is important to understand how our previous results extend to this case, and how the proposed algorithms work in that case. Below, we show that the $\ell_\infty$-operator norm of the convolutional layer (as a linear operator) is related to the $\ell_\infty$-operator norm of the kernel after a reshaping operation. For simplicity, we consider only convolutions with zero padding. 

We study the cases of 1D, 2D and 3D convolutions. For clarity, we mention below the result for the 3D convolution, since this is relevant to our experimental validation, and we defer the other two cases to \cref{sec:complexity_convolutional_layers_proofs}. 
\begin{theorem}
\label{2d-conv-mcs}
Let $\bm{A}\in\realnum^{n\times m \times r}$ be an input image and let $\bm{K}\in\realnum^{h \times h \times r \times o}$ be a convolutional kernel with $o$ output channels. For simplicity assume that $k \leq \min(n, m)$ is odd. Denote by $\bm{B} = \bm{K} \star \bm{A}$ the output of the convolutional layer. Let $\bm{U}\in\mathbb{R}^{nmo\times nmr}$ be the matrix such that $\text{vec}(\bm{B})=\bm{U}\text{vec}(\bm{A})$ i.e., $\bm{U}$ is the matrix representation of the convolution. Let $M(\bm{K}) \in \realnum^{o \times hhr}$ be the matricization of $\bm{K}$, where each row contains the parameters of a single output channel of the convolution. It holds that:
$\left \| \bm{U} \right \|_{\infty } = \left \| M(\bm{K}) \right \|_{\infty }$.
\end{theorem}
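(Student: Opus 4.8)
The plan is to use the explicit formula for the $\ell_\infty$-operator norm of a matrix as its maximum absolute row sum, $\|\bm{U}\|_\infty = \max_i \sum_j |U_{ij}|$, and to show that this quantity coincides with the maximum absolute row sum of $M(\bm{K})$. The entire argument reduces to identifying which kernel weights appear in each row of $\bm{U}$.

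First I would index the rows of $\bm{U}$ by triples $(i,j,c)$ corresponding to the output entry $\bm{B}[i,j,c]$ at spatial location $(i,j)$ and output channel $c$. Unrolling the definition of the zero-padded convolution, $\bm{B}[i,j,c] = \sum_{a,b,s} \bm{K}[a,b,s,c]\, \bm{A}[\,i+a-\lfloor h/2\rfloor,\; j+b-\lfloor h/2\rfloor,\; s\,]$, where the sum effectively ranges only over those kernel offsets $(a,b)$ whose shifted input index lands inside the valid image, since every other term multiplies a padded zero. Hence the row of $\bm{U}$ indexed by $(i,j,c)$ contains precisely the weights $\bm{K}[a,b,s,c]$ for the admissible offsets, each appearing exactly once, and its absolute row sum equals $\sum_{(a,b)\in\mathcal{V}(i,j)}\sum_s |\bm{K}[a,b,s,c]|$, where $\mathcal{V}(i,j)$ denotes the set of in-bounds offsets at position $(i,j)$.

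The upper bound is then immediate: because $\mathcal{V}(i,j)$ is always a subset of the full offset grid, every absolute row sum is at most $\sum_{a,b,s}|\bm{K}[a,b,s,c]|$, which is exactly the $\ell_1$-norm of row $c$ of $M(\bm{K})$; maximizing over $(i,j,c)$ yields $\|\bm{U}\|_\infty \leq \max_c \sum_{a,b,s}|\bm{K}[a,b,s,c]| = \|M(\bm{K})\|_\infty$. For the matching lower bound I need an output position whose receptive field fits entirely inside the image, so that $\mathcal{V}(i,j)$ is the full grid and no weight is annihilated by padding. This is exactly where the hypotheses that $h$ is odd and $h \leq \min(n,m)$ enter: they guarantee a central spatial location whose receptive field lies within the valid image. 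Selecting the channel $c^\star$ attaining the maximal row $\ell_1$-norm of $M(\bm{K})$ together with that central location produces a row of $\bm{U}$ whose absolute sum equals $\|M(\bm{K})\|_\infty$, giving $\|\bm{U}\|_\infty \geq \|M(\bm{K})\|_\infty$. Combining the two inequalities establishes equality.

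The main obstacle I anticipate is purely bookkeeping: setting up the index maps between the flattened $\text{vec}(\bm{B})$, $\text{vec}(\bm{A})$, and the four-dimensional kernel carefully enough to verify that each kernel weight appears at most once in any given row, with no double-counting across offsets. Once that correspondence is pinned down, both inequalities are routine, and the only genuine content beyond indexing is the existence of a full-kernel interior position, which is precisely what the dimensional assumption secures.
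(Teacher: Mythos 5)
Your proposal is correct, and it reaches the result by a more direct route than the paper. The paper proves the theorem hierarchically: it first establishes the 1D case (the convolution matrix is Toeplitz-like, with $\ell_\infty$-operator norm bounded by $\|\bm{K}\|_1$), lifts this to the single-channel 2D case via a block-Toeplitz partition, and then handles the multi-channel case by partitioning $\bm{U}$ into $o \times r$ blocks $\bm{U}_{(l,s)}$ — one per (output channel, input channel) pair — bounding $\|\bm{U}\|_\infty$ by $\max_l \sum_s \|\bm{U}_{(l,s)}\|_\infty$ and invoking the single-channel result per block; tightness is then argued tersely by selecting, within each block of $nm$ rows, the central row whose receptive field is fully interior. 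You instead bypass the block decomposition entirely: you write down the explicit row of $\bm{U}$ indexed by an output entry $(i,j,c)$, observe that each admissible kernel weight $\bm{K}[a,b,s,c]$ appears in it exactly once (the offset-to-column map is injective for fixed $(i,j)$, which disposes of the double-counting worry you raise), and read off both inequalities from the single formula $\sum_{(a,b)\in\mathcal{V}(i,j)}\sum_s |\bm{K}[a,b,s,c]|$ — the upper bound because $\mathcal{V}(i,j)$ is a subset of the full offset grid, the lower bound at an interior position guaranteed by $h$ odd and $h \leq \min(n,m)$. Your version is more elementary and self-contained, makes the role of the hypotheses and of zero padding explicit, and renders the tightness step transparent where the paper's is compressed; the paper's version buys modularity, since the 1D and single-channel theorems are stated and used independently, and makes the (block-)Toeplitz structure of $\bm{U}$ explicit, which the flat indexing argument leaves implicit.
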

Thus, we can control the $\ell_\infty$-operator-norm of a convolutional layer during training by controlling that of the reshaping of the kernel, which is done with the same code as for fully connected layers. It can be seen that when the padding is non-zero, the result still holds.
 \vspace{-0.3cm}
\section{Numerical Evidence}
\label{sec:complexity_experiments}
\vspace{-2mm}

The generalization properties and the robustness of \shortpolyname{} are numerically verified in this section. We evaluate the robustness to three widely-used adversarial attacks in sec.~\ref{ssec:complexity_experiment_robustness}. We assess whether the compared regularization schemes can also help in the case of adversarial training in sec.~\ref{ssec:complexity_experiment_adversarial_training}. Experiments with additional datasets, models (NCP models), adversarial attacks (APGDT, PGDT) and layer-wise bound (instead of a single bound for all matrices) are conducted in \cref{sec:complexity_supplementary_additional_experiments} due to the restricted space. The results exhibit a \textbf{consistent} behavior across different adversarial attacks, different datasets and different models. Whenever the results differ, we \emph{explicitly} mention the differences in the main body below.

\vspace{-1mm}
\subsection{Experimental Setup}
\label{ssec:complexity_experimental_setup}
\vspace{-1mm}
The accuracy is reported as as the evaluation metric for every experiment, where a higher accuracy translates to better performance.

\textbf{Datasets and Benchmark Models:} We conduct experiments on the popular datasets of Fashion-MNIST~\citep{xiao2017fashion}, E-MNIST~\citep{cohen2017emnist} \rebuttal{and CIFAR-10 ~\citep{krizhevsky2014cifar}. The first two datasets include grayscale images of resolution $28\times28$, while CIFAR-10 includes $60,000$ RGB images of resolution $32\times32$}. Each image is annotated with one out of the ten categories. We use two popular regularization methods from the literature for comparison, i.e., Jacobian regularization~\citep{hoffman2019robust} and $L_2$ regularization (weight decay).

\textbf{Models:} We report results using the following three models: 1) a $4^{\text{th}}$-degree CCP model named "\shortpolynamesingle-4", 2) a $10^{\text{th}}$-degree CCP model referenced as "\shortpolynamesingle-10" and 3) a $4^{\text{th}}$-degree Convolutional CCP model called "\shortpolynamesingle-Conv". In the \shortpolynamesingle-Conv, we have replaced all the $U_i$ matrices with convolutional kernels. None of the variants contains any activation functions. 

\textbf{Hyper-parameters:} Unless mentioned otherwise, all models are trained for $100$ epochs with a batch size of $64$. The initial value of the learning rate is $0.001$. After the first $25$ epochs, the learning rate is multiplied by a factor of $0.2$ every $50$ epochs. The SGD is used to optimize all the models, while the cross-entropy loss is used. In the experiments that include projection or adversarial training, the first $50$ epochs are pre-training, i.e., training only with the cross-entropy loss. The projection is performed every ten iterations.

\begin{table}[t]
\centering
\scalebox{0.96}{
\begin{tabular}{l@{\hspace{0.15cm}} l@{\hspace{0.15cm}} c@{\hspace{0.25cm}} c@{\hspace{0.25cm}} c@{\hspace{0.25cm}}c}
    \hline
    & \multirow{2}{*}{Method} &  No proj. & Our method & Jacobian & $L_2$  \\
      &  & \multicolumn{4}{c}{\textcolor{blue}{\emph{Fashion-MNIST}}} \\
    \hline
    \multirow{4}{*}{\shortpolynamesingle-4} &     Clean &  $87.28\pm0.18\%$ & $\bm{87.32\pm0.14\%}$ & $86.24\pm0.14\%$  & $87.31\pm0.13\%$ \\
    &     FGSM-0.1 & $12.92\pm2.74\%$ &  $\bm{46.43\pm0.95\%}$ & $17.90\pm6.51\%$ & $13.80\pm3.65\%$ \\
    &     PGD-(0.1, 20, 0.01) & $5.64\pm1.76\%$ & $\bm{49.58\pm0.59\%}$ & $12.23\pm5.63\%$ & $5.01\pm2.44\%$ \\
    &     PGD-(0.3, 20, 0.03) & $0.18\pm0.16\%$ & $\bm{28.96\pm2.31\%}$ & $1.27\pm1.29\%$ & $0.28\pm0.18\%$ \\
    \hline
    \multirow{4}{*}{\shortpolynamesingle-10} & Clean &  $88.48\pm0.17\%$ & $\bm{88.72\pm0.12\%}$ & $88.12\pm0.11\%$  & $88.46\pm0.19\%$ \\
     & FGSM-0.1 & $15.96\pm1.00\%$ &  $\bm{44.71\pm1.24\%}$ & $19.52\pm1.14\%$ & $16.51\pm2.33\%$ \\
    & PGD-(0.1, 20, 0.01) & $1.94\pm0.82\%$ & $\bm{47.94\pm2.29\%}$ & $5.44\pm0.81\%$ & $2.16\pm0.95\%$ \\
    & PGD-(0.3, 20, 0.03) & $0.02\pm0.03\%$ & $\bm{30.51\pm1.22\%}$ & $0.05\pm0.02\%$ & $0.01\pm0.02\%$ \\
    \hline
    \multirow{4}{*}{\shortpolynamesingle-Conv} & Clean &  $86.36\pm0.21\%$ & $86.38\pm0.26\%$ & $84.69\pm0.44\%$  & $\bm{86.45\pm0.21\%}$ \\
    & FGSM-0.1 & $10.80\pm1.82\%$ &  $\bm{48.15\pm1.23\%}$ & $10.62\pm0.77\%$ & $10.73\pm1.58\%$ \\
    & PGD-(0.1, 20, 0.01) & $9.37\pm1.00\%$ & $\bm{46.63\pm3.68\%}$ & $10.20\pm0.32\%$ & $8.96\pm0.83\%$ \\
    & PGD-(0.3, 20, 0.03) & $1.75\pm0.83\%$ & $\bm{28.94\pm1.20\%}$ & $8.26\pm1.05\%$ & $2.03\pm0.99\%$ \\
    \hline
    & & \multicolumn{4}{c}{\textcolor{blue}{\emph{E-MNIST}}} \\
    \hline
    \multirow{4}{*}{\shortpolynamesingle-4} &     Clean &  $84.27\pm0.26\%$ & $\bm{84.34\pm0.31\%}$ & $81.99\pm0.33\%$  & $84.22\pm0.33\%$ \\
    &     FGSM-0.1 & $8.92\pm1.99\%$ &  $\bm{27.56\pm3.32\%}$ & $14.96\pm1.32\%$ & $8.18\pm3.48\%$ \\
    &     PGD-(0.1, 20, 0.01) & $6.24\pm1.43\%$ & $\bm{29.46\pm2.73\%}$ & $6.75\pm2.92\%$ & $5.93\pm1.97\%$ \\
    &     PGD-(0.3, 20, 0.03) & $1.22\pm0.85\%$ & $\bm{19.07\pm0.98\%}$ & $3.06\pm0.53\%$ & $1.00\pm0.76\%$ \\
    \hline
    \multirow{4}{*}{\shortpolynamesingle-10} & Clean &  $89.31\pm0.09\%$ & $\bm{90.56\pm0.10\%}$ & $89.19\pm0.07\%$  & $89.23\pm0.13\%$ \\
     & FGSM-0.1 & $15.56\pm1.16\%$ &  $\bm{37.11\pm2.81\%}$ & $24.21\pm1.89\%$ & $16.30\pm1.82\%$ \\
    & PGD-(0.1, 20, 0.01) & $2.63\pm0.65\%$ & $\bm{37.89\pm2.91\%}$ & $9.18\pm1.09\%$ & $2.33\pm0.43\%$ \\
    & PGD-(0.3, 20, 0.03) & $0.00\pm0.00\%$ & $\bm{20.47\pm0.96\%}$ & $0.11\pm0.08\%$ & $0.02\pm0.03\%$ \\
    \hline
    \multirow{4}{*}{\shortpolynamesingle-Conv} & Clean &  $91.49\pm0.29\%$ & $\bm{91.57\pm0.19\%}$ & $90.38\pm0.13\%$  & $91.41\pm0.18\%$ \\
    & FGSM-0.1 & $4.28\pm0.55\%$ &  $\bm{35.39\pm7.51\%}$ & $3.88\pm0.04\%$ & $4.13\pm0.41\%$ \\
    & PGD-(0.1, 20, 0.01) & $3.98\pm0.82\%$ & $\bm{33.75\pm7.17\%}$ & $3.86\pm0.01\%$ & $4.83\pm0.87\%$ \\
    & PGD-(0.3, 20, 0.03) & $3.24\pm0.76\%$ & $\bm{28.10\pm3.27\%}$ & $3.84\pm0.01\%$ & $2.76\pm0.65\%$ \\
    \hline
\end{tabular}
}
\vspace{-2mm}
\caption{Comparison of regularization techniques on Fashion-MNIST (top) and E-MNIST (bottom). In each dataset, the base networks are \shortpolynamesingle-4, i.e., a $4^{\text{th}}$ degree polynomial, on the top four rows, \shortpolynamesingle-10, i.e., a $10^{\text{th}}$ degree polynomial, on the middle four rows and PN-Conv, i.e., a $4^{\text{th}}$ degree polynomial with convolutions, on the bottom four rows. Our projection method exhibits the best performance in all three attacks, with the difference on accuracy to stronger attacks being substantial.} 
\label{tab:regularization_compare_fashionmnist_emnist}
\end{table}

\textbf{Adversarial Attack Settings:} We utilize two widely used attacks: a) Fast Gradient Sign Method (FGSM) and b) Projected Gradient Descent (PGD). In FGSM the hyper-parameter $\epsilon$ represents the step size of the adversarial attack. In PGD there is a triple of parameters ($\epsilon_{\text{total}}$, $n_{\text{iters}}$, $\epsilon_{\text{iter}}$), which represent the maximum step size of the total adversarial attack, the number of steps to perform for a single attack, and the step size of each adversarial attack step respectively. We consider the following hyper-parameters for the attacks: a) FGSM with $\epsilon$ = 0.1, b) PGD with parameters (0.1, 20, 0.01), c) PGD with parameters (0.3, 20, 0.03).

\vspace{-1mm}
\subsection{Evaluation of the robustness of \shortpolyname}
\label{ssec:complexity_experiment_robustness}
\vspace{-1mm}
In the next experiment, we assess the robustness of \shortpolyname{} under adversarial noise. That is, the method is trained on the train set of the respective dataset and the evaluation is performed on the test set perturbed by additive adversarial noise. That is, each image is individually perturbed based on the respective adversarial attack. The proposed method implements \cref{algo:complexity_projection_only}. 

The quantitative results in both Fashion-MNIST and E-MNIST using \shortpolynamesingle-4, \shortpolynamesingle-10 and \shortpolynamesingle-Conv under the three attacks are reported in \cref{tab:regularization_compare_fashionmnist_emnist}.  The column `No-proj' exhibits the plain SGD training (i.e., without regularization), while the remaining columns include the proposed regularization, Jacobian and the $L_2$ regularization respectively. The results without regularization exhibit a substantial decrease in accuracy for stronger adversarial attacks. The proposed regularization outperforms all methods consistently across different adversarial attacks. Interestingly, the stronger the adversarial attack, the bigger the difference of the proposed regularization scheme with the alternatives of Jacobian and $L_2$ regularizations.

Next, we learn the networks with varying projection bounds. The results on Fashion-MNIST and E-MNIST are visualized in \cref{fig:test_att_proj_fashionmnist_emnist}, where the x-axis is plotted in log-scale. As a reference point, we include the clean accuracy curves, i.e., when there is no adversarial noise. Projection bounds larger than $2$ (in the log-axis) leave the accuracy unchanged. As the bounds decrease, the results gradually improve. This can be attributed to the constraints the projection bounds impose into the $\bm{U}_i$ matrices.

Similar observations can be made when evaluating the clean accuracy (i.e., no adversarial noise in the test set).  However, in the case of adversarial attacks a tighter bound performs better, i.e., the best accuracy is exhibited in the region of $0$ in the log-axis. The projection bounds can have a substantial improvement on the performance, especially in the case of stronger adversarial attacks, i.e., PGD. \rebuttal{Notice that all in the aforementioned cases, the intermediate values of the projection bounds yield an increased performance in terms of the test-accuracy and the adversarial perturbations.}

\rebuttal{Beyond the aforementioned datasets, we also validate the proposed method on CIFAR-10 dataset. The results in  \cref{fig:test_att_proj_cifar10} and \cref{tab:Test_att_with_proj_cifar10} exhibit similar patterns as the aforementioned experiments. Although the improvement is smaller than the case of Fashion-MNIST and E-MNIST, we can still obtain about 10\% accuracy improvement under three different adversarial attacks.}

\begin{figure*}
\centering
    \centering
    \includegraphics[width=0.9\linewidth]{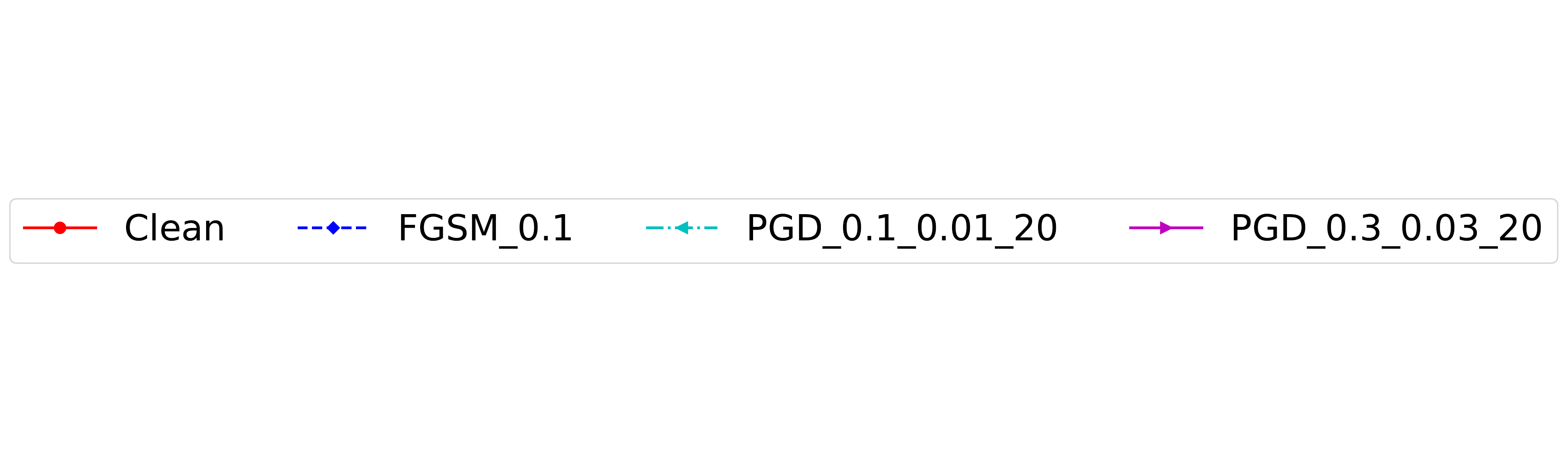} \\\vspace{-15.9mm}
    \subfloat[Fashion-MNIST]{\includegraphics[width=1\linewidth]{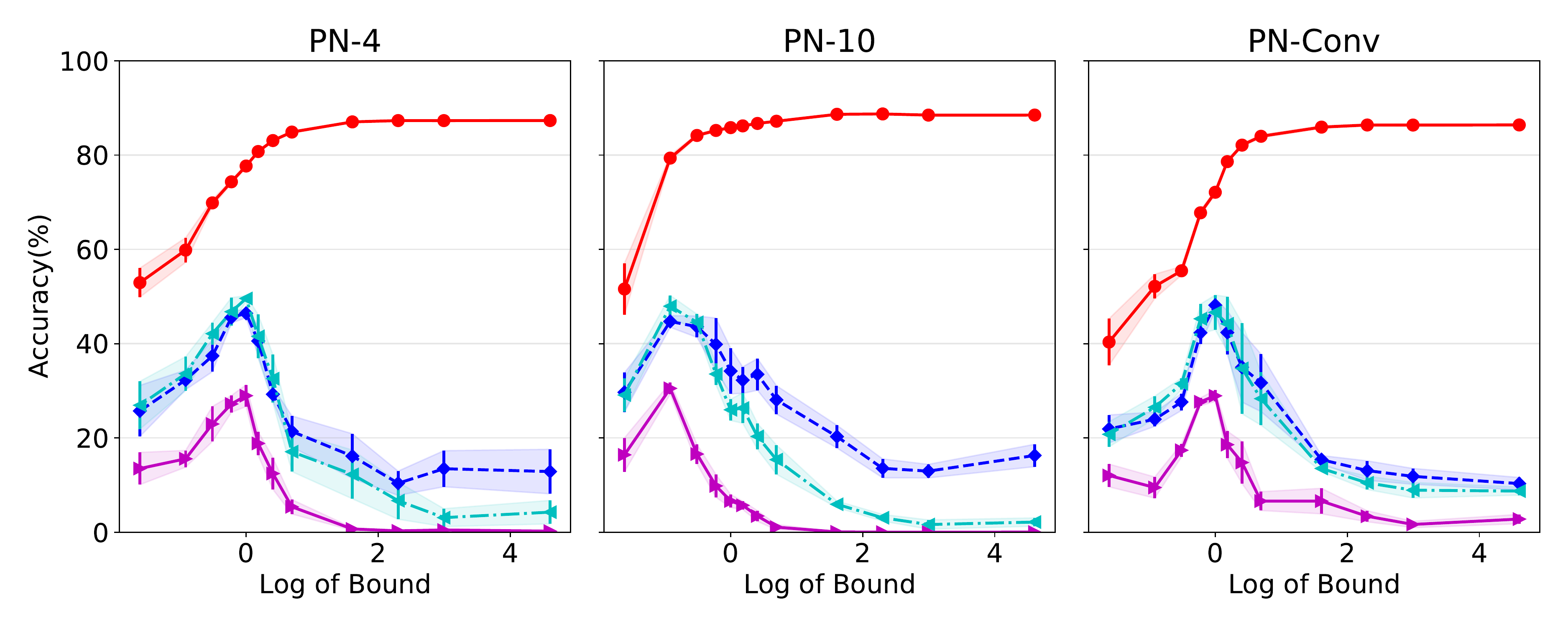}\vspace{-2.5mm}}\\
    \subfloat[E-MNIST]{\includegraphics[width=1\linewidth]{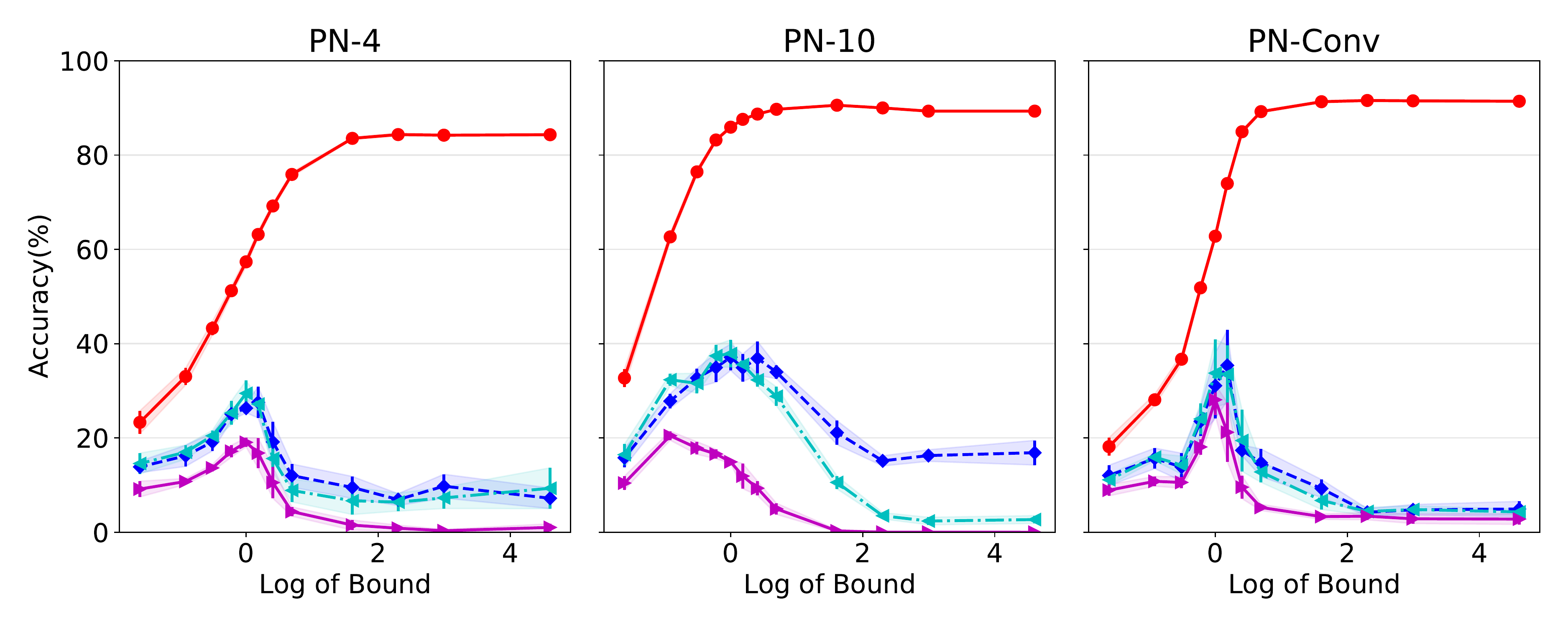}\vspace{-2.5mm} }
    \vspace{-3mm}
\caption{Adversarial attacks during testing on (a) Fashion-MNIST (top), (b) E-MNIST (bottom) with the x-axis is plotted in log-scale. Note that intermediate values of projection bounds yield the highest accuracy. The patterns are consistent in both datasets and across adversarial attacks.}
\label{fig:test_att_proj_fashionmnist_emnist}
\end{figure*}

\vspace{-3mm}
\begin{figure*}
\centering
    \centering
    \includegraphics[width=0.5\textwidth]{figures_new-legend.pdf} \\ \vspace{-5.2mm}
    \includegraphics[width=0.5\textwidth]{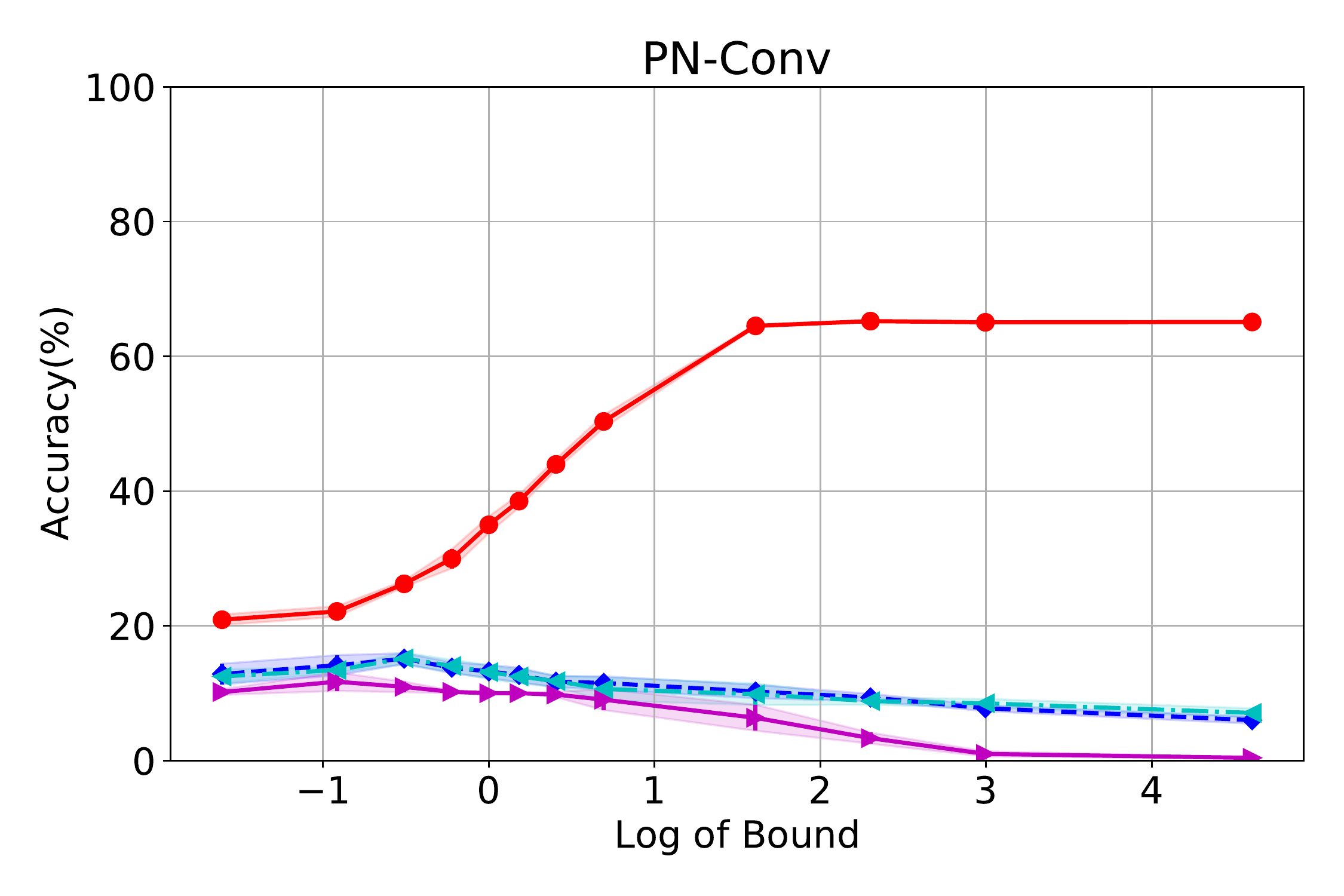} \vspace{-3mm}
\caption{Adversarial attacks during testing on CIFAR-10.}
\label{fig:test_att_proj_cifar10}
\end{figure*}

\begin{figure*}
\center
\begin{tabular}{l@{\hspace{0.25cm}} c@{\hspace{0.2cm}}c@{\hspace{0.2cm}}c@{\hspace{0.2cm}} c} 
    \hline
    Model & \multicolumn{4}{@{}c@{\hspace{0.2cm}}}{\shortpolynamesingle-Conv}\\
    
    Projection & No-proj & Our method & Jacobian & $L_2$\\
  \hline
    Clean accuracy & $65.09\pm0.14\%$  & $\bm{65.22\pm0.13\%}$ & $64.43\pm0.19\%$ & $65.11\pm0.08\%$  \\

    FGSM-0.1 & $6.00\pm0.53\%$ &$\bm{15.13\pm0.81\%}$& $3.34\pm0.40\%$ & $1.27\pm0.10\%$  \\
    
    PGD-(0.1, 20, 0.01) & $7.08\pm0.68\%$ & $\bm{15.17\pm0.88\%}$ & $1.74\pm0.14\%$ & $1.05\pm0.05\%$ \\
    
    PGD-(0.3, 20, 0.03) & $0.41\pm0.09\%$ & $\bm{11.71\pm1.11\%}$ & $0.04\pm0.02\%$ & $0.51\pm0.04\%$ \\
    
  \hline
\end{tabular}
\captionof{table}{Evaluation of the robustness of \shortpolynamesingle{} models on CIFAR-10. Each line refers to a different adversarial attack. The projection offers an improvement in the accuracy in each case; in PGD attacks the projection improves the accuracy by a significant margin.}
\label{tab:Test_att_with_proj_cifar10}

\end{figure*}

\vspace{-2mm}
\subsection{Adversarial training (AT) on \shortpolyname}
\label{ssec:complexity_experiment_adversarial_training}
\vspace{-1.2mm}

Adversarial training has been used as a strong defence against adversarial attacks. In this experiment we evaluate whether different regularization methods can work in conjunction with adversarial training that is widely used as a defence method. Since multi-step adversarial attacks are computationally intensive, we utilize the FGSM attack during training, while we evaluate the trained model in all three adversarial attacks. For this experiment we select \shortpolynamesingle-10 as the base model. The proposed model implements \cref{algo:complexity_projection_and_adversarial_training}. 

The accuracy is reported in \cref{tab:regularization_compare_EMNIST_AT} with Fashion-MNIST on the top and E-MNIST on the bottom. In the FGSM attack, the difference of the compared methods is smaller, which is expected since similar attack is used for the training. However, for stronger attacks the difference becomes pronounced with the proposed regularization method outperforming both the Jacobian and the $L_2$ regularization methods.

\begin{table}[htbp]
\centering
\begin{tabular}{l@{\hspace{0.2cm}} c@{\hspace{0.5cm}} c@{\hspace{0.5cm}} c@{\hspace{0.5cm}}c}
    \hline
    \multirow{2}{*}{Method} &  AT & Our method + AT & Jacobian + AT & $L_2$ + AT \\
        & \multicolumn{4}{c}{Adversarial training (AT) with \shortpolynamesingle-10 on \textcolor{blue}{\emph{Fashion-MNIST}}}\\
    \hline
    
    FGSM-0.1 & $65.33\pm0.46\%$ & $\bm{65.64\pm0.35\%}$ & $62.04\pm0.22\%$ & $65.62\pm0.15\%$  \\
   
    PGD-(0.1, 20, 0.01) & $57.45\pm0.35\%$  & $\bm{59.89\pm0.22\%}$ & $57.42\pm0.24\%$ & $57.40\pm0.36\%$  \\
    
    PGD-(0.3, 20, 0.03) & $24.46\pm0.45\%$ & $\bm{39.79\pm1.40\%}$ & $25.59\pm0.20\%$ & $24.99\pm0.57\%$  \\
    \hline
    & \multicolumn{4}{c}{Adversarial training (AT) with \shortpolynamesingle-10 on \textcolor{blue}{\emph{E-MNIST}}} \\
    \hline
    FGSM-0.1 & $78.30\pm0.18\%$ & $\bm{78.61\pm0.58\%}$ & $70.11\pm0.18\%$ & $78.31\pm0.32\%$  \\
   
    PGD-(0.1, 20, 0.01) & $68.40\pm0.32\%$  & $\bm{68.51\pm0.19\%}$ & $64.61\pm0.16\%$ & $68.41\pm0.37\%$  \\
    
    PGD-(0.3, 20, 0.03) & $35.58\pm0.33\%$ & $\bm{42.22\pm0.60\%}$ & $39.83\pm0.24\%$ & $35.17\pm0.46\%$  \\
    \hline
\end{tabular}
\vspace{2pt}
\caption{Comparison of regularization techniques on (a) Fashion-MNIST (top) and (b) E-MNIST (bottom) along with adversarial training (AT). The base network is a \shortpolynamesingle-10, i.e., $10^{\text{th}}$ degree polynomial. Our projection method exhibits the best performance in all three attacks.} 
\label{tab:regularization_compare_EMNIST_AT}
\end{table}

The limitations of the proposed work are threefold. Firstly, \cref{theorem_CCP_RC_Linf} relies on the $\ell_\infty$-operator norm of the face-splitting product of the weight matrices, which in practice we relax in \cref{CCP_RC_Linf_relax_bound} for performing the projection. In the future, we aim to study if it is feasible to compute the non-convex projection onto the set of \shortpolyname{} with bounded $\ell_\infty$-norm of the face-splitting product of the weight matrices. This would allow us to let go off the relaxation argument and directly optimize the original tighter Rademacher Complexity bound (\cref{theorem_CCP_RC_Linf}).

Secondly, the regularization effect of the projection differs across datasets and adversarial attacks, a topic that is worth investigating in the future. 

Thirdly, our bounds do not take into account the algorithm used, which corresponds to a variant of the Stochastic Projected Gradient Descent, and hence any improved generalization properties due to possible uniform stability \citep{Bousquet2002} of the algorithm or implicit regularization properties \citep{neyshabur2017implicit}, do not play a role in our analysis. 
\vspace{-2mm}
\section{Conclusion}
\label{sec:complexity_conclusions}
\vspace{-2mm}

In this work, we explore the generalization properties of the Coupled CP-decomposition (CCP) and nested coupled CP-decomposition (NCP) models that belong in the class of \polyname{} (\shortpolyname). We derive bounds for the Rademacher complexity and the Lipschitz constant of the CCP and the NCP models. We utilize the computed bounds as a regularization during training. 
The regularization terms have also a substantial effect on the robustness of the model, i.e., when adversarial noise is added to the test set. \rebuttal{A future direction of research is to obtain generalization bounds for this class of functions using stability notions.} Along with the recent empirical results on \shortpolyname, our derived bounds can further explain the benefits and drawbacks of using \shortpolyname. 

 \ificlrfinal  
    \section*{Acknowledgements}
\label{sec:acks}

We are thankful to Igor Krawczuk and Andreas Loukas for their comments on the paper.  We are also thankful to the reviewers for providing constructive feedback.  
Research was sponsored by the Army Research Office and was accomplished under Grant Number W911NF-19-1-0404. This work is funded (in part) through a PhD fellowship of the Swiss Data Science Center, a joint venture between EPFL and ETH Zurich. This project has received funding from the European Research Council (ERC) under the European Union's Horizon 2020 research and innovation programme (grant agreement number 725594 - time-data).  \else
\fi

\bibliography{literature}
\bibliographystyle{abbrvnat}

 \newpage

\appendix
\section{Appendix introduction} 
\allowdisplaybreaks
The Appendix is organized as follows:
\begin{itemize}
    \item \rebuttal{The related work is summarized in \cref{sec:complexity_related}.}
    \item In \cref{sec:complexity_suppl_background} the notation and the core Lemmas from the literature are described. 
    \item Further details on the \polyname{} are provided in \cref{sec:complexity_details_on_polynomials}. 
    \item The proofs on the CCP and the NCP models are added in \cref{sec:complexity_theory_proofs_ccp} and \cref{sec:complexity_theory_proofs_ncp} respectively. 
    \item The extensions of the theorems for convolutional layers and their proofs are detailed in \cref{sec:complexity_convolutional_layers_proofs}. 
    \item Additional experiments are included in \cref{sec:complexity_supplementary_additional_experiments}. 
\end{itemize}

\section{Related Work}
\label{sec:complexity_related}

\textbf{Rademacher Complexity:} Known bounds for the class of polynomials are a consequence of more general result for kernel methods \citep[Theorem 6.12]{Mohri18}. Support Vector Machines (SVMs) with a polynomial kernel of degree $k$ effectively correspond to a general polynomial with the same degree. In contrast, our bound is tailored to the parametric definition of the CCP and the NCP models, which are a subset of the class of all polynomials. Hence, they are tighter than the general kernel complexity bounds.

Bounds for the class of neural networks were stablished in \citep{bartlett2017spectrallynormalized,neyshabur2017pac}, but they require a long and technical proof, and in particular it assumes an $\ell_2$-bound on the input, which is incompatible with image-based applications. This bound also depend on the product of spectral norms of each layer. In contrast, our bounds are more similar in spirit to the {path-norm-based} complexity bounds \citep{pmlr-v40-Neyshabur15}, as they depend on interactions between neurons at different layers. This interaction precisely corresponds to the face-splitting product between weight matrices that appears in \cref{theorem_CCP_RC_Linf}.

\textbf{Lipschitz constant:} A variety of methods have been proposed for estimating the Lipschitz constant of neural networks. For example, \citet{Scaman2018lipschitz} (SVD), \citet{fazlyab2019efficient} (Semidefinite programming) and \citet{latorre2020lipschitz} (Polynomial Optimization) are expensive optimization methods to compute tighter bounds on such constant. These methods are unusable in our case as they would require a non-trivial adaptation to work with \polyname{}. In contrast we find an upper bound that applies to such family of models, and it can be controlled with efficient $\ell_\infty$-operator-norm projections. However, our bounds might not be the tightest. Developing tighter methods to bound and control the Lipschitz constant for \polyname{} is a promising direction of future research.

\section{Background}
\label{sec:complexity_suppl_background}
Below, we develop a detailed notation in \cref{ssec:complexity_suppl_notation}, we include related definitions in \cref{ssec:complexity_suppl_definitions} and Lemmas required for our proofs in \cref{ssec:complexity_suppl_lemmas}. The goal of this section is to cover many of the required information for following the proofs and the notation we follow in this work. Readers familiar with the different matrix/vector products, e.g., Khatri-Rao or face-splitting product, and with basic inequalities, e.g., Hölder's inequality, can skip to the next section.

\subsection{Notation}
\label{ssec:complexity_suppl_notation}

\begin{table}[ht]
    \caption{Symbols for various matrix products. The precise definitions of the products are included in \cref{ssec:complexity_suppl_definitions} for completion.}
    \label{tbl:complexity_matrix_products}
    \centering
    \begin{tabular}{c  c}
    \hline 
    \textbf{Symbol} 	& \textbf{Definition} \\
    \hline 
    $\circ$                   & Hadamard (element-wise) product. \\
    $\ast$                    & Column-wise Khatri–Rao product. \\
    $\bullet$                 & Face-splitting product. \\
    $\otimes$                 & Kronecker product.\\
    $\star$                   & Convolution. \\
     \hline
    \end{tabular}
\end{table}

Different matrix products and their associated symbols are referenced in \cref{tbl:complexity_matrix_products}, while matrix operations on a matrix $\bm{A}$ are defined on \cref{tbl:complexity_matrix_operations}. Every matrix product, e.g., Hadamard product, can be used in two ways: a) $\bm{A} \circ \bm{B}$, which translates to Hadamard product of matrices $\bm{A}$ and $\bm{B}$, b) $\circ_{i=1}^{N} \bm{A}_i$ abbreviates the Hadamard products $\underbrace{\bm{A}_1 \circ \bm{A}_2 \circ \ldots \bm{A}_N}_{N\text{ products}}$.

\begin{table}[ht]
    \caption{Operations and symbols on a matrix $\bm{A}$.}
    \label{tbl:complexity_matrix_operations}
    \centering
    \begin{tabular}{c  c}
    \hline 
    \textbf{Symbol} 	& \textbf{Definition} \\
    \hline 
    $\|\bm{A}\|_\infty$                     & $\ell_\infty$-operator-norm; corresponds to the maximum $\ell_1$-norm of its rows. \\
    $\bm{A}^{i}$                            & $i^{\text{th}}$ row of $\bm{A}$. \\
    $\bm{a}_{i,j}$                             & $(i, j)^{\text{th}}$ element of $\bm{A}$. \\
    $\bm{A}_{i,j}$ & $(i, j)^{\text{th}}$ block of a block-matrix $\bm{A}$ \\
    $\bm{A}_{i}$    & The i-th matrix in a set of matrices $\left \{ \bm{A}_{1},\cdots ,\bm{A}_{N} \right \}$.  \\
     \hline
    \end{tabular}
\end{table}

The symbol $x_{i}^{j}$ refers to $j^{\text{th}}$ element of vector $\bm{x}_{i}$. 

\subsection{Definitions}
\label{ssec:complexity_suppl_definitions}
For thoroughness, we include the definitions of the core products that we use in this work. Specifically, the definitions of the Hadamard product (\cref{Hadamard product}), Kronecker product (\cref{Kronecker product}), the Khatri-Rao product (\cref{KhatriRao product}), column-wise Khatri-Rao product (\cref{Columnwise KhatriRao product}) and the face-splitting product (\cref{Facesplitting product}) are included. 

\begin{definition}[Hadamard product]\label{Hadamard product}
For two matrices $\bm{A}$ and $\bm{B}$ of the same dimension m × n, the Hadamard product $\bm{A}\circ \bm{B}$ is a matrix of the same dimension as the operands, with elements given by
\begin{equation*}
(a\circ b)_{i,j} = a_{i,j}b_{i,j}\,.
\end{equation*}
\end{definition}

\begin{definition}[Kronecker product]\label{Kronecker product}
If $\bm{A}$ is an m × n matrix and $\bm{B}$ is a p × q matrix, then the Kronecker product $\bm{A}$ $\otimes$ $\bm{B}$ is the pm × qn block matrix, given as follows:
\begin{equation*}
\bm{A}\otimes \bm{B} = \begin{bmatrix}
a_{1,1}\bm{B} & \cdots  & a_{1,n}\bm{B} \\ 
\vdots  & \ddots  & \vdots  \\ 
a_{m,1}\bm{B} & \cdots  & a_{m,n}\bm{B}
\end{bmatrix}\,.
\end{equation*}
\end{definition}

Example: the Kronecker product of the matrices $\bm{A} \in \realnum^{2 \times 2}$ and $\bm{B} \in \realnum^{2 \times 2}$ is computed below:

\begin{equation}
          \underbrace{\begin{bmatrix}
              a_{1,1} & a_{1,2}\\ 
              a_{2,1} & a_{2,2}
          \end{bmatrix}}_{\bm{A}}
          \otimes
          \underbrace{\begin{bmatrix}
              b_{1,1} & b_{1,2}\\ 
              b_{2,1} & b_{2,2}
          \end{bmatrix}}_{\bm{B}}
       =       \underbrace{\begin{bmatrix}
              a_{1,1}b_{1,1} & a_{1,1}b_{1,2} & a_{1,2}b_{1,1} & a_{1,2}b_{1,2} \\ 
              a_{1,1}b_{2,1} & a_{1,1}b_{2,2} & a_{1,2}b_{2,1} & a_{1,2}b_{2,2} \\ 
              a_{2,1}b_{1,1} & a_{2,1}b_{1,2} & a_{2,2}b_{1,1} & a_{2,2}b_{1,2} \\
              a_{2,1}b_{2,1} & a_{2,1}b_{2,2} & a_{2,2}b_{2,1} & a_{2,2}b_{2,2} 
          \end{bmatrix}}_{\bm{A} \otimes \bm{B}}.
          \notag
\end{equation}

\begin{definition}[Khatri–Rao product]\label{KhatriRao product}
The Khatri–Rao product is defined as:
\begin{equation*}
\bm{A}\ast \bm{B} = (A_{i,j}\otimes B_{i,j})_{i,j}\,,
\end{equation*}
    in which the $(i,j)$-th block is the $m_{i}p_{i}\times n_{j}q_{j}$ sized Kronecker product of the corresponding blocks of $\bm{A}$ and $\bm{B}$, assuming the number of row and column partitions of both matrices is equal. The size of the product is then $(\sum _{i}m_{i}p_{i})\times(\sum _{i}n_{j}q_{j})$.
\end{definition}

Example: if $\bm{A}$ and $\bm{B}$ both are 2 × 2 partitioned matrices e.g.:

\begin{equation*}
\bm{A} = 
\begin{bmatrix}
   \begin{array}{c | c}
   \bm{A}_{1,1}  &  \bm{A}_{1,2}\\  \hline
   \bm{A}_{2,1}  &  \bm{A}_{2,2}
   \end{array}
\end{bmatrix}
=
\begin{bmatrix}
   \begin{array}{cc | c}
   a_{1,1} & a_{1,2} & a_{1,3}\\
   a_{2,1} & a_{2,2} & a_{2,3}\\  \hline
   a_{3,1} & a_{3,2} & a_{3,3}
   \end{array}
\end{bmatrix}\,,
\end{equation*}

\begin{equation*}
\bm{B} = 
\begin{bmatrix}
   \begin{array}{c | c}
   \bm{B}_{1,1}  &  \bm{B}_{1,2}\\  \hline
   \bm{B}_{2,1}  &  \bm{B}_{2,2}
   \end{array}
\end{bmatrix}
=
\begin{bmatrix}
   \begin{array}{c| cc}
   b_{1,1} & b_{1,2} & b_{1,3}\\ \hline
   b_{2,1} & b_{2,2} & b_{2,3}\\  
   b_{3,1} & b_{3,2} & b_{3,3}
   \end{array}
\end{bmatrix}\,,
\end{equation*}

then we obtain the following:

\begin{equation*}
\bm{A}\ast \bm{B} = 
\begin{bmatrix}
   \begin{array}{c | c}
   \bm{A}_{1,1}\otimes \bm{B}_{1,1}  &  \bm{A}_{1,2}\otimes \bm{B}_{1,2}\\  \hline
   \bm{A}_{2,1}\otimes \bm{B}_{2,1}  &  \bm{A}_{2,2}\otimes \bm{B}_{2,2}
   \end{array}
\end{bmatrix}
=
\begin{bmatrix}
   \begin{array}{cc| cc}
   a_{1,1}b_{1,1} & a_{1,2}b_{1,1} & a_{1,3}b_{1,2} & a_{1,3}b_{1,3}\\
   a_{2,1}b_{1,1} & a_{2,2}b_{1,1} & a_{2,3}b_{1,2} & a_{2,3}b_{1,3}\\ \hline
   a_{3,1}b_{2,1} & a_{3,2}b_{2,1} & a_{3,3}b_{2,2} & a_{3,3}b_{2,3}\\  
   a_{3,1}b_{3,1} & a_{3,2}b_{3,1} & a_{3,3}b_{3,2} & a_{3,3}b_{3,3}
   \end{array}
\end{bmatrix}\,.
\end{equation*}

\begin{definition}[Column-wise Khatri–Rao product]\label{Columnwise KhatriRao product}
A column-wise Kronecker product of two matrices may also be called the Khatri–Rao product. This product assumes the partitions of the matrices are their columns. In this case $m_{1}=m$, $p_{1}=p$, n = q and for each j: $n_{j}=p_{j}=1$. The resulting product is a $mp \times n$ matrix of which each column is the Kronecker product of the corresponding columns of $\bm{A}$ and $\bm{B}$.
\end{definition}

Example: the Column-wise Khatri–Rao product of the matrices $\bm{A} \in \realnum^{2 \times 3}$ and $\bm{B} \in \realnum^{3 \times 3}$ is computed below:

\begin{equation*}
          \underbrace{\begin{bmatrix}
              a_{1,1} & a_{1,2} & a_{1,3} \\ 
              a_{2,1} & a_{2,2} & a_{2,3} 
          \end{bmatrix}}_{\bm{A}}
          \ast
          \underbrace{\begin{bmatrix}
              b_{1,1} & b_{1,2} & b_{1,3} \\ 
              b_{2,1} & b_{2,2} & b_{2,3} \\ 
              b_{3,1} & b_{3,2} & b_{3,3}
          \end{bmatrix}}_{\bm{B}}
       =       \underbrace{\begin{bmatrix}
              a_{1,1}b_{1,1} & a_{1,2}b_{1,2} & a_{1,3}b_{1,3} \\ 
              a_{1,1}b_{2,1} & a_{1,2}b_{2,2} & a_{1,3}b_{2,3} \\ 
              a_{1,1}b_{3,1} & a_{1,2}b_{3,2} & a_{1,3}b_{3,3} \\
              a_{2,1}b_{1,1} & a_{2,2}b_{1,2} & a_{2,3}b_{1,3} \\ 
              a_{2,1}b_{2,1} & a_{2,2}b_{2,2} & a_{2,3}b_{2,3} \\ 
              a_{2,1}b_{3,1} & a_{2,2}b_{3,2} & a_{2,3}b_{3,3}
          \end{bmatrix}}_{\bm{A} \ast \bm{B}}\,.
          \notag
\end{equation*}

From here on, all $\ast$ refer to Column-wise Khatri–Rao product.

\begin{definition}[Face-splitting product]\label{Facesplitting product}
The alternative concept of the matrix product, which uses row-wise splitting of matrices with a given quantity of rows. This matrix operation was named the \emph{face-splitting product} of matrices or the \emph{transposed Khatri–Rao product}. This type of operation is based on row-by-row Kronecker products of two matrices.
\end{definition}

Example: the Face-splitting product of the matrices $\bm{A} \in \realnum^{3 \times 2}$ and $\bm{B} \in \realnum^{3 \times 2}$ is computed below:

\begin{equation*}
          \underbrace{\begin{bmatrix}
              a_{1,1} & a_{1,2} \\ 
              a_{2,1} & a_{3,2} \\ 
              a_{3,1} & a_{3,2}
          \end{bmatrix}}_{\bm{A}}
          \bullet 
          \underbrace{\begin{bmatrix}
              b_{1,1} & b_{1,2}\\ 
              b_{2,1} & b_{2,2}\\ 
              b_{3,1} & b_{3,2}
          \end{bmatrix}}_{\bm{B}}
       =       \underbrace{\begin{bmatrix}
              a_{1,1}b_{1,1} & a_{1,2}b_{1,1} & a_{1,1}b_{1,2} & a_{1,2}b_{1,2}\\ 
              a_{2,1}b_{2,1} & a_{2,2}b_{2,1} & a_{2,1}b_{2,2} & a_{2,2}b_{2,2}\\ 
              a_{3,1}b_{3,1} & a_{3,2}b_{3,1} & a_{3,1}b_{3,2} & a_{3,2}b_{3,2}
          \end{bmatrix}}_{\bm{A} \bullet  \bm{B}}\,.
          \notag
\end{equation*}

\subsection{Well-known Lemmas}
\label{ssec:complexity_suppl_lemmas}

In this section, we provide the details on the Lemmas required for our proofs along with their proofs or the corresponding citations where the Lemmas can be found as well. 

\begin{lemma}\label{lemma_lip_comp_func}
\citep{federer2014geometric} Let $g$, $h$ be two composable Lipschitz functions. Then $g \circ h$ is also Lipschitz with $\text{Lip}(g \circ h) \leq \text{Lip}(g) \text{Lip}(h)$. Here and only here $\circ$ represents function composition.
\end{lemma}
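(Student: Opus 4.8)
The plan is to prove the claim directly from the definition of the Lipschitz constant, applied twice. Suppose $h \colon \mathcal{X} \to \mathcal{Y}$ and $g \colon \mathcal{Y} \to \mathcal{Z}$ are the two composable Lipschitz functions, so that the composite $g \circ h \colon \mathcal{X} \to \mathcal{Z}$ is well defined. Write $K_g = \text{Lip}(g)$ and $K_h = \text{Lip}(h)$ for their respective Lipschitz constants, understood as the smallest admissible constants in the defining inequality. The whole argument is then a short chaining of the two Lipschitz estimates, with no genuine obstacle; the only points requiring care are that composability guarantees the intermediate points $h(x_1), h(x_2)$ indeed lie in the domain of $g$, and that one must argue the product $K_g K_h$ is an \emph{admissible} constant in order to conclude the stated bound on the infimal constant $\text{Lip}(g \circ h)$.

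Concretely, I would fix arbitrary $x_1, x_2 \in \mathcal{X}$ and apply the Lipschitz property of $g$ to the pair $h(x_1), h(x_2) \in \mathcal{Y}$, followed by the Lipschitz property of $h$ to the pair $x_1, x_2$. This yields the chain
\begin{equation*}
\| (g \circ h)(x_1) - (g \circ h)(x_2) \|_{\mathcal{Z}} = \| g(h(x_1)) - g(h(x_2)) \|_{\mathcal{Z}} \leq K_g \| h(x_1) - h(x_2) \|_{\mathcal{Y}} \leq K_g K_h \| x_1 - x_2 \|_{\mathcal{X}}\,.
\end{equation*}
Since $x_1, x_2$ were arbitrary, this shows that $g \circ h$ satisfies the Lipschitz inequality with constant $K_g K_h$, so $K_g K_h$ is an admissible Lipschitz constant for $g \circ h$.

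Finally, recalling that $\text{Lip}(g \circ h)$ denotes the smallest such admissible constant, the inequality $\text{Lip}(g \circ h) \leq K_g K_h = \text{Lip}(g)\,\text{Lip}(h)$ follows immediately, which is exactly the claim. As noted, there is no hard step here: the result is a standard fact (as cited via \citep{federer2014geometric}), and the proof reduces to applying the definition of the Lipschitz constant twice and reading off the product of the two constants.
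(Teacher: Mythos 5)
Your proof is correct and complete: fixing arbitrary $x_1, x_2$, chaining the two Lipschitz estimates through the intermediate points $h(x_1), h(x_2)$, and then observing that the admissible constant $K_g K_h$ dominates the infimal constant $\text{Lip}(g \circ h)$ is exactly the standard argument, and you even handle the one subtlety (passing from an admissible constant to the infimum) explicitly. The paper itself offers no proof for this lemma --- it is stated in the appendix with only the citation to \citep{federer2014geometric} --- so your argument simply supplies the textbook proof that the citation stands in for; there is nothing to compare and no gap.
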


\begin{lemma}\label{lemma_lip_to_jacobian}
\citep{federer2014geometric} Let $f:\mathcal{X} \subseteq \realnum^n \rightarrow \realnum^m$ be differentiable and Lipschitz continuous. Let $J_{f}(\bm{x})$ denote its total derivative (Jacobian) at $\bm{x}$. Then $\text{Lip}_{p}(f) = \sup_{\bm{x}\in \mathcal{X}} \left \| J_{f}(\bm{x}) \right \|_{p}$ where $\left \| J_{f}(\bm{x}) \right \|_{p}$  is the induced operator norm on $J_{f}(\bm{x})$.
\end{lemma}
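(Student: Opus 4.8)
The plan is to establish the claimed equality by proving the two inequalities $\sup_{\bm{x}\in\mathcal{X}}\|J_f(\bm{x})\|_p \le \mathrm{Lip}_p(f)$ and $\mathrm{Lip}_p(f)\le \sup_{\bm{x}\in\mathcal{X}}\|J_f(\bm{x})\|_p$ separately, where throughout $\|J_f(\bm{x})\|_p \coloneqq \sup_{\|\bm{v}\|_p=1}\|J_f(\bm{x})\bm{v}\|_p$ denotes the induced operator norm. I will assume, as holds in all the relevant applications (e.g. $\mathcal{X}=\{\bm{z}:\|\bm{z}\|_\infty\le 1\}$), that $\mathcal{X}$ is convex, so that the segment joining any two points of the domain remains inside it.

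For the first (lower-bound) inequality, I would fix an interior point $\bm{x}$ and a unit direction $\bm{v}$ and invoke the definition of the total derivative, $J_f(\bm{x})\bm{v} = \lim_{t\to 0}\tfrac{1}{t}\big(f(\bm{x}+t\bm{v})-f(\bm{x})\big)$. Taking $\|\cdot\|_p$ and using continuity of the norm together with the Lipschitz bound $\|f(\bm{x}+t\bm{v})-f(\bm{x})\|_p \le \mathrm{Lip}_p(f)\,|t|\,\|\bm{v}\|_p$ yields $\|J_f(\bm{x})\bm{v}\|_p\le \mathrm{Lip}_p(f)$. Taking the supremum over unit $\bm{v}$ gives $\|J_f(\bm{x})\|_p\le\mathrm{Lip}_p(f)$ at every interior point, and boundary points are handled by continuity of $J_f$ (or by a limiting argument through interior points); a further supremum over $\bm{x}$ closes this direction.

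For the second (upper-bound) inequality, I would fix $\bm{x}_1,\bm{x}_2\in\mathcal{X}$, set $L\coloneqq \sup_{\bm{x}}\|J_f(\bm{x})\|_p$, and define the curve $g(t)\coloneqq f(\bm{x}_1+t(\bm{x}_2-\bm{x}_1))$ for $t\in[0,1]$, which is well-defined by convexity and differentiable with $g'(t)=J_f(\bm{x}_1+t(\bm{x}_2-\bm{x}_1))(\bm{x}_2-\bm{x}_1)$. The key step is the mean-value inequality for vector-valued maps: by duality choose $\bm{u}$ with $\|\bm{u}\|_{p^*}=1$ and $\langle \bm{u}, g(1)-g(0)\rangle = \|g(1)-g(0)\|_p$, and apply the scalar mean value theorem to $\phi(t)\coloneqq\langle\bm{u},g(t)\rangle$ to obtain, for some $\xi\in(0,1)$, the chain $\|g(1)-g(0)\|_p = \phi(1)-\phi(0) = \langle \bm{u}, g'(\xi)\rangle \le \|g'(\xi)\|_p$ (the last step by Hölder). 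Bounding $\|g'(\xi)\|_p\le\|J_f(\bm{x}_1+\xi(\bm{x}_2-\bm{x}_1))\|_p\,\|\bm{x}_2-\bm{x}_1\|_p\le L\,\|\bm{x}_2-\bm{x}_1\|_p$ and recalling $g(1)-g(0)=f(\bm{x}_2)-f(\bm{x}_1)$ shows $\mathrm{Lip}_p(f)\le L$. Combining the two inequalities gives the equality.

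The step I expect to be the main obstacle is the upper-bound direction. The naive fundamental-theorem-of-calculus representation $f(\bm{x}_2)-f(\bm{x}_1)=\int_0^1 g'(t)\,dt$ combined with Minkowski's integral inequality would require $J_f$ to be continuous (or at least integrable) along the segment, which differentiability alone does not guarantee; routing instead through the duality-based mean value inequality above avoids this regularity assumption and uses only differentiability. A secondary subtlety is the domain geometry: for non-convex $\mathcal{X}$ the segment argument breaks down and the statement must be read with respect to the geodesic distance, so I would state convexity explicitly as the operating hypothesis, noting it is satisfied in every setting where this lemma is applied in the paper.
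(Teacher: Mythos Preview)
The paper does not prove this lemma; it is stated with a citation to \cite{federer2014geometric} and used as a black box in the Lipschitz-constant arguments. Your proof is therefore not being compared against anything in the paper itself.

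That said, your argument is correct and standard. The lower bound via the definition of the derivative is immediate; your parenthetical ``limiting argument through interior points'' is the right way to handle boundary points (do not rely on continuity of $J_f$, which differentiability alone does not give --- but for a convex body the cone of admissible one-sided directions at a boundary point spans $\realnum^n$, so the linear map $J_f(\bm{x})$ is determined and bounded by the interior estimate). The upper bound via the duality trick --- picking a norming functional $\bm{u}$ and applying the scalar mean value theorem to $\phi(t)=\langle\bm{u},g(t)\rangle$ --- is exactly the clean way to obtain the vector-valued mean-value inequality under mere differentiability, avoiding the unjustified integral representation. Your explicit convexity hypothesis is also appropriate: the lemma as stated is false on disconnected or non-convex domains under the Euclidean metric, and every application in the paper is on a ball, so nothing is lost.
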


\begin{lemma}[Hölder's inequality]\label{Lemma:holder_inequality}\citep{Cvetkovski2012}
Let $(S, \Sigma , \mu )$ be a measure space and let $p, q\in[1,\infty ]$ with $\frac{1}{p}+\frac{1}{q}=1$. Then, for all measurable real-valued functions $f$ and $g$ on S, it holds that:
\begin{equation*}
\left \| \bm{f}\bm{g} \right \|_{1}\leq \left \| \bm{f} \right \|_{p}\left \| \bm{g} \right \|_{q}\,.
\end{equation*} 
\end{lemma}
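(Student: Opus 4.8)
The plan is to reduce Hölder's inequality to an elementary pointwise bound and then integrate. First I would dispose of the degenerate cases. If $p=1$ (so $q=\infty$), then $|f(x)g(x)| \le |f(x)|\,\|g\|_\infty$ for $\mu$-almost every $x$, and integrating gives $\|fg\|_1 \le \|f\|_1\|g\|_\infty$; the case $p=\infty$ is symmetric. Likewise, if $\|f\|_p=0$ then $f=0$ almost everywhere and both sides vanish, while if $\|f\|_p=\infty$ (or $\|g\|_q=\infty$) the right-hand side is $+\infty$ and there is nothing to prove. Hence I may assume $1<p,q<\infty$ and $0<\|f\|_p,\|g\|_q<\infty$.

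The crux is Young's inequality: for nonnegative reals $a,b$ and conjugate exponents $p,q$,
\[ ab \le \frac{a^p}{p} + \frac{b^q}{q}. \]
I would derive this from concavity of the logarithm. The case $a=0$ or $b=0$ is trivial; otherwise, since $\tfrac1p+\tfrac1q=1$ and $\log$ is concave,
\[ \log\!\left(\tfrac1p a^p + \tfrac1q b^q\right) \ge \tfrac1p \log(a^p) + \tfrac1q \log(b^q) = \log(ab), \]
and applying the increasing function $\exp$ to both sides yields the claim.

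With Young's inequality available, the main step is a normalization followed by a pointwise application and integration. I would set $\tilde f = f/\|f\|_p$ and $\tilde g = g/\|g\|_q$, so that $\|\tilde f\|_p = \|\tilde g\|_q = 1$. Applying Young's inequality pointwise with $a=|\tilde f(x)|$ and $b=|\tilde g(x)|$ gives, for every $x\in S$,
\[ |\tilde f(x)\,\tilde g(x)| \le \frac{|\tilde f(x)|^p}{p} + \frac{|\tilde g(x)|^q}{q}. \]
Integrating over $(S,\Sigma,\mu)$ and using the normalization,
\[ \int_S |\tilde f\,\tilde g|\, d\mu \le \frac1p \int_S |\tilde f|^p\, d\mu + \frac1q \int_S |\tilde g|^q\, d\mu = \frac1p + \frac1q = 1. \]
Multiplying through by $\|f\|_p\|g\|_q$ recovers $\int_S |fg|\, d\mu \le \|f\|_p\|g\|_q$, i.e.\ $\|fg\|_1 \le \|f\|_p\|g\|_q$.

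The only nontrivial ingredient is Young's inequality, so the main obstacle is establishing it cleanly; everything afterward is routine bookkeeping. The one point requiring care is measurability — that $|fg|$, $|f|^p$ and $|g|^q$ are measurable so that all the integrals are well defined — but this is standard for measurable real-valued functions and introduces no new difficulty.
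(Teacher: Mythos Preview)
Your proof is correct and follows the standard textbook route via Young's inequality and normalization. Note, however, that the paper does not actually prove this lemma: it is stated as a background result with a citation to \citep{Cvetkovski2012} and used without proof, so there is no ``paper's own proof'' to compare against. Your argument is a perfectly valid way to fill in that citation.
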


\begin{lemma}[Mixed Product Property 1]\label{lemma:Mixed_Product_Property_1}\citep{slyusar1999family} The following holds:
\begin{equation*}
(\bm{A}_{1}\bm{B}_{1})\circ (\bm{A}_{2}\bm{B}_{2}) = (\bm{A}_{1}\bullet \bm{A}_{2})(\bm{B}_{1}\ast \bm{B}_{2})\,.
\end{equation*} 
\end{lemma}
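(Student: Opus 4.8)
The plan is to prove the identity by verifying that both sides agree entry-by-entry, reducing everything to a single scalar factorization of inner products. First I would fix the dimensions: write $\bm{A}_1 \in \realnum^{p \times r}$, $\bm{A}_2 \in \realnum^{p \times t}$, $\bm{B}_1 \in \realnum^{r \times s}$, $\bm{B}_2 \in \realnum^{t \times s}$, so that $\bm{A}_1\bm{B}_1$ and $\bm{A}_2\bm{B}_2$ are both $p \times s$ (making the Hadamard product on the left well-defined), while $\bm{A}_1 \bullet \bm{A}_2 \in \realnum^{p \times rt}$ and $\bm{B}_1 \ast \bm{B}_2 \in \realnum^{rt \times s}$ (making the matrix product on the right well-defined and of size $p \times s$). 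This dimension check alone confirms that the two sides are at least comparable.

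The core of the argument is to compute the $(i,j)$ entry of each side. On the left, $[(\bm{A}_1\bm{B}_1)\circ(\bm{A}_2\bm{B}_2)]_{i,j}$ is the product of the two matrix-product entries, each of which is an inner product of a row of $\bm{A}_\ell$ with a column of $\bm{B}_\ell$. Denoting by $\bm{A}_1^i, \bm{A}_2^i$ the $i$-th rows and by $(\bm{B}_1)_{\cdot,j}, (\bm{B}_2)_{\cdot,j}$ the $j$-th columns, this entry equals $\langle \bm{A}_1^i, (\bm{B}_1)_{\cdot,j}\rangle \, \langle \bm{A}_2^i, (\bm{B}_2)_{\cdot,j}\rangle$. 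The key step is then the elementary scalar identity $\langle \bm{u},\bm{v}\rangle\langle \bm{x},\bm{y}\rangle = \langle \bm{u}\otimes\bm{x},\, \bm{v}\otimes\bm{y}\rangle$, valid for arbitrary vectors and obtained by expanding $\langle \bm{u}\otimes\bm{x}, \bm{v}\otimes\bm{y}\rangle = \sum_{a,b}(u_a x_b)(v_a y_b) = (\sum_a u_a v_a)(\sum_b x_b y_b)$.

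To close, I would identify the two factors in the resulting Kronecker inner product with the appropriate slices of the right-hand side. By the definition of the face-splitting product (\cref{Facesplitting product}), $\bm{A}_1^i \otimes \bm{A}_2^i$ is exactly the $i$-th row of $\bm{A}_1 \bullet \bm{A}_2$; by the definition of the column-wise Khatri--Rao product (\cref{Columnwise KhatriRao product}), $(\bm{B}_1)_{\cdot,j}\otimes(\bm{B}_2)_{\cdot,j}$ is exactly the $j$-th column of $\bm{B}_1 \ast \bm{B}_2$. Hence $\langle \bm{A}_1^i\otimes\bm{A}_2^i,\,(\bm{B}_1)_{\cdot,j}\otimes(\bm{B}_2)_{\cdot,j}\rangle$ is precisely $[(\bm{A}_1\bullet\bm{A}_2)(\bm{B}_1\ast\bm{B}_2)]_{i,j}$, matching the left side. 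Since $i$ and $j$ were arbitrary, the matrices coincide.

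The main obstacle is more a bookkeeping hazard than a conceptual one: I must ensure the Kronecker index orderings are consistent, i.e., the enumeration of the $rt$ coordinates induced on the rows by $\bullet$ must match the enumeration induced on the columns by $\ast$, so that the single inner product over $rt$ terms lines up. I would pin this down by writing the flattened index as $c=(a-1)t+b$ with $a\in\{1,\dots,r\}$ and $b\in\{1,\dots,t\}$, and checking that both definitions use this same convention; once the two orderings agree, the entry-wise equality is immediate.
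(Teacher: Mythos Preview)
Your proof is correct. The paper does not actually prove this lemma; it is stated with a citation to \citet{slyusar1999family} and used as a black box (the paper only proves the inductive extension, \cref{lemma:Mixed_Product_Property_2}, by appealing to this result). Your entry-by-entry verification via the scalar identity $\langle \bm{u},\bm{v}\rangle\langle \bm{x},\bm{y}\rangle = \langle \bm{u}\otimes\bm{x},\bm{v}\otimes\bm{y}\rangle$, together with the observation that the $i$-th row of $\bm{A}_1\bullet\bm{A}_2$ and the $j$-th column of $\bm{B}_1\ast\bm{B}_2$ are exactly the relevant Kronecker products, is the standard direct argument and is fully rigorous once the index ordering is fixed as you describe.
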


\begin{lemma}[Mixed Product Property 2]\label{lemma:Mixed_Product_Property_2} The following holds:
\begin{equation*}
\circ _{i=1}^{N}(\bm{A}_{i}\bm{B}_{i}) = \bullet _{i=1}^{N}(\bm{A}_{i})\ast _{i=1}^{N}(\bm{B}_{i})\,.
\end{equation*} 
\end{lemma}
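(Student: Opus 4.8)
The plan is to prove the identity by induction on $N$, bootstrapping from the two-factor case \cref{lemma:Mixed_Product_Property_1}, which serves simultaneously as the base case and as the engine of the inductive step. Throughout I assume the dimensions are compatible in the natural way: writing $\bm{A}_i \in \realnum^{p \times q_i}$ and $\bm{B}_i \in \realnum^{q_i \times r}$, each product $\bm{A}_i \bm{B}_i$ lies in $\realnum^{p \times r}$, so the iterated Hadamard product on the left-hand side is well-defined. Before running the induction I would first record that both $\bullet$ and $\ast$ are associative, so that the iterated symbols $\bullet_{i=1}^{N}(\bm{A}_i)$ and $\ast_{i=1}^{N}(\bm{B}_i)$ are unambiguous; this holds because the face-splitting product is a row-wise Kronecker product and the column-wise Khatri--Rao product is a column-wise Kronecker product, and the Kronecker product is itself associative.

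For $N=1$ the statement reads $\bm{A}_1 \bm{B}_1 = \bm{A}_1 \bm{B}_1$, and $N=2$ is precisely \cref{lemma:Mixed_Product_Property_1}. For the inductive step I assume the claim for $N-1$ factors and split off the final term,
\begin{equation*}
\circ_{i=1}^{N}(\bm{A}_i \bm{B}_i) = \left( \circ_{i=1}^{N-1}(\bm{A}_i \bm{B}_i) \right) \circ (\bm{A}_N \bm{B}_N)\,,
\end{equation*}
then apply the inductive hypothesis to the first factor, rewriting it as $\left( \bullet_{i=1}^{N-1}(\bm{A}_i) \right)\left( \ast_{i=1}^{N-1}(\bm{B}_i) \right)$. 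The point to check here is that this is a genuine matrix product: $\bullet_{i=1}^{N-1}(\bm{A}_i) \in \realnum^{p \times \prod_{i=1}^{N-1} q_i}$ and $\ast_{i=1}^{N-1}(\bm{B}_i) \in \realnum^{\prod_{i=1}^{N-1} q_i \times r}$, so the inner dimensions match and the product again lands in $\realnum^{p \times r}$.

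Having expressed the left-hand side as a Hadamard product of two matrix products, I would apply \cref{lemma:Mixed_Product_Property_1} with the first pair taken to be $\left( \bullet_{i=1}^{N-1}(\bm{A}_i),\, \ast_{i=1}^{N-1}(\bm{B}_i) \right)$ and the second pair $(\bm{A}_N, \bm{B}_N)$, producing the two factors
\begin{equation*}
\left( \bullet_{i=1}^{N-1}(\bm{A}_i) \right) \bullet \bm{A}_N \quad \text{and} \quad \left( \ast_{i=1}^{N-1}(\bm{B}_i) \right) \ast \bm{B}_N\,.
\end{equation*}
By the associativity of $\bullet$ and $\ast$ recorded above, these collapse to $\bullet_{i=1}^{N}(\bm{A}_i)$ and $\ast_{i=1}^{N}(\bm{B}_i)$ respectively, which closes the induction. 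The only real obstacle is bookkeeping rather than mathematics: I must justify the associativity of the two products (inherited from the Kronecker product) and track the dimensions carefully so that each invocation of \cref{lemma:Mixed_Product_Property_1} is on legitimately shaped operands. The growth of the inner dimension from $q_{N-1}$ to $\prod_{i=1}^{N-1} q_i$ is exactly what the face-splitting and Khatri--Rao structure accommodates, and it is the key reason the two-factor identity iterates cleanly.
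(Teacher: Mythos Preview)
Your proof is correct and follows essentially the same induction-on-$N$ argument as the paper: split off the last factor, apply the inductive hypothesis, then invoke \cref{lemma:Mixed_Product_Property_1} and collapse via associativity. The paper's version is terser (it omits your explicit dimension bookkeeping and the remark on associativity of $\bullet$ and $\ast$), but the logical skeleton is identical.
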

\begin{proof}
We prove this lemma by induction on $N$.

Base case ($N=1$):
$\bm{A}_{1}\bm{B}_{1} = \bm{A}_{1}\bm{B}_{1}$.

Inductive step: Assume that the induction hypothesis holds for a particular $k$, i.e., the case $N = k$ holds. That can be expressed as:

\begin{equation}
    \circ _{i=1}^{k}(\bm{A}_{i}\bm{B}_{i}) = \bullet _{i=1}^{k}(\bm{A}_{i})\ast _{i=1}^{k}(\bm{B}_{i})\,.
    \label{eq:complexity_mixed_product_induction_step}
\end{equation}

Then we will prove that it holds for $N = k+1$:

\begin{equation*} 
\begin{split}
& \circ _{i=1}^{k+1}(\bm{A}_{i}\bm{B}_{i}) \\
& = [\circ _{i=1}^{k}(\bm{A}_{i}\bm{B}_{i})]\circ (\bm{A}_{k+1}\bm{B}_{k+1})\\
& = [\bullet _{i=1}^{k}(\bm{A}_{i})\ast _{i=1}^{k}(\bm{B}_{i})]\circ (\bm{A}_{k+1}\bm{B}_{k+1})\quad\text{use inductive hypothesis (\cref{eq:complexity_mixed_product_induction_step})}\\
& = [\bullet _{i=1}^{k}(\bm{A}_{i})\bullet \bm{A}_{k+1}][\ast _{i=1}^{k}(\bm{B}_{i})\ast \bm{B}_{k+1}]\quad\text{\cref{lemma:Mixed_Product_Property_1} [Mixed product property 1]}\\
& = \bullet _{i=1}^{k+1}(\bm{A}_{i})\ast _{i=1}^{k+1}(\bm{B}_{i})\,.
\end{split}
\end{equation*}

That is, the case $N = k+1$ also holds true, establishing the inductive step.
\end{proof}

\begin{lemma}[Massart Lemma. Lemma 26.8 in \cite{shalev}]\label{Lemma:Massart_Lemma}
Let $\mathcal{A}$ =$\left \{ \bm{a}_{1},\cdots ,\bm{a}_{N} \right \}$ be a finite set of vectors
in $\realnum^{m}$. Define $\bm{\bar a}=\frac{1}{N}\sum_{i=1}^{N}\bm{a}_{i}$. Then:
\begin{equation*} 
\mathcal{R}(\mathcal{A})\leq \max_{\bm{a}\in A}\left \| \bm{a}-\bm{\bar a} \right \|\frac{\sqrt{2\log{N}}}{m}\,.
\end{equation*}
\end{lemma}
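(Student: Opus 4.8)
The plan is to prove this via the classical sub-Gaussian ``moment generating function'' (Chernoff) argument for the Rademacher average of a finite set. Writing $\bm\sigma = (\sigma_1,\ldots,\sigma_m)$ for the vector of independent Rademacher variables, the object to control is $\mathcal{R}(\mathcal{A}) = \tfrac{1}{m}\mathbb{E}_{\bm\sigma}\big[\max_{\bm a\in\mathcal A}\langle \bm\sigma,\bm a\rangle\big]$. The first step is a centering reduction: since $\langle\bm\sigma,\bar{\bm a}\rangle$ is independent of the maximizing $\bm a$, I can pull it out of the maximum, and because each $\sigma_i$ has mean zero we have $\mathbb{E}_{\bm\sigma}\langle\bm\sigma,\bar{\bm a}\rangle = 0$. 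Hence $\mathbb{E}_{\bm\sigma}\max_{\bm a}\langle\bm\sigma,\bm a\rangle = \mathbb{E}_{\bm\sigma}\max_{\bm a}\langle\bm\sigma,\bm a-\bar{\bm a}\rangle$, so I may replace each $\bm a$ by its centered version and set $r := \max_{\bm a\in\mathcal A}\|\bm a-\bar{\bm a}\|$, precisely the quantity appearing in the bound.

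The core is the exponential trick. For any $t>0$, I apply Jensen's inequality to the convex map $x\mapsto e^{tx}$ to obtain $\exp\!\big(t\,\mathbb{E}_{\bm\sigma}\max_{\bm a}\langle\bm\sigma,\bm a\rangle\big)\le \mathbb{E}_{\bm\sigma}\exp\!\big(t\max_{\bm a}\langle\bm\sigma,\bm a\rangle\big)$. I then bound the maximum over the finite set by the sum, $\exp(t\max_{\bm a}\langle\bm\sigma,\bm a\rangle)=\max_{\bm a}\exp(t\langle\bm\sigma,\bm a\rangle)\le\sum_{\bm a}\exp(t\langle\bm\sigma,\bm a\rangle)$, swap the (finite) sum with the expectation, and use independence of the $\sigma_i$ to factorize each term as $\prod_{i=1}^m\mathbb{E}_{\sigma_i}[e^{t\sigma_i a_i}]=\prod_{i=1}^m\cosh(ta_i)$.

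Next I invoke the elementary sub-Gaussian estimate $\cosh(x)\le e^{x^2/2}$ (comparing Taylor coefficients gives $\cosh(x)=\sum_k x^{2k}/(2k)!\le\sum_k x^{2k}/(2^k k!)=e^{x^2/2}$), which yields $\prod_i\cosh(ta_i)\le\exp\!\big(\tfrac{t^2}{2}\|\bm a\|^2\big)\le\exp\!\big(\tfrac{t^2 r^2}{2}\big)$ after centering. Summing over the $N$ vectors gives $\exp\!\big(t\,\mathbb{E}_{\bm\sigma}\max_{\bm a}\langle\bm\sigma,\bm a\rangle\big)\le N\exp(t^2 r^2/2)$. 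Taking logarithms and dividing by $t$ produces the family of bounds $\mathbb{E}_{\bm\sigma}\max_{\bm a}\langle\bm\sigma,\bm a\rangle\le \tfrac{\log N}{t}+\tfrac{t r^2}{2}$, valid for every $t>0$.

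The final step optimizes the free parameter: minimizing $\tfrac{\log N}{t}+\tfrac{t r^2}{2}$ over $t>0$ gives $t^\star=\sqrt{2\log N}/r$, at which the right-hand side equals $r\sqrt{2\log N}$. Dividing by $m$ recovers $\mathcal{R}(\mathcal{A})\le\max_{\bm a\in\mathcal A}\|\bm a-\bar{\bm a}\|\,\tfrac{\sqrt{2\log N}}{m}$, as claimed. I expect no genuine obstacle here, since this is a textbook result (hence its citation to \cite{shalev}); the only points requiring care are justifying the centering identity and verifying the tightness of the $t$-optimization, while the rest is a routine Chernoff/union-bound computation.
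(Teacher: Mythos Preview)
Your proof is correct and follows the standard Chernoff/moment-generating-function argument that appears in the cited reference. The paper itself does not prove this lemma---it is listed among the ``Well-known Lemmas'' in \cref{ssec:complexity_suppl_lemmas} and simply cited as Lemma~26.8 in \cite{shalev}---so there is no paper proof to compare against; your argument is precisely the textbook one.
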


\begin{definition}[Consistency of a matrix norm]\label{def:Consistent}
A matrix norm is called consistent on $\mathbb{C}^{n,n}$, if
\begin{equation*} 
\left \| \bm{AB} \right \| \leq \left \| \bm{A} \right \|\left \| \bm{B} \right \|\,.
\end{equation*}
holds for $\bm{A},\bm{B}\in \mathbb{C}^{n,n}$.
\end{definition}

\begin{lemma}[Consistency of the operator norm]\label{lemma:consistent} \citep{Numerical_Linear_Algebra}
The operator norm is consistent if the vector norm $\left \| \cdot  \right \|_\alpha $ is defined for all $m\in \mathbb{N}$ and $\left \| \cdot  \right \|_\beta = \left \| \cdot  \right \|_\alpha $ 
\end{lemma}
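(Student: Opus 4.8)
The plan is to reduce the submultiplicativity statement $\|\bm{AB}\| \leq \|\bm{A}\|\,\|\bm{B}\|$ of \cref{def:Consistent} to the single defining property of an induced operator norm, namely its compatibility with the underlying vector norm. First I would recall that, under the stated hypotheses, the operator norm of a matrix $\bm{A}$ is $\|\bm{A}\| = \sup_{\bm{x}\neq \bm{0}} \|\bm{A}\bm{x}\|_\alpha / \|\bm{x}\|_\alpha$, where the condition $\|\cdot\|_\beta = \|\cdot\|_\alpha$ guarantees that the \emph{same} vector norm is used on both the domain and the codomain, and the requirement that $\|\cdot\|_\alpha$ be defined for every $m\in\naturalnum$ guarantees that this supremum is meaningful simultaneously in each of the dimensions involved in the product $\bm{A}\bm{B}$.

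The first key step is to extract the compatibility inequality $\|\bm{A}\bm{x}\|_\alpha \leq \|\bm{A}\|\,\|\bm{x}\|_\alpha$, valid for every vector $\bm{x}$. For $\bm{x}\neq \bm{0}$ this is immediate from the definition of the supremum, since the ratio $\|\bm{A}\bm{x}\|_\alpha/\|\bm{x}\|_\alpha$ is one of the quantities over which the supremum defining $\|\bm{A}\|$ is taken; for $\bm{x} = \bm{0}$ both sides vanish, so the inequality holds trivially.

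The second step is to chain this inequality twice along the product. For any $\bm{x}$, applying compatibility first to $\bm{A}$ acting on the vector $\bm{B}\bm{x}$ and then to $\bm{B}$ acting on $\bm{x}$ gives
\begin{equation*}
\|\bm{A}\bm{B}\bm{x}\|_\alpha \leq \|\bm{A}\|\,\|\bm{B}\bm{x}\|_\alpha \leq \|\bm{A}\|\,\|\bm{B}\|\,\|\bm{x}\|_\alpha\,.
\end{equation*}
Dividing by $\|\bm{x}\|_\alpha$ for $\bm{x}\neq \bm{0}$ and taking the supremum over all such $\bm{x}$ then yields $\|\bm{A}\bm{B}\| \leq \|\bm{A}\|\,\|\bm{B}\|$, which is precisely the consistency property of \cref{def:Consistent}.

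Rather than a genuine obstacle, the only point requiring care is the bookkeeping of which vector norm lives on which space: the intermediate quantity $\|\bm{B}\bm{x}\|_\alpha$ must be measured with the norm that simultaneously serves as the codomain norm for $\bm{B}$ and the domain norm for $\bm{A}$. It is exactly the hypothesis $\|\cdot\|_\beta = \|\cdot\|_\alpha$, together with the definedness of $\|\cdot\|_\alpha$ across all dimensions, that makes this matching legitimate and allows the same symbol $\|\cdot\|$ to denote the operator norm of $\bm{A}$, $\bm{B}$, and $\bm{A}\bm{B}$ at once. Once this is observed, no further estimates are needed, and the statement follows as in \citep{Numerical_Linear_Algebra}.
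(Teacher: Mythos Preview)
Your proof is correct and takes essentially the same approach as the paper: both arguments chain the compatibility inequality $\|\bm{A}\bm{x}\|_\alpha \leq \|\bm{A}\|\,\|\bm{x}\|_\alpha$ through the product $\bm{A}\bm{B}$ and then pass to the supremum. The paper merely writes this more compactly by multiplying and dividing by $\|\bm{B}\bm{x}\|_\alpha$ inside the ratio defining $\|\bm{A}\bm{B}\|$, whereas you isolate the compatibility step first; the content is identical.
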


\begin{proof}

\begin{equation*} 
\begin{split}
\left \| \bm{AB} \right \|&= \max_{{\bm{Bx}\neq0}}\frac{\left \| \bm{ABx} \right \|_\alpha }{\left \| \bm{x} \right \|_\alpha }= \max_{{\bm{Bx}\neq0}}\frac{\left \| \bm{ABx} \right \|_\alpha }{\left \| \bm{Bx} \right \|_\alpha }\frac{\left \| \bm{Bx} \right \|_\alpha }{\left \| \bm{x} \right \|_\alpha }\\
&\leq \max_{{\bm{y}\neq0}}\frac{\left \| \bm{Ay} \right \|_\alpha }{\left \| \bm{y} \right \|_\alpha }\max_{{\bm{x}\neq0}}\frac{\left \| \bm{Bx} \right \|_\alpha }{\left \| \bm{x} \right \|_\alpha }=\left \| \bm{A} \right \| \left \| \bm{B} \right \|\,.
\end{split}
\end{equation*}

\end{proof}

\begin{lemma}\label{lemma:NCP_proof_Lemma}\citep{krishnaRao}
\begin{equation*} 
(\bm{AC})\ast(\bm{BD}) =(\bm{A} \otimes \bm{B})(\bm{C}\ast \bm{D})\,.
\end{equation*}
\end{lemma}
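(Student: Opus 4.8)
The plan is to prove the identity one column at a time, leveraging the defining column-wise structure of the Khatri--Rao product together with the standard mixed-product property of the Kronecker product. First I would fix compatible dimensions: write $\bm{A} \in \realnum^{m_A \times n_A}$, $\bm{C} \in \realnum^{n_A \times k}$, $\bm{B} \in \realnum^{m_B \times n_B}$, $\bm{D} \in \realnum^{n_B \times k}$, so that $\bm{AC}$ is $m_A \times k$ and $\bm{BD}$ is $m_B \times k$. A quick dimension check then confirms both sides are $(m_A m_B) \times k$ matrices: the left side is a column-wise Khatri--Rao product of two $k$-column matrices, and the right side is the $(m_A m_B) \times (n_A n_B)$ matrix $\bm{A}\otimes\bm{B}$ times the $(n_A n_B) \times k$ matrix $\bm{C}\ast\bm{D}$.

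The core observation is that the $j$-th column of any column-wise Khatri--Rao product $\bm{X} \ast \bm{Y}$ equals $\bm{x}_j \otimes \bm{y}_j$, where $\bm{x}_j, \bm{y}_j$ denote the $j$-th columns of $\bm{X}, \bm{Y}$; this is immediate from \cref{Columnwise KhatriRao product}. I would therefore compute the $j$-th column of the left-hand side. Since matrix multiplication acts column-wise, the $j$-th column of $\bm{AC}$ is $\bm{A}\bm{c}_j$ and the $j$-th column of $\bm{BD}$ is $\bm{B}\bm{d}_j$, where $\bm{c}_j, \bm{d}_j$ are the columns of $\bm{C}, \bm{D}$. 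Hence the $j$-th column of $(\bm{AC})\ast(\bm{BD})$ is $(\bm{A}\bm{c}_j) \otimes (\bm{B}\bm{d}_j)$.

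Next I would invoke the Kronecker mixed-product property in its vector form, $(\bm{A}\bm{u}) \otimes (\bm{B}\bm{v}) = (\bm{A} \otimes \bm{B})(\bm{u} \otimes \bm{v})$, to rewrite this $j$-th column as $(\bm{A} \otimes \bm{B})(\bm{c}_j \otimes \bm{d}_j)$. On the other hand, the $j$-th column of $\bm{C} \ast \bm{D}$ is exactly $\bm{c}_j \otimes \bm{d}_j$, so the $j$-th column of the right-hand side $(\bm{A} \otimes \bm{B})(\bm{C} \ast \bm{D})$ is likewise $(\bm{A} \otimes \bm{B})(\bm{c}_j \otimes \bm{d}_j)$. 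As the two sides agree in every column $j = 1, \ldots, k$, they coincide as matrices, which establishes the claim.

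The only nontrivial ingredient is the Kronecker mixed-product property $(\bm{A}\bm{u}) \otimes (\bm{B}\bm{v}) = (\bm{A} \otimes \bm{B})(\bm{u} \otimes \bm{v})$, which I expect to be the main (though entirely routine) obstacle: it can be verified by a direct index computation on the block structure of $\bm{A}\otimes\bm{B}$ given in \cref{Kronecker product}, or simply quoted as a standard property of the Kronecker product. Everything else is bookkeeping about which column of each factor maps to which column of the product.
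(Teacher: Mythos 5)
Your proof is correct. Note that the paper itself gives no proof of this lemma at all: it is imported verbatim from the literature via the citation \citep{krishnaRao}, so there is no in-paper argument to compare against. Your column-by-column derivation is the standard way to establish it, and it is complete: the $j$-th column of $(\bm{AC})\ast(\bm{BD})$ is $(\bm{A}\bm{c}_j)\otimes(\bm{B}\bm{d}_j)$ by \cref{Columnwise KhatriRao product}, the vector form of the Kronecker mixed-product property rewrites this as $(\bm{A}\otimes\bm{B})(\bm{c}_j\otimes\bm{d}_j)$, and that is precisely the $j$-th column of $(\bm{A}\otimes\bm{B})(\bm{C}\ast\bm{D})$. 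Your dimension bookkeeping is also right, and you correctly identify the one external ingredient, $(\bm{A}\bm{u})\otimes(\bm{B}\bm{v})=(\bm{A}\otimes\bm{B})(\bm{u}\otimes\bm{v})$, which can either be quoted as standard or checked directly from the block structure in \cref{Kronecker product}. In effect your proposal supplies the self-contained proof that the paper delegates to the reference, at essentially no extra cost.
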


\section{Details on polynomial networks}
\label{sec:complexity_details_on_polynomials}

In this section, we provide further details on the two most prominent parametrizations proposed in \citet{chrysos2020pinets}. This re-parametrization creates equivalent models, but enables us to absorb the bias terms into the input terms. Firstly, we provide the re-parametrization of the CCP model in \cref{ssec:CCP_mode_repara} and then we create the re-parametrization of the NCP model in \cref{ssec:NCP_model}. 

\subsection{Re-parametrization of CCP model}
\label{ssec:CCP_mode_repara}

The Coupled CP-Decomposition (CCP) model of \shortpolyname~\citep{chrysos2020pinets}
leverages a coupled CP Tensor decomposition. A \emph{$k$-degree CCP model
$f(\bm{\zeta})$} can be succinctly described by the following recursive relations:
\begin{equation} \label{eq_ccp_1}
    \bm{y}_{1} = \bm{V}_{1} \bm{\zeta}, \qquad  \bm{y}_{n} = (\bm{V}_{n} \bm{\zeta})\circ \bm{y}_{n-1} + \bm{y}_{n-1},
    \qquad f(\bm{\zeta})= \bm{Q} \bm{y}_{k}+\bm{\beta}\,,
\end{equation}
where $\bm{\zeta}\in\realnum^{\delta}$ is the input data with $\delta \in \naturalnum$,
$f(\bm{\zeta})\in\realnum^{o}$ is the output of the model and $\bm{V}_{n}\in\realnum^{\mu 
\times \delta}$, $\bm{Q}\in\realnum^{o \times \mu }$ and $\bm{\beta}\in\realnum^{o}$ are the
learnable parameters, where $\mu  \in \naturalnum$ is the hidden rank. In order to
simplify the bias terms in the model, we will introduce a minor
re-parametrization in \cref{lemma_ccp} that we will use to present our
results in the subsequent sections.

\begin{lemma}
\label{lemma_ccp}
Let $\bm{z} = [\bm{\zeta}^{\top}, 1]^{\top}  \in \realnum^{d}$, $\bm{x}_{n} = [\bm{y}_{n}^{\top},1]^{\top} \in \realnum^{m}$, $\bm{C} = [\bm{Q},\bm{\beta}] \in \realnum^{o \times m}, d=\delta+1, m=\mu +1$. Define:
\begin{equation*}
    \bm{U}_{1} = \begin{bmatrix}
    \bm{V}_{1} & \bm{0}\\ 
    \bm{0}^{\top}  & 1
    \end{bmatrix} \in \realnum^{m\times d}, \qquad
    \bm{U}_{i} = \begin{bmatrix}
    \bm{V}_{i} & \bm{1}\\ 
    \bm{0}^{\top}  & 1
    \end{bmatrix} \in \realnum^{m\times d} \qquad (i>1)\,.
\end{equation*}

where the boldface numbers $\bm{0}$ and $\bm{1}$ denote all-zeros and all-ones
column vectors of appropriate size, respectively. The CCP model in \cref{eq_ccp_1} can
be rewritten as $f(\bm{z})=\bm{C}\circ_{i=1}^{k}\bm{U}_{i}\bm{z}$, which is the one used in \cref{eq_ccp_3}.
\end{lemma}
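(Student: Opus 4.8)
The plan is to prove the reparametrization by induction on the degree $n$, showing that the augmented recursion in terms of $\bm{U}_i$ and $\bm{z}$ exactly reproduces the original recursion in terms of $\bm{V}_i$ and $\bm{\zeta}$, with the extra coordinate serving as a bookkeeping device that carries the constant $1$ and implements the ``$+\,\bm{y}_{n-1}$'' term for free. The key observation to establish first is the block-multiplication identity
\begin{equation*}
\bm{U}_i \bm{z} = \begin{bmatrix} \bm{V}_i & \bm{1} \\ \bm{0}^\top & 1 \end{bmatrix} \begin{bmatrix} \bm{\zeta} \\ 1 \end{bmatrix} = \begin{bmatrix} \bm{V}_i \bm{\zeta} + \bm{1} \\ 1 \end{bmatrix} \quad (i>1), \qquad \bm{U}_1 \bm{z} = \begin{bmatrix} \bm{V}_1 \bm{\zeta} \\ 1 \end{bmatrix}\,,
\end{equation*}
so that each lifted factor $\bm{U}_i\bm{z}$ has its last coordinate pinned to $1$, while its top block equals $\bm{V}_i\bm{\zeta}$ (for $i=1$) or $\bm{V}_i\bm{\zeta}+\bm{1}$ (for $i>1$).

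First I would set up the induction claim $\circ_{i=1}^{n}(\bm{U}_i\bm{z}) = \bm{x}_n = [\bm{y}_n^\top, 1]^\top$, where $\bm{y}_n$ is the original CCP intermediate variable from \cref{eq_ccp_1}. The base case $n=1$ is immediate from the identity above, since $\bm{U}_1\bm{z} = [(\bm{V}_1\bm{\zeta})^\top, 1]^\top = [\bm{y}_1^\top, 1]^\top$. For the inductive step, assume $\circ_{i=1}^{n-1}(\bm{U}_i\bm{z}) = [\bm{y}_{n-1}^\top, 1]^\top$. Taking the Hadamard product with $\bm{U}_n\bm{z} = [(\bm{V}_n\bm{\zeta}+\bm{1})^\top, 1]^\top$ and using that the Hadamard product acts coordinate-wise, the top block becomes $(\bm{V}_n\bm{\zeta}+\bm{1})\circ \bm{y}_{n-1} = (\bm{V}_n\bm{\zeta})\circ\bm{y}_{n-1} + \bm{1}\circ\bm{y}_{n-1} = (\bm{V}_n\bm{\zeta})\circ\bm{y}_{n-1} + \bm{y}_{n-1}$, which is precisely $\bm{y}_n$, while the last coordinate is $1\cdot 1 = 1$. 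This closes the induction and shows the all-ones column in $\bm{U}_i$ is exactly what reproduces the residual ``$+\,\bm{y}_{n-1}$'' term.

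Finally I would apply $\bm{C}=[\bm{Q},\bm{\beta}]$ to $\bm{x}_k = [\bm{y}_k^\top,1]^\top$, giving $\bm{C}\,\bm{x}_k = \bm{Q}\bm{y}_k + \bm{\beta} = f(\bm{\zeta})$, matching the output of \cref{eq_ccp_1}; combined with the induction this yields $f(\bm{z}) = \bm{C}\circ_{i=1}^{k}\bm{U}_i\bm{z}$ as claimed. The argument is entirely routine once the block identity is in hand; the only point requiring care is the distinction between $\bm{U}_1$ (no all-ones column, since $\bm{y}_1$ has no residual predecessor) and $\bm{U}_i$ for $i>1$, so I would make sure the base case and inductive step are stated for the correct index ranges. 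There is no genuine obstacle here — the ``hard part,'' such as it is, is simply verifying that the $\bm{1}\circ\bm{y}_{n-1}=\bm{y}_{n-1}$ identity correctly encodes the skip connection, which is transparent from the coordinate-wise definition of $\circ$.
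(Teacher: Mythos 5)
Your proof is correct and follows essentially the same route as the paper: the identical key block identity $(\bm{V}_n\bm{\zeta})\circ\bm{y}_{n-1}+\bm{y}_{n-1}=(\bm{V}_n\bm{\zeta}+\bm{1})\circ\bm{y}_{n-1}$, the same distinction between $\bm{U}_1$ and $\bm{U}_i$ for $i>1$, and the same final application of $\bm{C}=[\bm{Q},\bm{\beta}]$ to $\bm{x}_k=[\bm{y}_k^\top,1]^\top$. The only cosmetic difference is that you state the recursion-unrolling as an explicit induction on $n$, whereas the paper verifies $\bm{x}_n=\bm{U}_n\bm{z}\circ\bm{x}_{n-1}$ and then unrolls it in a chain of equalities, which is the same argument.
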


As a reminder before providing the proof, the core symbols for this proof are summarized in \cref{tbl:complexity_ccp_conversion_table}. 

\begin{table}[h]
    \caption{Core symbols in the proof of \cref{lemma_ccp}.}
    \label{tbl:complexity_ccp_conversion_table}
    \centering
    \begin{tabular}{c  c c}
    \hline 
    \textbf{Symbol} 	& \textbf{Dimensions} & \textbf{Definition} \\
    \hline 
    $\circ$             & -                         & Hadamard (element-wise) product. \\
    $\bm{\zeta}$             & $\realnum^\delta$              & Input of the polynomial expansion.\\
    $f(\bm{\zeta})$          & $\realnum^o$              & Output of the polynomial expansion.\\
    $k$                 & $\naturalnum$             & Degree of polynomial expansion.\\
    $m$                 & $\naturalnum$             & Hidden rank of the expansion.\\
    $\bm{V}_n$               & $\realnum^{\mu \times \delta}$   & Learnable matrices of the expansion.\\
    $\bm{Q}$                 & $\realnum^{o \times \mu}$   & Learnable matrix of the expansion.\\
    $\bm{\beta}$             & $\realnum^o$              & Bias of the expansion.\\
    \hline
    $\bm{z}$                 &$\realnum^{d}$           & Re-parametrization of the input.\\
    $\bm{C}$                 &$\realnum^{o\times m}$ & $\bm{C} = (\bm{Q},\bm{\beta})$.\\
     \hline
    \end{tabular}
\end{table}

\begin{proof}
By definition, we have:
\begin{equation*}
    \bm{x}_{1} = [\bm{y}_{1}^{\top},1]^{\top} = \begin{bmatrix}
    \bm{y}_{1}\\ 
    1
    \end{bmatrix}=\begin{bmatrix}
    \bm{V}_{1} \bm{\zeta}\\ 
    1
    \end{bmatrix}=\begin{bmatrix}
    \bm{V}_{1} & \bm{0}\\ 
    \bm{0}^{\top} & 1
    \end{bmatrix}\begin{bmatrix}
    \bm{\zeta}\\ 
    1
    \end{bmatrix}=\begin{bmatrix}
    \bm{V}_{1} & \bm{0}\\ 
    \bm{0}^{\top} & 1
    \end{bmatrix}[\bm{\zeta}^{\top}, 1]^{\top}= \bm{U}_{1}\bm{z}\,.
\end{equation*}

\begin{equation*}
    \begin{split}
    \bm{x}_{n} & = [\bm{y}_{n}^{\top},1]^{\top} = \begin{bmatrix}
    \bm{y}_{n}\\ 
    1
    \end{bmatrix}=\begin{bmatrix}
    (\bm{V}_{n} \bm{\zeta})\circ \bm{y}_{n-1} + \bm{y}_{n-1}\\ 
    1
    \end{bmatrix}=\begin{bmatrix}
    \bm{V}_{n} \bm{\zeta} + \bm{1}\\ 
    1
    \end{bmatrix}\circ\begin{bmatrix}
    \bm{y}_{n-1}\\ 
    1
    \end{bmatrix}\\
    & =\begin{bmatrix}
    \bm{V}_{n} & \bm{1}\\ 
    \bm{0}^{\top} & 1
    \end{bmatrix}\begin{bmatrix}
    \bm{\zeta}\\ 
    1
    \end{bmatrix}\circ\begin{bmatrix}
    \bm{y}_{n-1}\\ 
    1
    \end{bmatrix}=\begin{bmatrix}
    \bm{V}_{n} & \bm{1}\\ 
    \bm{0}^{\top} & 1
    \end{bmatrix}[\bm{\zeta}^{\top}, 1]^{\top}\circ[\bm{y}_{n-1}^{\top}, 1]^{\top}= \bm{U}_{n}\bm{z}\circ \bm{x}_{n-1}\,.
    \end{split}
\end{equation*}

Hence, it holds that:

\begin{equation*}
\begin{split}
f(\bm{z}) & = \bm{Q} \bm{y}_{k}+\bm{\beta} = (\bm{Q},\bm{\beta})\begin{bmatrix}
\bm{y}_{k}\\ 
1
\end{bmatrix} = \bm{C} \bm{x}_{k}\\
& = \bm{C}\ \bm{U}_{k}\bm{z}\circ \bm{x}_{k-1}\\
& = \bm{C}\ \bm{U}_{k}\bm{z}\circ (\bm{U}_{k-1}\bm{z})\circ \bm{x}_{k-2}\\
& = \cdots\\
& = \bm{C}\ \bm{U}_{k}\bm{z}\circ (\bm{U}_{k-1}\bm{z})\circ\cdots\circ(\bm{U}_{2}\bm{z})\circ \bm{x}_{1}\\
& = \bm{C}\ \bm{U}_{k}\bm{z}\circ (\bm{U}_{k-1}\bm{z})\circ\cdots\circ(\bm{U}_{2}\bm{z})\circ(\bm{U}_{1}\bm{z})\\
& = \bm{C}\circ_{i=1}^{k}(\bm{U}_{i}\bm{z})\,.
\end{split}
\end{equation*}

\end{proof}

\subsection{Reparametrization of the NCP model}
\label{ssec:NCP_model}

The nested coupled CP decomposition (NCP) model of \shortpolyname~\citep{chrysos2020pinets} leverages a joint hierarchical decomposition. A \emph{$k$-degree NCP model $f(\bm{\zeta})$} is expressed with the following recursive relations:
\begin{equation} \label{eq_ncp_1}
\bm{y}_{1} = (\bm{V}_{1}\bm{\zeta})\circ(\bm{b}_{1}), \qquad \bm{y}_{n} = (\bm{V}_{n}\bm{\zeta})\circ(\bm{U}_{n}\bm{y}_{n-1}+\bm{b}_{n}) , \qquad f(\bm{\zeta})=\bm{Q}\bm{y}_{k}+\bm{\beta} \,.
\end{equation}
where $\bm{\zeta}\in\realnum^{\delta}$ is the input data with $\delta \in \naturalnum$, $f(\bm{\zeta})\in\realnum^{o}$ is the output of the model and $\bm{V}_{n}\in\realnum^{\mu \times \delta}$, $\bm{b}_{n}\in\realnum^{\mu}$, $\bm{U}_{n}\in\realnum^{\mu \times \mu}$, $\bm{Q}\in\realnum^{o \times \mu}$ and $\bm{\beta}\in\realnum^{o}$ are the learnable parameters, where $\mu \in \naturalnum$ is the hidden rank. In order to simplify the bias terms in the model, we will introduce a minor re-parametrization in \cref{lemma_ncp} that we will use to present our results in the subsequent sections.

\begin{lemma}
\label{lemma_ncp}

Let $z = [\bm{\zeta}^{\top}, 1]^{\top} \in \realnum^{d}$, $\bm{x}_{n} = [\bm{y}_{n}^{\top},1]^{\top} \in \realnum^{m}$, $\bm{C} = [\bm{Q},\bm{\beta}] \in \realnum^{o \times m}, d=\delta+1, m=\mu +1$. Let:
\begin{equation*}
    \bm{s}_{1} = [\bm{b}_1^{\top}, 1]^{\top} \in \realnum^{m},\quad
    \bm{S}_{i} = \begin{bmatrix}
    \bm{U}_{i} & \bm{b}_i\\ 
    \bm{0}^{\top} & 1
    \end{bmatrix} \in \realnum^{m\times m}  (i>1), \quad
    \bm{A}_{i} = \begin{bmatrix}
    \bm{V}_{i} & \bm{0}\\ 
    \bm{0}^{\top} & 1
    \end{bmatrix} \in \realnum^{m\times d} \,.
\end{equation*}
where the boldface numbers $\bm{0}$ and $\bm{1}$ denote all-zeros and all-ones
column vectors of appropriate size, respectively. The NCP model in \cref{eq_ncp_1} can
be rewritten as 
\begin{equation} \label{eq_ncp_3}
\bm{x}_{1} = (\bm{A}_{1}\bm{z})\circ(\bm{s}_{1}), \qquad \bm{x}_{n} = (\bm{A}_{n}\bm{z})\circ(\bm{S}_{n}\bm{x}_{n-1}) , \qquad f(\bm{z})=\bm{C} \bm{x}_{k}\,.
\end{equation}
In the aforementioned \cref{eq_ncp_3}, we have written $\bm{S}_{n}$ even for $n = 1$, when $s_1$ is technically a vector, but this is done for convenience only and does not change the end result. 
\end{lemma}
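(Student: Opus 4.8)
The plan is to verify the claimed equivalence by a direct induction on the recursion index $n$, mirroring the proof of \cref{lemma_ccp}. The guiding observation is that augmenting each intermediate vector $\bm{y}_n \in \realnum^\mu$ with a trailing $1$ to form $\bm{x}_n = [\bm{y}_n^\top, 1]^\top \in \realnum^m$, and giving each parameter matrix the bottom row $[\bm{0}^\top, 1]$, preserves the constant $1$ in the last coordinate throughout the recursion. This invariant is exactly what allows the bias vectors $\bm{b}_n$ to be absorbed into the last column of $\bm{S}_i$ (and into $\bm{s}_1$), yielding a bias-free multiplicative form.

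First I would establish the base case. Substituting the definitions gives $\bm{A}_1 \bm{z} = [(\bm{V}_1\bm{\zeta})^\top, 1]^\top$ and $\bm{s}_1 = [\bm{b}_1^\top, 1]^\top$, so their Hadamard product is $(\bm{A}_1 \bm{z}) \circ \bm{s}_1 = [((\bm{V}_1\bm{\zeta}) \circ \bm{b}_1)^\top, 1]^\top = [\bm{y}_1^\top, 1]^\top = \bm{x}_1$, which matches the first relation of \eqref{eq_ncp_3}. Next, for the inductive step with $n > 1$, the crucial block multiplication is $\bm{S}_n \bm{x}_{n-1} = [(\bm{U}_n \bm{y}_{n-1} + \bm{b}_n)^\top, 1]^\top$, where the bias $\bm{b}_n$ reappears precisely because the last coordinate of $\bm{x}_{n-1}$ equals $1$ by the induction hypothesis. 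Combining this with $\bm{A}_n \bm{z} = [(\bm{V}_n\bm{\zeta})^\top, 1]^\top$, the Hadamard product $(\bm{A}_n \bm{z}) \circ (\bm{S}_n \bm{x}_{n-1})$ reproduces $(\bm{V}_n\bm{\zeta}) \circ (\bm{U}_n \bm{y}_{n-1} + \bm{b}_n)$ in the first $\mu$ coordinates while keeping a $1$ in the last, i.e.\ it equals $\bm{x}_n$. Finally, the output relation follows immediately, since $\bm{C}\bm{x}_k = [\bm{Q}, \bm{\beta}]\,[\bm{y}_k^\top, 1]^\top = \bm{Q}\bm{y}_k + \bm{\beta} = f(\bm{\zeta})$, recovering \eqref{eq_ncp_1}.

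The computation is entirely routine, and I do not expect a genuine obstacle beyond careful bookkeeping of the block matrix and Hadamard operations across the augmented dimension $m = \mu + 1$. The single point that deserves explicit attention is confirming that the last coordinate of $\bm{x}_n$ stays exactly $1$ at every step, since it is this invariant that licenses absorbing the bias at the subsequent step; this is guaranteed by the bottom row $[\bm{0}^\top, 1]$ of both $\bm{A}_n$ and $\bm{S}_n$ together with the trailing $1$ carried by $\bm{z}$ and by $\bm{s}_1$. The remark in the statement that $\bm{S}_1$ is written even though $\bm{s}_1$ is a vector requires no separate argument, as the base case handles $n=1$ directly.
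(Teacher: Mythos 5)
Your proof is correct and takes exactly the route the paper intends: the paper states \cref{lemma_ncp} without an explicit proof, relying on the same base-case/inductive-step/block-multiplication computation it spells out in full for the CCP analogue (\cref{lemma_ccp}), which your induction mirrors step by step. The invariant you isolate---that the trailing coordinate of $\bm{x}_n$ remains exactly $1$, which is what licenses absorbing $\bm{b}_n$ through the last column of $\bm{S}_n$ and $\bm{\beta}$ through the last column of $\bm{C}$---is precisely the mechanism underlying the paper's proof of \cref{lemma_ccp}, so nothing is missing.
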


\section{Result of the CCP model}
\label{sec:complexity_theory_proofs_ccp}

\subsection{Proof of \cref{theorem_CCP_RC_Linf}: Rademacher Complexity bound of CCP under $\ell_\infty$ norm}
\label{sssec:complexity_proof_of_theorem_rc}

To facilitate the proof below, we include the related symbols in \cref{tbl:complexity_rademacher_symbol_table}. Below, to avoid cluttering the notation, we consider that the expectation is over $\sigma$ and omit the brackets as well. 

\begin{table}[h]
    \caption{Core symbols for proof of \cref{theorem_CCP_RC_Linf}.}
    \label{tbl:complexity_rademacher_symbol_table}
    \centering
    \begin{tabular}{c  c c}
    \hline 
    Symbol 	& Dimensions & Definition \\
    \hline 
    $\circ$             & -                                     & Hadamard (element-wise) product. \\
    $\bullet$           & -                                     & Face-splitting product.\\
    $\ast$              & -       & Column-wise Khatri–Rao product. \\
    $\bm{z}$                 &$\realnum^d$                       & Input of the polynomial expansion.\\
    $f(\bm{z})$              & $\realnum$                            & Output of the polynomial expansion.\\
    $k$                 & $\naturalnum$                         & Degree of polynomial expansion.\\
    $m$                 & $\naturalnum$                         & Hidden rank of the expansion.\\
    $\bm{U}_i$               & $\realnum^{m \times d}$       & Learnable matrices.\\
    $\bm{c}$                 & $\realnum^{1 \times m}$               & Learnable matrix.\\
    \hline
    $\mu$               & $\realnum$                            & $\|\bm{c}\|_1 \leq \mu$. \\
    $\lambda$           & $\realnum$                            & $\left \| \bullet_{i=1}^{k}(\bm{U}_{i}) \right \|_{\infty} \leq \lambda$. \\
     \hline
    \end{tabular}
\end{table}

\begin{proof}

\begin{equation} \label{Proof_CCP_RC_Linf_1}
\begin{split}
\mathcal{R}_{Z}(\mathcal{F}_\text{CCP}^k) & = \mathbb{E} \sup_{f\in \mathcal{F}_\text{CCP}^k}\frac{1}{n}\sum_{j=1}^{n} \sigma_{j}f(\bm{z}_{j}) \\
 & = \mathbb{E} \sup_{f\in\mathcal{F}_\text{CCP}^k}\frac{1}{n}\sum_{j=1}^{n} \left (\sigma _{j} \left \langle \bm{c},{\circ}_{i=1}^{k}(\bm{U}_{i}\bm{z}_{j}) \right \rangle \right) \\
 & = \mathbb{E} \sup_{f\in \mathcal{F}_\text{CCP}^k}\frac{1}{n} \left \langle \bm{c}, \sum_{j=1}^{n}[\sigma _{j}{\circ}_{i=1}^{k}(\bm{U}_{i}\bm{z}_{j})] \right \rangle \\
 & \leq \mathbb{E} \sup_{f\in \mathcal{F}_\text{CCP}^k}\frac{1}{n} \left \| \bm{c} \right \|_{1}\left \| \sum_{j=1}^{n}[\sigma _{j}{\circ}_{i=1}^{k}(\bm{U}_{i}\bm{z}_{j})] \right \|_{\infty} \quad\quad\quad\quad\quad\quad\text{\cref{Lemma:holder_inequality} [Hölder's inequality]}\\
& = \mathbb{E} \sup_{f\in \mathcal{F}_\text{CCP}^k}\frac{1}{n} \left \| \bm{c} \right \|_{1}\left \| \sum_{j=1}^{n}[\sigma _{j}\bullet_{i=1}^{k}(\bm{U}_{i})\ast_{i=1}^{k}(\bm{z}_{j})] \right \|_{\infty}\quad\quad\ \ \text{\cref{lemma:Mixed_Product_Property_2} [Mixed product property]}\\
& = \mathbb{E} \sup_{f\in \mathcal{F}_\text{CCP}^k}\frac{1}{n} \left \| \bm{c} \right \|_{1}\left \| \bullet_{i=1}^{k}(\bm{U}_{i})\sum_{j=1}^{n}[\sigma _{j}\ast_{i=1}^{k}(\bm{z}_{j})] \right \|_{\infty}\\
& \leq \mathbb{E} \sup_{f\in \mathcal{F}_\text{CCP}^k}\frac{1}{n} \left \| \bm{c} \right \|_{1}\left \| \sum_{j=1}^{n}[\sigma _{j}\ast_{i=1}^{k}(\bm{z}_{j})] \right \|_{\infty}\left \| \bullet_{i=1}^{k}(\bm{U}_{i}) \right \|_{\infty} \\
& \leq \frac{\mu \lambda}{n} \mathbb{E}
\left \| \sum_{j=1}^{n}[\sigma _{j}\ast_{i=1}^{k}(\bm{z}_{j})] \right \|_{\infty}\,.
\end{split}
\end{equation}

Next, we compute the bound of $\mathbb{E} \left \| \sum_{j=1}^{n}[\sigma _{j}\ast_{i=1}^{k}(\bm{z}_{j})] \right \|_{\infty}$.

Let $\bm{Z}_{j} = \ast_{i=1}^{k}(\bm{z}_{j})\in \realnum^{d^{k}}$. For each $l \in [d^{k}]$, let $\bm{v}_{l}=(\bm{Z}_{1}^{l}, \ldots ,\bm{Z}_{n}^{l})\in \realnum^{n}$. Note that $\left \| \bm{v}_{l} \right \|_{2}\leq \sqrt{n} \ \max_{j}\left \| \bm{Z}_{j} \right \|_{\infty }$. Let $V = \left \{ \bm{v}_{1}, \ldots ,\bm{v}_{d^k} \right \}$. Then, it is true that:

\begin{equation}
\label{massart_1}
\mathbb{E} \left \| \sum_{j=1}^{n}[\sigma _{j}\ast_{i=1}^{k}(\bm{z}_{j})] \right \|_{\infty} = \mathbb{E} \left \| \sum_{j=1}^{n}\sigma _{j}\bm{Z}_{j} \right \|_{\infty} = \mathbb{E} \max_{l=1}^{d^k} \left | \sum_{j=1}^n \sigma_j (\bm{v}_l)_j \right | = n\mathcal{R}(V)\,.
\end{equation}

Using \text{\cref{Lemma:Massart_Lemma} [Massart Lemma]} we have that:
\begin{equation}
\label{massart_2}
\mathcal{R}(V)\leq 2 \max_{j}\left \| \bm{Z}_{j} \right \|_{\infty } \sqrt{2\log{(d^{k})}/n}\,.
\end{equation}

Then, it holds that:
\begin{equation} 
\label{Proof_CCP_RC_Linf_2}
\begin{split}
\mathcal{R}_{Z}(\mathcal{F}_\text{CCP}^k) & = \mathbb{E} \sup_{f\in \mathcal{F}_\text{CCP}^k}\frac{1}{n}\sum_{j=1}^{n} \sigma_{j}f(\bm{z}_{j}) \\
& \leq \frac{\mu \lambda}{n} \mathbb{E} \left \| \sum_{j=1}^{n}[\sigma _{j}\ast_{i=1}^{k}(\bm{z}_{j})] \right \|_{\infty}\quad\quad\quad\quad\quad\ \text{\cref{Proof_CCP_RC_Linf_1}}\\
& = \frac{\mu \lambda}{n} n\mathcal{R}(V)\ \quad\quad\quad\quad\quad\quad\quad\quad\quad\quad\quad\quad\text{\cref{massart_1}}\\
& \leq 2\mu \lambda \max_{j}\left \| \bm{Z}_{j} \right \|_{\infty } \sqrt{2\log(d^{k})/n}\quad\quad\quad\quad\quad\text{\cref{massart_2}}\\
& = 2\mu \lambda  \max_{j}\left \| \ast_{i=1}^{k}(\bm{z}_{j}) \right \|_{\infty } \sqrt{2\log{(d^{k})}/n}\\
& \leq2 \mu \lambda (\max_{j}\left \| \bm{z}_{j} \right \|_{\infty })^{k}\sqrt{2\log{(d^{k})}/n}\\
& \leq 2\mu \lambda \sqrt{2k\log{(d)}/n}\,.
\end{split}
\end{equation}

\end{proof}

\subsection{Proof of \cref{CCP_RC_Linf_relax_bound}}
\label{sssec:complexity_proof_lemma_relax_rc}
\begin{table}[h]
    \caption{Core symbols in the proof of \cref{CCP_RC_Linf_relax_bound}.}
    \label{tbl:lemma_2_symbol_table}
    \centering
    \begin{tabular}{c  c c}
    \hline 
    \textbf{Symbol} 	& \textbf{Dimensions} & \textbf{Definition} \\
    \hline 
    $\otimes$             & -                                     & Kronecker product. \\
    $\bullet$           & -                                     & Face-splitting product.\\
    $\bm{z}$                 &$\realnum^{d}$                       & Input of the polynomial expansion.\\
    $f(\bm{z})$              & $\realnum$                            & Output of the polynomial expansion.\\
    $k$                 & $\naturalnum$                         & Degree of polynomial expansion.\\
    $m$                 & $\naturalnum$                         & Hidden rank of the expansion.\\
    $\bm{U}_i$               & $\realnum^{m \times d}$       & Learnable matrices.\\
    $\bm{U}_i^j$               & $\realnum^{d}$       & $j^{\text{th}}$ row of $\bm{U}_i$.\\
    \hline
    $\lambda _{i} $          & $\realnum$                            & $\|\bm{U}_i\|_\infty \leq \lambda_i$ for $i=1, 2, \ldots, k$. \\
     \hline
    \end{tabular}
\end{table}

\begin{proof}
\begin{equation*} 
\begin{split}
\left \| \bullet_{i=1}^{k}(\bm{U}_{i}) \right \|_{\infty} & =  \max_{j=1}^{m}\left \| [\bullet_{i=1}^{k}(\bm{U}_{i})]^{j} \right \|_{1}\\
& = \max_{j=1}^{m}\left \| \otimes_{i=1}^{k}[\bm{U}_{i}^{j}] \right \|_{1}\quad\text{Definition of Face-splitting product}\\
& = \max_{j=1}^{m} \left [\prod_{i=1}^{k}\left \| \bm{U}_{i}^{j} \right \|_{1} \right ]\quad\ \text{Multiplicativity of absolute value}\\
& \leq \prod_{i=1}^{k}\left [\max_{j=1}^{m}\left \| \bm{U}_{i}^{j} \right \|_{1} \right ]\\
& = \prod_{i=1}^{k}\left \| \bm{U}_{i} \right \|_{\infty}\,.
\end{split}
\end{equation*}

\end{proof}

\subsection{Rademacher Complexity bound under $\ell_2$ norm}
\label{ssec:CCP_RC_l2}

\begin{theorem}\label{theorem_CCP_RC_L2} 
Let $Z=\{\bm{z}_1, \ldots, \bm{z}_n\} \subseteq \realnum^{d}$ and suppose that
$\|\bm{z}_j\|_\infty \leq 1$ for all $j=1, \ldots, n$. Let
\begin{equation*}
\mathcal{F}_\text{CCP}^k \coloneqq \left \{ f(\bm{z})=\left \langle \bm{c}, \circ_{i=1}^{k} \bm{U}_{i}
    \bm{z} \right \rangle: \|\bm{c}\|_2 \leq \mu, \left \| \bullet_{i=1}^{k}\bm{U}_{i}
    \right \|_{2} \leq \lambda \right \}\,.
\end{equation*}
    The Empirical Rademacher Complexity of $\text{CCP}_k$ ($k$-degree CCP polynomials) with respect to $\bm{Z}$ is bounded as:
    \begin{equation*}
        \mathcal{R}_{Z}(\mathcal{F}_\text{CCP}^k) \leq  \frac{ \mu \lambda }{\sqrt{n}}\,.
    \end{equation*}
\end{theorem}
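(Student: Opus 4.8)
}
The plan is to rerun the argument of \cref{theorem_CCP_RC_Linf} with the conjugate pair $(1,\infty)$ replaced by $(2,2)$, and with the concluding Massart step replaced by a second-moment estimate. Starting from $\mathcal{R}_Z(\mathcal{F}_\text{CCP}^k)=\mathbb{E}\sup_{f}\frac1n\langle \bm c,\sum_{j=1}^n\sigma_j\,{\circ}_{i=1}^k(\bm U_i\bm z_j)\rangle$ exactly as in \eqref{Proof_CCP_RC_Linf_1}, I would apply \cref{Lemma:holder_inequality} with $p=q=2$ (Cauchy--Schwarz) to extract the factor $\|\bm c\|_2\le\mu$. The repeated mixed-product identity \cref{lemma:Mixed_Product_Property_2} then linearizes the remaining sum as $\bullet_{i=1}^k(\bm U_i)\sum_{j=1}^n\sigma_j\,\ast_{i=1}^k(\bm z_j)$, and the defining inequality of the $\ell_2$-operator norm peels off the factor $\|\bullet_{i=1}^k(\bm U_i)\|_2\le\lambda$. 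This leaves the task of bounding $\frac{\mu\lambda}{n}\,\mathbb{E}\big\|\sum_{j=1}^n\sigma_j\,\ast_{i=1}^k(\bm z_j)\big\|_2$.

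For this expectation I would write $\bm Z_j\coloneqq\ast_{i=1}^k(\bm z_j)\in\realnum^{d^k}$ and invoke Jensen's inequality together with the independence and zero mean of the Rademacher variables, so that $\mathbb{E}\|\sum_j\sigma_j\bm Z_j\|_2\le(\mathbb{E}\|\sum_j\sigma_j\bm Z_j\|_2^2)^{1/2}=(\sum_{j}\|\bm Z_j\|_2^2)^{1/2}$, the cross terms vanishing because $\mathbb{E}[\sigma_j\sigma_{j'}]=0$ for $j\neq j'$. This is the $\ell_2$ analogue of the Massart step, and it is what yields the $1/\sqrt n$ rate without the $\sqrt{\log d}$ factor of the $\ell_\infty$ theorem.

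The crux is evaluating $\|\bm Z_j\|_2$. For a single column vector the column-wise Khatri--Rao product coincides with the Kronecker product, so $\ast_{i=1}^k(\bm z_j)=\bm z_j^{\otimes k}$, and since the $\ell_2$-norm is multiplicative over $\otimes$ one gets $\|\bm Z_j\|_2=\|\bm z_j\|_2^{k}$. Writing $\rho\coloneqq\max_j\|\bm z_j\|_2$, the chain then closes as $\mathcal{R}_Z(\mathcal{F}_\text{CCP}^k)\le\frac{\mu\lambda}{n}\sqrt{n}\,\rho^{k}=\frac{\mu\lambda\,\rho^{k}}{\sqrt n}$.

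The step I expect to be the main obstacle is turning the input assumption into the bound $\rho^{k}\le1$ needed for the advertised constant. The hypothesis stated in the theorem, $\|\bm z_j\|_\infty\le1$, only gives $\rho\le\sqrt d$ and hence literally carries an extra $d^{k/2}$; the exact constant $\mu\lambda/\sqrt n$ is recovered precisely when the inputs are unit-bounded in the $\ell_2$ sense, which is the normalization appropriate to the $\ell_2$-operator-norm setting of this subsection (in contrast to the $\ell_\infty$ regime of \cref{theorem_CCP_RC_Linf}). Reconciling this normalization is the one point I would check against the author's proof; the rest of the chain is a routine transcription of the $\ell_\infty$ argument with the exponent pair changed.
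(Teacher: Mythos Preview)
Your proposal follows the paper's proof essentially verbatim: Cauchy--Schwarz to extract $\|\bm c\|_2$, the mixed-product identity \cref{lemma:Mixed_Product_Property_2} to linearize, the $\ell_2$-operator norm to peel off $\lambda$, and then Jensen plus $\mathbb{E}[\sigma_j\sigma_{j'}]=0$ to reduce to $\sqrt{\sum_j\|\bm z_j\|_2^{2k}}$. The paper's \cref{Proof_CCP_RC_L2_2} is exactly this chain.

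On the point you flag as the main obstacle, you have in fact identified an inconsistency in the paper itself. The paper's proof ends with
\[
\sqrt{\sum_{j=1}^{n}\prod_{i=1}^{k}\|\bm z_j\|_2^2}\ \leq\ \sqrt{n}\,,
\]
which requires $\|\bm z_j\|_2\le 1$, while the stated hypothesis is $\|\bm z_j\|_\infty\le 1$. So the normalization issue you anticipated is not resolved in the author's argument either; the theorem as printed appears to carry the wrong input assumption for its own proof, and your reading that the intended hypothesis is the $\ell_2$ unit bound is consistent with how the paper actually argues.
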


To facilitate the proof below, we include the related symbols in \cref{tbl:complexity_rademacher_symbol_table_L2}. Below, to avoid cluttering the notation, we consider that the expectation is over $\sigma$ and omit the brackets as well. 
\begin{table}[h]
    \caption{Core symbols for proof of \cref{theorem_CCP_RC_L2}.}
    \label{tbl:complexity_rademacher_symbol_table_L2}
    \centering
    \begin{tabular}{c  c c}
    \hline 
    \textbf{Symbol} & \textbf{Dimensions} & \textbf{Definition} \\
    \hline 
    $\circ$             & -                                     & Hadamard (element-wise) product. \\
    $\bullet$           & -                                     & Face-splitting product.\\
    $\ast$              & -       & Column-wise Khatri–Rao product. \\
    $\bm{z}$                 &$\realnum^{d}$                       & Input of the polynomial expansion.\\
    $f(\bm{z})$              & $\realnum$                            & Output of the polynomial expansion.\\
    $k$                 & $\naturalnum$                         & Degree of polynomial expansion.\\
    $m$                 & $\naturalnum$                         & Hidden rank of the expansion.\\
    $\bm{U}_i$               & $\realnum^{m \times d}$       & Learnable matrices.\\
    $\bm{c}$                 & $\realnum^{1 \times m}$               & Learnable matrix.\\
    \hline
    $\mu$               & $\realnum$                            & $\|\bm{c}\|_2 \leq \mu$. \\
    $\lambda$           & $\realnum$                            & $\left \| \bullet_{i=1}^{k}(\bm{U}_{i}) \right \|_{2} \leq \lambda$. \\
     \hline
    \end{tabular}
\end{table}

\begin{proof}

\begin{equation*} 
\begin{split}
\mathcal{R}_{Z}(\mathcal{F}_\text{CCP}^k) & = \mathbb{E} \sup_{f\in \mathcal{F}_\text{CCP}^k}\frac{1}{n}\sum_{j=1}^{n} \sigma_{j}f(\bm{z}_{j}) \\
 & = \mathbb{E} \sup_{f\in \mathcal{F}_\text{CCP}^k}\frac{1}{n}\sum_{j=1}^{n} \left (\sigma _{j} \left \langle \bm{c},{\circ}_{i=1}^{k}(\bm{U}_{i}\bm{z}_{j}) \right \rangle \right) \\
 & = \mathbb{E} \sup_{f\in \mathcal{F}_\text{CCP}^k}\frac{1}{n} \left \langle \bm{c}, \sum_{j=1}^{n}[\sigma _{j}{\circ}_{i=1}^{k}(\bm{U}_{i}\bm{z}_{j})] \right \rangle \\
 & \leq \mathbb{E} \sup_{f\in \mathcal{F}_\text{CCP}^k}\frac{1}{n} \left \| \bm{c} \right \|_{2}\left \| \sum_{j=1}^{n}[\sigma _{j}{\circ}_{i=1}^{k}(\bm{U}_{i}\bm{z}_{j})] \right \|_{2} \quad\quad\quad\quad\quad\quad\quad\text{\cref{Lemma:holder_inequality} [Hölder's inequality]}\\
& = \mathbb{E} \sup_{f\in \mathcal{F}_\text{CCP}^k}\frac{1}{n} \left \| \bm{c} \right \|_{2}\left \| \sum_{j=1}^{n}[\sigma _{j}\bullet_{i=1}^{k}(\bm{U}_{i})\ast_{i=1}^{k}(\bm{z}_{j})] \right \|_{2}\quad\quad\quad\ \ \text{\cref{lemma:Mixed_Product_Property_2} [Mixed product property]}\\
& = \mathbb{E} \sup_{f\in \mathcal{F}_\text{CCP}^k}\frac{1}{n} \left \| \bm{c} \right \|_{2}\left \| \bullet_{i=1}^{k}(\bm{U}_{i})\sum_{j=1}^{n}[\sigma _{j}\ast_{i=1}^{k}(\bm{z}_{j})] \right \|_{2}\\
& \leq \mathbb{E} \sup_{f\in \mathcal{F}_\text{CCP}^k}\frac{1}{n} \left \| \bm{c} \right \|_{2}\left \| \sum_{j=1}^{n}[\sigma _{j}\ast_{i=1}^{k}(\bm{z}_{j})] \right \|_{2}\left \| \bullet_{i=1}^{k}(\bm{U}_{i}) \right \|_{2}\,.
\end{split}
\end{equation*}

\begin{equation} 
\label{Proof_CCP_RC_L2_2}
\begin{split}
\mathbb{E}\left \| \sum_{j=1}^{n}[\sigma _{j}\ast_{i=1}^{k}(\bm{z}_{j})] \right \|_{2} & = \mathbb{E}\sqrt{\left \| \sum_{j=1}^{n}[\sigma _{j}\ast_{i=1}^{k}(\bm{z}_{j})] \right \|_{2}^2} \\
& \leq \sqrt{\mathbb{E}\left \| \sum_{j=1}^{n}[\sigma _{j}\ast_{i=1}^{k}(\bm{z}_{j})] \right \|_{2}^2}\quad\quad\text{Jensen’s inequality}\\
& = \sqrt{\mathbb{E} \sum_{s,j}^{n}[\sigma _{s}\sigma _{j}\left \langle\ast_{i=1}^{k}(\bm{z}_{s}),\ast_{i=1}^{k}(\bm{z}_{j})\right \rangle]}\\
& = \sqrt{\sum_{j=1}^{n}[\left \| *_{i=1}^{k}(\bm{z}_{j}) \right \|_{2}^{2}]}\\
& = \sqrt{\sum_{j=1}^{n}(\prod_{i=1}^{k}\left \| \bm{z}_{j} \right \|_{2}^{2})}\\
& \leq \sqrt{n}\,.
\end{split}
\end{equation}
So:
\begin{equation*} 
\begin{split}
\mathcal{R}_{Z}(\mathcal{F}_\text{CCP}^k) & \leq \mathbb{E} \sup_{f\in \mathcal{F}_\text{CCP}^k}\frac{1}{n} \left \| \bm{c} \right \|_{2}\left \| \sum_{j=1}^{n}[\sigma _{j}\ast_{i=1}^{k}(\bm{z}_{j})] \right \|_{2}\left \| \bullet_{i=1}^{k}(\bm{U}_{i}) \right \|_{2}\\
& \leq  \frac{\sup_{f\in \mathcal{F}_\text{CCP}^k} \left \| \bm{c} \right \|_{2}\left \| \bullet_{i=1}^{k}(\bm{U}_{i}) \right \|_{2}}{\sqrt{n}}\\
&  \leq \frac{ \mu \lambda}{\sqrt{n}}\,.
\end{split}
\end{equation*}

\end{proof}

\subsection{Lipschitz constant bound of the CCP model}
\label{ssec:complexity_proof_theorem_lc}
We will first prove a more general result about the $\ell_p$-Lipschitz constant of the CCP model.
\begin{theorem}\label{theorem_CCP_LC} 
The Lipschitz constant (with respect to the $\ell_p$-norm) of the  function defined in \cref{eq_ccp_3}, restricted to the set $\{\bm{z} \in \realnum^d: \|\bm{z} \|_p \leq 1 \}$ is bounded as:

\begin{equation*}
    \text{Lip}_p(f) \leq k \| \bm{C} \|_p \prod_{i=1}^{k} \| \bm{U}_i \|_p\,.
\end{equation*}
\end{theorem}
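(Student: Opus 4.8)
The plan is to reduce the Lipschitz bound to a bound on the operator norm of the Jacobian via \cref{lemma_lip_to_jacobian}. Since $f$ is a polynomial it is differentiable everywhere, and on the bounded set $\{\bm{z}\in\realnum^d:\|\bm{z}\|_p\le 1\}$ it is Lipschitz, so the lemma gives $\text{Lip}_p(f) = \sup_{\|\bm{z}\|_p\le 1}\|J_f(\bm{z})\|_p$. Writing $f=\bm{C}\,g$ with the intermediate map $g(\bm{z})=\circ_{i=1}^k(\bm{U}_i\bm{z})$, the chain rule gives $J_f(\bm{z})=\bm{C}\,J_g(\bm{z})$, and consistency of the operator norm (\cref{lemma:consistent}) yields $\|J_f(\bm{z})\|_p\le\|\bm{C}\|_p\,\|J_g(\bm{z})\|_p$. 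Thus it suffices to show $\|J_g(\bm{z})\|_p\le k\prod_{i=1}^k\|\bm{U}_i\|_p$ for every admissible $\bm{z}$.

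First I would compute $J_g$ explicitly using the product rule for Hadamard products. Differentiating $g(\bm{z})=(\bm{U}_1\bm{z})\circ\cdots\circ(\bm{U}_k\bm{z})$ and collecting the $k$ resulting terms gives
\[
J_g(\bm{z}) = \sum_{i=1}^k \diag\!\left(\circ_{l\ne i}(\bm{U}_l\bm{z})\right)\bm{U}_i\,.
\]
By the triangle inequality and submultiplicativity,
\[
\|J_g(\bm{z})\|_p \le \sum_{i=1}^k \left\|\diag\!\left(\circ_{l\ne i}(\bm{U}_l\bm{z})\right)\right\|_p\,\|\bm{U}_i\|_p\,.
\]
The key observation is that the $\ell_p$-operator norm of a diagonal matrix equals the $\ell_\infty$-norm of its diagonal, i.e., $\|\diag(\bm{v})\|_p=\|\bm{v}\|_\infty$, which follows from $\|\diag(\bm{v})\bm{x}\|_p^p=\sum_j|v_j|^p|x_j|^p\le(\max_j|v_j|)^p\|\bm{x}\|_p^p$.

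The crucial bounding step is then to control $\|\circ_{l\ne i}(\bm{U}_l\bm{z})\|_\infty=\max_j\prod_{l\ne i}|(\bm{U}_l\bm{z})_j|$. Since these factors are nonnegative, the max of the product is at most the product of the maxes, giving $\prod_{l\ne i}\|\bm{U}_l\bm{z}\|_\infty$. Using $\|\bm{v}\|_\infty\le\|\bm{v}\|_p$ together with the operator-norm definition and the domain constraint $\|\bm{z}\|_p\le 1$, each factor is bounded as $\|\bm{U}_l\bm{z}\|_\infty\le\|\bm{U}_l\bm{z}\|_p\le\|\bm{U}_l\|_p\|\bm{z}\|_p\le\|\bm{U}_l\|_p$. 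Substituting back, the $i$-th summand becomes $\prod_{l\ne i}\|\bm{U}_l\|_p\cdot\|\bm{U}_i\|_p=\prod_{l=1}^k\|\bm{U}_l\|_p$, and summing the $k$ identical terms yields $\|J_g(\bm{z})\|_p\le k\prod_{i=1}^k\|\bm{U}_i\|_p$. Combined with the factor $\|\bm{C}\|_p$ from the first paragraph and taking the supremum over the domain, this gives the claimed bound; specializing $p=\infty$ recovers \cref{theorem_CCP_LC_Linf}.

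I expect the main obstacle to be the correct derivation of the Hadamard-product Jacobian and the clean justification of the two norm inequalities — that the diagonal matrix's $\ell_p$-operator norm collapses to an $\ell_\infty$-norm, and that the per-coordinate products separate into a product of operator norms, which works only because $\|\bm{z}\|_p\le 1$ and $\|\cdot\|_\infty\le\|\cdot\|_p$. The factor $k$ arises precisely because the product rule produces $k$ summands, each bounded identically by $\prod_i\|\bm{U}_i\|_p$.
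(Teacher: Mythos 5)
Your proof is correct and follows essentially the same route as the paper's: factor out $\bm{C}$, compute the Jacobian $\sum_{i=1}^k \diag(\circ_{l\neq i}(\bm{U}_l\bm{z}))\bm{U}_i$ of the Hadamard-product part, apply the triangle inequality and norm consistency, and use $\|\bm{z}\|_p \leq 1$ to bound each of the $k$ summands by $\prod_{i=1}^k \|\bm{U}_i\|_p$. The only cosmetic differences are that you invoke the chain rule on Jacobians where the paper uses the Lipschitz-composition lemma (\cref{lemma_lip_comp_func}), and your identity $\|\diag(\bm{v})\|_p = \|\bm{v}\|_\infty$ is a slightly sharper intermediate step than the paper's bound $\|\diag(\bm{v})\|_p \leq \|\bm{v}\|_p$ followed by a Hadamard norm product inequality — both yield the same final constant.
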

\begin{proof}
Let $g(\bm{x}) = \bm{C}\bm{x}$ and $h(\bm{z}) = {\circ}_{i=1}^{k}(\bm{U}_{i}\bm{z})$. Then it holds that $f(\bm{z}) = g(h(\bm{z}))$. By \cref{lemma_lip_comp_func}, we have: $\text{Lip}_p(f) \leq \text{Lip}_p(g)\text{Lip}_p(h)$. We will compute an upper bound of each function individually. 

Let us first consider the function $g(\bm{x}) = \bm{C}\bm{x}$. By \cref{lemma_lip_to_jacobian}, because g is a linear map represented by a matrix $\bm{C}$, its Jacobian is $J_{g}(\bm{x})=\bm{C}$. So: 
\begin{equation*} 
    \text{Lip}_{p}(g) = \left \| \bm{C} \right \|_{p}:=\sup_{\left \| \bm{x} \right \|_{p}=1}\left \| \bm{Cx} \right \|_{p}\,.
\end{equation*}
where $\left \| \bm{C} \right \|_{p}$ is the operator norm on matrices induced by the vector $p$-norm.

Now, let us consider the function $h(\bm{z}) = {\circ}_{i=1}^{k}\bm{U}_{i}\bm{z}$. Its Jacobian is given by:
\begin{equation*} 
\frac{dh}{d\bm{z}}=\sum_{i=1}^{k} \text{diag}({\circ}_{j\neq i}\bm{U}_{j}\bm{z})\bm{U}_{i}\,.
\end{equation*}
Using \cref{lemma_lip_to_jacobian} we have:
\begin{equation*} 
\begin{split}
\text{Lip}_{p}(h) & \leq \sup_{\bm{z}:\|\bm{z}\|_p \leq 1}\left \| \sum_{i=1}^{k}[\text{diag}({\circ}_{j\neq i}(\bm{U}_{j}\bm{z}))\bm{U}_{i}] \right \|_{p}\\
& \leq \sup_{\bm{z}:\|\bm{z}\|_p \leq 1}\sum_{i=1}^{k}\left \| \text{diag}({\circ}_{j\neq i}(\bm{U}_{j}\bm{z}))\bm{U}_{i} \right \|_{p}\qquad\qquad\qquad \text{Triangle inequality}\\
& \leq \sup_{\bm{z}:\|\bm{z}\|_p \leq 1}\sum_{i=1}^{k}\left \| \text{diag}({\circ}_{j\neq i}(\bm{U}_{j}\bm{z})) \right \|_{p}\left \| \bm{U}_{i} \right \|_{p}\qquad\qquad \text{\cref{lemma:consistent} [consistency]}\\
& \leq \sup_{\bm{z}:\|\bm{z}\|_p \leq 1}\sum_{i=1}^{k}\left \| {\circ}_{j\neq i}(\bm{U}_{j}\bm{z}) \right \|_{p}\left \| \bm{U}_{i} \right \|_{p}\\
& \leq \sup_{\bm{z}:\|\bm{z}\|_p \leq 1}\sum_{i=1}^{k}\prod_{j\neq i}(\left \| \bm{U}_{j}\bm{z} \right \|_{p})\left \| \bm{U}_{i} \right \|_{p}\\
& \leq \sup_{\bm{z}:\|\bm{z}\|_p \leq 1}\sum_{i=1}^{k}\prod_{j\neq i}(\left \| \bm{U}_{j} \right \|_{p}\left \| \bm{z} \right \|_{p})\left \| \bm{U}_{i} \right \|_{p}\\
& \leq \sup_{\bm{z}:\|\bm{z}\|_p \leq 1}\sum_{i=1}^{k}\prod_{j=1}^{k}(\left \| \bm{U}_{j} \right \|_{p})\\
& = k\prod_{j=1}^{k}\left \| \bm{U}_{j} \right \|_{p}\,.
\end{split}
\end{equation*}

So:

\begin{equation*}
\begin{split}
Lip_p(\mathcal{F}_L) & \leq Lip_{p}(g) Lip_{p}(h)\\
& \leq k \| \bm{C} \|_p \prod_{i=1}^{k} \| \bm{U}_i \|_p\,.
\end{split}
\end{equation*}

\end{proof}

\subsubsection{Proof of \cref{theorem_CCP_LC_Linf}}
\label{proof_theorem_CCP_LC_Linf}
\begin{proof}
This is a particular case of \cref{theorem_CCP_LC} when $p=\infty$.
\end{proof}

\section{Result of the NCP model}
\label{sec:complexity_theory_proofs_ncp}

\subsection{Proof of Theorem \ref{theorem_NCP_RC_Linf}: Rademacher Complexity of NCP under $\ell_\infty$ norm} \label{proof_ncp_rad_linf}
\begin{proof}
\begin{equation} 
\label{Proof_NCP_RC_Linf_1}
\begin{split}
\mathcal{R}_{Z}(\mathcal{F}_\text{NCP}^k) & = \mathbb{E} \sup_{f\in \mathcal{F}_\text{NCP}^k}\frac{1}{n}\sum_{j=1}^{n} \sigma_{j}f(\bm{z}_{j}) \\
& = \mathbb{E} \sup_{f\in \mathcal{F}_\text{NCP}^k}\frac{1}{n}\sum_{j=1}^{n} \left (\sigma _{j} \left \langle \bm{c}, \bm{x}_{k}(\bm{z}_{j}) \right \rangle \right) \\
& = \mathbb{E} \sup_{f\in \mathcal{F}_\text{NCP}^k}\frac{1}{n} \left \langle \bm{c}, \sum_{j=1}^{n}[\sigma _{j}\bm{x}_{k}(\bm{z}_{j})] \right \rangle \\
& \leq \mathbb{E} \sup_{f\in \mathcal{F}_\text{NCP}^k}\frac{1}{n} \left \| \bm{c} \right \|_{1}\left \| \sum_{j=1}^{n}[\sigma _{j}\bm{x}_{k}(\bm{z}_{j})] \right \|_{\infty} \quad\quad\quad\quad\quad\quad\quad\quad\quad\quad\quad\quad\quad\quad\text{\cref{Lemma:holder_inequality} [Hölder's inequality]}\\ 
& = \mathbb{E} \sup_{f\in \mathcal{F}_\text{NCP}^k}\frac{1}{n} \left \| \bm{c} \right \|_{1}\left \| \sum_{j=1}^{n}[\sigma _{j}((\bm{A}_{k}\bm{z}_{j})\circ(\bm{S}_{k}\bm{x}_{k-1}(\bm{z}_{j})))] \right \|_{\infty}\\
& = \mathbb{E} \sup_{f\in \mathcal{F}_\text{NCP}^k}\frac{1}{n} \left \| \bm{c} \right \|_{1}\left \| \sum_{j=1}^{n}[\sigma _{j}((\bm{A}_{k}\bullet \bm{S}_{k})(\bm{z}_{j}\ast \bm{x}_{k-1}(\bm{z}_{j})))] \right \|_{\infty}\quad\quad\quad\quad\quad\quad\text{\cref{lemma:Mixed_Product_Property_2} [Mixed product property]}\\
& = \mathbb{E} \sup_{f\in \mathcal{F}_\text{NCP}^k}\frac{1}{n} \left \| \bm{c} \right \|_{1}\left \| (\bm{A}_{k}\bullet \bm{S}_{k})\sum_{j=1}^{n}[\sigma _{j}(\bm{z}_{j}\ast \bm{x}_{k-1}(\bm{z}_{j}))] \right \|_{\infty}\,.
\end{split}
\end{equation}
Now, because of the recursive definition of the \cref{eq:complexity_polynomial_ncp_model_recursive}, we obtain:
\begin{equation} 
\label{eq:new_proof_rc_ncp-1}
    \begin{split}
        \sum_{j=1}^{n}\sigma _{j}(\bm{z}_{j}\ast \bm{x}_{k-1}(\bm{z}_{j})) &= \sum_{j=1}^{n}\sigma _{j}(\bm{z}_{j}\ast (\bm{A}_{k-1}\bm{z}_{j})\circ(\bm{S}_{k-1}\bm{x}_{k-2}(\bm{z}_{j}))) \\
        &=  \sum_{j=1}^{n}\sigma _{j}(\bm{z}_{j}\ast ((\bm{A}_{k-1}\bullet \bm{S}_{k-1})(\bm{z}_{j}\ast \bm{x}_{k-2}(\bm{z}_{j})))) \qquad \text{\cref{lemma:Mixed_Product_Property_2}}\\
        &= \sum_{j=1}^{n}\sigma _{j}(\bm{I}\otimes(\bm{A}_{k-1}\bullet \bm{S}_{k-1}))(\bm{z}_{j}\ast (\bm{z}_{j}\ast \bm{x}_{k-2}(\bm{z}_{j}))) \qquad\text{\cref{lemma:NCP_proof_Lemma}} \\
        &= \bm{I}\otimes(\bm{A}_{k-1}\bullet \bm{S}_{k-1}) \sum_{j=1}^{n}[\sigma _{j}(\bm{z}_{j}\ast (\bm{z}_{j}\ast \bm{x}_{k-2}(\bm{z}_{j})))\,.
    \end{split}
\end{equation}
recursively applying this argument we have:
\begin{equation}\label{eq:new_proof_rc_ncp-2}
    \begin{split}
        \sum_{j=1}^{n}\sigma _{j}(\bm{z}_{j}\ast \bm{x}_{k-1}(\bm{z}_{j})) &= \left (\prod_{i=1}^{k-1} \bm{I} \otimes \bm{A}_i \bullet \bm{S}_i \right ) \sum_{j=1}^{n}\sigma _{j}\ast_{i=1}^{k}(\bm{z}_{j})\,.
   \end{split}
\end{equation}
Combining the two previous equations (\cref{eq:new_proof_rc_ncp-1,eq:new_proof_rc_ncp-2}) inside \cref{Proof_NCP_RC_Linf_1} we finally obtain

\begin{equation*} 
\begin{split}
\mathcal{R}_{Z}(\mathcal{F}_\text{NCP}^k) & \leq \mathbb{E} \sup_{f\in \mathcal{F}_\text{NCP}^k}\frac{1}{n} \left \| \bm{c} \right \|_{1}\left \|(\bm{A}_{k}\bullet \bm{S}_{k}) \left (\prod_{i=1}^{k-1} \bm{I} \otimes \bm{A}_i \bullet \bm{S}_i \right ) \sum_{j=1}^{n}\sigma _{j}\ast_{i=1}^{k}(\bm{z}_{j}) \right \|_{\infty}\\
& \leq \mathbb{E} \sup_{f\in \mathcal{F}_\text{NCP}^k}\frac{1}{n} \left \| \bm{c} \right \|_{1}\left \|(\bm{A}_{k}\bullet \bm{S}_{k}) \left (\prod_{i=1}^{k-1} \bm{I} \otimes \bm{A}_i \bullet \bm{S}_i \right ) \right \|_\infty \left \| \sum_{j=1}^{n}\sigma _{j}\ast_{i=1}^{k}(\bm{z}_{j}) \right \|_{\infty}\\
& \leq \frac{\mu \lambda}{n} \mathbb{E} \left \| \sum_{j=1}^{n}\sigma _{j}\ast_{i=1}^{k}(\bm{z}_{j}) \right \|_{\infty} \\
& = \frac{\mu \lambda}{n}  n\mathcal{R}(V) = \mu \lambda \mathcal{R}(V)\,. \quad\quad\quad\quad\quad\quad\quad\quad\quad\quad\quad\quad\quad\text{\cref{massart_1}}\,.
\end{split}
\end{equation*}
following the same arguments as in \cref{Proof_CCP_RC_Linf_2} it follows that:
\begin{equation*}
\mathcal{R}_{Z}(\mathcal{F}_\text{NCP}^k) \leq 2 \mu \lambda \sqrt{\frac{2k\log{(d)}}{n}}\,.
\end{equation*}
\end{proof}

\subsection{Proof of \cref{NCP_RC_Linf_relax_bound}}
\label{sssec:complexity_proof_lemma_relax_rc_ncp}

\begin{proof}
\begin{equation*}
\begin{split}
\left \| (\bm{A}_{k}\bullet \bm{S}_{k})  \prod_{i=1}^{k-1} \bm{I} \otimes \bm{A}_i \bullet \bm{S}_i \right \|_{\infty} & \leq  \left \| \bm{A}_{k}\bullet \bm{S}_{k} \right \|_{\infty}\prod_{i=1}^{k-1}   \left \|\bm{I} \otimes \bm{A}_i \bullet \bm{S}_i \right \|_{\infty}\qquad \text{\cref{lemma:consistent} [consistent]}\\
&  =  \prod_{i=1}^{k}   \left \| \bm{A}_{i}\bullet \bm{S}_{i} \right \|_{\infty}\\
& \leq \prod_{i=1}^{k} \| \bm{A}_i \|_\infty \|\bm{S}_i\|_\infty\qquad\qquad\qquad\qquad\text{\cref{CCP_RC_Linf_relax_bound}}\,.
\end{split}
\end{equation*}

\end{proof}

\subsection{Rademacher Complexity under $\ell_2$ norm}
\label{ssec:ncp_rc_l2}

\begin{theorem}\label{theorem_NCP_RC_L2} 
    Let $Z=\{\bm{z}_1, \ldots, \bm{z}_n\} \subseteq \realnum^{d}$ and suppose that
    $\|\bm{z}_j\|_\infty \leq 1$ for all $j=1, \ldots, n$. Define the matrix $\Phi(\bm{A}_1, \bm{S}_1, \ldots, \bm{A}_n, \bm{S}_n) \coloneqq (\bm{A}_{k}\bullet \bm{S}_{k})  \prod_{i=1}^{k-1} \bm{I} \otimes \bm{A}_i \bullet \bm{S}_i$. Consider the class of functions:    
    \begin{equation*}
   \mathcal{F}_\text{NCP}^k \coloneqq \left \{ f(\bm{z}) \text{ as in } \eqref{eq:complexity_polynomial_ncp_model_recursive}: \|\bm{C}\|_2 \leq \mu, \left \|\Phi(\bm{A}_1, \bm{S}_1, \ldots, \bm{A}_k, \bm{S}_k) \right \|_2 \leq \lambda \right \}\,,
    \end{equation*}
    where $\bm{C} \in \realnum^{1\times m}$ (single output case). The Empirical Rademacher Complexity of $\text{NCP}_k$ (k-degree NCP polynomials) with respect to $Z$ is bounded as:
    \begin{equation*}
        \mathcal{R}_{Z}(\mathcal{F}_\text{NCP}^k) \leq \frac{ \mu \lambda} {\sqrt{n}}\,.
    \end{equation*}
\end{theorem}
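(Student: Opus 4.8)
The plan is to combine the algebraic linearization already developed for the $\ell_\infty$ NCP bound (\cref{proof_ncp_rad_linf}) with the second-moment argument used for the $\ell_2$ CCP bound (\cref{theorem_CCP_RC_L2}); the two non-trivial ingredients of the proof have thus already been established and only need to be glued together with the $\ell_2$ norm in place of the $\ell_\infty$ norm. Starting from the definition of the empirical Rademacher complexity and writing $f(\bm{z}_j) = \langle \bm{c}, \bm{x}_k(\bm{z}_j)\rangle$, I would first apply Hölder's inequality (\cref{Lemma:holder_inequality}) with the self-conjugate pair $p=q=2$ to obtain
\begin{equation*}
\mathcal{R}_{Z}(\mathcal{F}_\text{NCP}^k) \leq \mathbb{E} \sup_{f\in \mathcal{F}_\text{NCP}^k} \frac{1}{n} \|\bm{c}\|_2 \left\| \sum_{j=1}^n \sigma_j \bm{x}_k(\bm{z}_j) \right\|_2,
\end{equation*}
which is where the factor $\|\bm{c}\|_2 \leq \mu$ enters the bound.

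Second, I would reuse the recursive unrolling from \cref{eq:new_proof_rc_ncp-1,eq:new_proof_rc_ncp-2}. This step is purely algebraic (it relies on \cref{lemma:Mixed_Product_Property_2} and \cref{lemma:NCP_proof_Lemma}) and is therefore norm-independent, so it transfers verbatim to rewrite
\begin{equation*}
\sum_{j=1}^n \sigma_j \bm{x}_k(\bm{z}_j) = \Phi(\bm{A}_1, \bm{S}_1, \ldots, \bm{A}_k, \bm{S}_k) \sum_{j=1}^n \sigma_j \ast_{i=1}^k(\bm{z}_j).
\end{equation*}
Applying consistency of the operator norm (\cref{lemma:consistent}) together with the constraint $\|\Phi\|_2 \leq \lambda$ then isolates the data-dependent term, giving $\mathcal{R}_{Z}(\mathcal{F}_\text{NCP}^k) \leq \frac{\mu\lambda}{n}\, \mathbb{E} \left\| \sum_{j=1}^n \sigma_j \ast_{i=1}^k(\bm{z}_j) \right\|_2$.

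Third, I would bound the remaining Rademacher expectation exactly as in \cref{Proof_CCP_RC_L2_2}: Jensen's inequality passes the expectation inside the square root, and independence of the $\sigma_j$ annihilates the cross terms, leaving $\sqrt{\sum_{j=1}^n \|\ast_{i=1}^k(\bm{z}_j)\|_2^2}$. Since $\|\ast_{i=1}^k(\bm{z}_j)\|_2 = \|\bm{z}_j\|_2^k \leq 1$ under the input normalization, this is at most $\sqrt{n}$, and substituting yields the claimed $\mathcal{R}_{Z}(\mathcal{F}_\text{NCP}^k) \leq \mu\lambda/\sqrt{n}$. Note that, unlike the $\ell_\infty$ argument, this computation does not invoke Massart's lemma and is consequently dimension-free, which is precisely why the $\log(d)$ and $k$ factors of \cref{theorem_NCP_RC_Linf} disappear in the $\ell_2$ regime.

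I expect the only real subtlety to be the book-keeping in the second step rather than any conceptual difficulty: the Kronecker factors $\bm{I}\otimes(\bm{A}_i\bullet\bm{S}_i)$ produced by \cref{lemma:NCP_proof_Lemma} must be tracked carefully so that the recursion telescopes into exactly the matrix $\Phi$ appearing in the hypothesis class. Because this identity is identical to the one already proved in the $\ell_\infty$ case, the entire burden reduces to swapping the $\ell_\infty$-operator norm / Massart estimate for the $\ell_2$-operator norm / second-moment estimate, so no genuinely new machinery is required.
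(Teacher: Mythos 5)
Your proposal is correct and follows the paper's own proof essentially step for step: Hölder/Cauchy--Schwarz with $p=q=2$ to extract $\|\bm{c}\|_2\le\mu$, the identical telescoping linearization via \cref{lemma:Mixed_Product_Property_2}, \cref{lemma:NCP_proof_Lemma} and \cref{eq:new_proof_rc_ncp-2}, operator-norm consistency against $\|\Phi\|_2\le\lambda$, and the Jensen/second-moment estimate of \cref{Proof_CCP_RC_L2_2} in place of Massart. One shared caveat (present in the paper's argument as well, so not a gap specific to yours): the final bound $\|\ast_{i=1}^{k}(\bm{z}_j)\|_2=\|\bm{z}_j\|_2^k\le 1$ requires $\|\bm{z}_j\|_2\le 1$, which does not follow from the stated hypothesis $\|\bm{z}_j\|_\infty\le 1$.
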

\begin{proof}
\begin{equation*} 
\begin{split}
\mathcal{R}_{Z}(\mathcal{F}_\text{NCP}^k) & = \mathbb{E} \sup_{f\in \mathcal{F}_\text{NCP}^k}\frac{1}{n}\sum_{j=1}^{n} \sigma_{j}f(\bm{z}_{j}) \\
& = \mathbb{E} \sup_{f\in \mathcal{F}_\text{NCP}^k}\frac{1}{n}\sum_{j=1}^{n} \left (\sigma _{j} \left \langle \bm{c}, \bm{x}_{k}(\bm{z}_{j}) \right \rangle \right) \\
& = \mathbb{E} \sup_{f\in \mathcal{F}_\text{NCP}^k}\frac{1}{n} \left \langle \bm{c}, \sum_{j=1}^{n}[\sigma _{j}\bm{x}_{k}(\bm{z}_{j})] \right \rangle \\
& \leq \mathbb{E} \sup_{f\in \mathcal{F}_\text{NCP}^k}\frac{1}{n} \left \| \bm{c} \right \|_{2}\left \| \sum_{j=1}^{n}[\sigma _{j}\bm{x}_{k}(\bm{z}_{j})] \right \|_{2} \qquad\qquad\qquad\qquad\qquad\qquad  \text{\cref{Lemma:holder_inequality} [Hölder's inequality]}\\ 
& = \mathbb{E} \sup_{f\in \mathcal{F}_\text{NCP}^k}\frac{1}{n} \left \| \bm{c} \right \|_{2}\left \| \sum_{j=1}^{n}[\sigma _{j}((\bm{A}_{k}\bm{z}_{j})\circ(\bm{S}_{k}\bm{x}_{k-1}(\bm{z}_{j})))] \right \|_{2}\\
& = \mathbb{E} \sup_{f\in \mathcal{F}_\text{NCP}^k}\frac{1}{n} \left \| \bm{c} \right \|_{2}\left \| \sum_{j=1}^{n}[\sigma _{j}((\bm{A}_{k}\bullet \bm{S}_{k})(\bm{z}_{j}\ast \bm{x}_{k-1}(\bm{z}_{j})))] \right \|_{2}\qquad\qquad\text{\cref{lemma:Mixed_Product_Property_2} [Mixed product property]}\\
& = \mathbb{E} \sup_{f\in \mathcal{F}_\text{NCP}^k}\frac{1}{n} \left \| \bm{c} \right \|_{2}\left \| (\bm{A}_{k}\bullet \bm{S}_{k})\sum_{j=1}^{n}[\sigma _{j}(\bm{z}_{j}\ast \bm{x}_{k-1}(\bm{z}_{j}))] \right \|_{2}\\
& = \mathbb{E} \sup_{f\in \mathcal{F}_\text{NCP}^k}\frac{1}{n} \left \| \bm{c} \right \|_{2}\left \| (\bm{A}_{k}\bullet \bm{S}_{k})\left (\prod_{i=1}^{k-1} \bm{I} \otimes \bm{A}_i \bullet \bm{S}_i \right ) \sum_{j=1}^{n}\sigma _{j}\ast_{i=1}^{k}(\bm{z}_{j})] \right \|_{2}\quad\quad\quad\text{\cref{eq:new_proof_rc_ncp-2}}\\
& \leq \mathbb{E} \sup_{f\in \mathcal{F}_\text{NCP}^k}\frac{1}{n} \left \| \bm{c} \right \|_{2}\left \| (\bm{A}_{k}\bullet \bm{S}_{k})\left (\prod_{i=1}^{k-1} \bm{I} \otimes \bm{A}_i \bullet \bm{S}_i \right )\right \|_{2}\left \| \sum_{j=1}^{n}\sigma _{j}\ast_{i=1}^{k}(\bm{z}_{j})] \right \|_{2}\\
& \leq \frac{\mu \lambda}{n} \mathbb{E} \left \| \sum_{j=1}^{n}\sigma _{j}\ast_{i=1}^{k}(\bm{z}_{j}) \right \|_{2} \\
& \leq \frac{ \mu \lambda}{\sqrt{n}}\,.\qquad\qquad\qquad\qquad\qquad\qquad\qquad\qquad\qquad\text{\cref{Proof_CCP_RC_L2_2}}\,.
\end{split}
\end{equation*}

\end{proof}

\subsection{Lipschitz constant bound of the NCP model}
\label{ssec:lips_bound_ncp}
\begin{theorem}\label{theorem_NCP_LC} 
Let $\mathcal{F}_L$ be the class of functions defined as
\begin{equation*}
\begin{split}
\mathcal{F}_L \coloneqq \bigg \{ \bm{x}_{1} = (\bm{A}_{1}\bm{z})\circ(\bm{S}_{1}), \bm{x}_{n} = (\bm{A}_{n}\bm{z})\circ(\bm{S}_{n}\bm{x}_{n-1}) , f(\bm{z})=\bm{C} \bm{x}_{k}: \\ \|\bm{C}\|_p \leq \mu, \|\bm{A}_i\|_p\leq \lambda_i, \|\bm{S}_i\|_p \leq \rho_i, \|\bm{z}\|_p \leq 1 \bigg \}\,.
\end{split}
\end{equation*}
    The Lipschitz Constant of $\mathcal{F_L}$ (k-degree NCP polynomial) under $\ell_p$ norm restrictions is bounded as:
\begin{equation*}
    \text{Lip}_p(\mathcal{F}_L) \leq k \mu \prod_{i=1}^{k}(\lambda_i \rho_i)\,.
\end{equation*}
\end{theorem}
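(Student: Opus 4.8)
The plan is to mirror the composition argument used for the CCP model in \cref{theorem_CCP_LC}. I would write $f = g \circ h$ with the linear read-out $g(\bm{x}) = \bm{C}\bm{x}$ and the nested part $h(\bm{z}) = \bm{x}_k(\bm{z})$, so that \cref{lemma_lip_comp_func} gives $\text{Lip}_p(f) \le \text{Lip}_p(g)\,\text{Lip}_p(h)$. By \cref{lemma_lip_to_jacobian} the first factor is $\|\bm{C}\|_p \le \mu$, so everything reduces to showing $\text{Lip}_p(h) \le k\prod_{i=1}^k \lambda_i\rho_i$, which by \cref{lemma_lip_to_jacobian} amounts to bounding the $\ell_p$-operator norm of the Jacobian $J_k \coloneqq d\bm{x}_k/d\bm{z}$ uniformly over $\{\bm{z}:\|\bm{z}\|_p \le 1\}$.

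The key step is to differentiate the recursion. Writing $\bm{x}_n = (\bm{A}_n\bm{z})\circ(\bm{S}_n\bm{x}_{n-1})$ and applying the product rule for the Hadamard product gives
\[
J_n = \text{diag}(\bm{S}_n\bm{x}_{n-1})\,\bm{A}_n + \text{diag}(\bm{A}_n\bm{z})\,\bm{S}_n\,J_{n-1}\,.
\]
Taking $\ell_p$-operator norms and using the triangle inequality, submultiplicativity (\cref{lemma:consistent}), the elementary bound $\|\text{diag}(\bm{v})\|_p = \|\bm{v}\|_\infty \le \|\bm{v}\|_p$, together with $\|\bm{A}_n\bm{z}\|_p \le \lambda_n$ and $\|\bm{S}_n\bm{x}_{n-1}\|_p \le \rho_n\|\bm{x}_{n-1}\|_p$, yields the recursion $\|J_n\|_p \le \lambda_n\rho_n\big(\|\bm{x}_{n-1}\|_p + \|J_{n-1}\|_p\big)$.

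The main obstacle is that this recursion couples the Jacobian norm with the norm of the intermediate feature $\bm{x}_{n-1}$, so $\|J_n\|_p$ cannot be bounded in isolation. I would therefore first establish, by a short induction, the auxiliary bound $\|\bm{x}_n\|_p \le \prod_{i=1}^n \lambda_i\rho_i \eqqcolon P_n$: the base case $\bm{x}_1 = \text{diag}(\bm{s}_1)\,\bm{A}_1\bm{z}$ gives $\|\bm{x}_1\|_p \le \lambda_1\rho_1$, and each step uses $\bm{x}_n = \text{diag}(\bm{A}_n\bm{z})(\bm{S}_n\bm{x}_{n-1})$ with the same diagonal and submultiplicativity bounds. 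Substituting $\|\bm{x}_{n-1}\|_p \le P_{n-1}$ turns the Jacobian recursion into the decoupled form $\|J_n\|_p \le P_n + \lambda_n\rho_n\|J_{n-1}\|_p$.

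Finally I would close the argument with a second induction showing $\|J_n\|_p \le n\,P_n$. The base case $\|J_1\|_p = \|\text{diag}(\bm{s}_1)\bm{A}_1\|_p \le \lambda_1\rho_1 = P_1$ holds, and if $\|J_{n-1}\|_p \le (n-1)P_{n-1}$ then $\|J_n\|_p \le P_n + \lambda_n\rho_n(n-1)P_{n-1} = n\,P_n$. Taking $n=k$ gives $\text{Lip}_p(h) \le k\,P_k$, and multiplying by the $\|\bm{C}\|_p \le \mu$ factor yields $\text{Lip}_p(\mathcal{F}_L) \le k\mu\prod_{i=1}^k(\lambda_i\rho_i)$, as claimed. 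The bound of \cref{theorem_NCP_LC_Linf} is then recovered as the special case $p=\infty$.
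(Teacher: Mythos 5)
Your proposal is correct and follows essentially the same route as the paper's proof: the same decomposition $f = g \circ h$ with $\text{Lip}_p(g) = \|\bm{C}\|_p$, the same Hadamard product rule giving the Jacobian recursion $J_n = \diag(\bm{S}_n\bm{x}_{n-1})\bm{A}_n + \diag(\bm{A}_n\bm{z})\bm{S}_n J_{n-1}$, the same diagonal-norm and consistency bounds, and the same induction yielding $\|J_n\|_p \le n\prod_{i=1}^n \lambda_i\rho_i$. Your only deviation is organizational — you isolate the auxiliary bound $\|\bm{x}_n\|_p \le \prod_{i=1}^n \lambda_i\rho_i$ as a separate induction before decoupling the Jacobian recursion, whereas the paper unrolls that feature-norm bound in-line within one long chain of inequalities — which is a slightly cleaner presentation of the identical argument.
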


\begin{proof}
Let $g(\bm{x}) = \bm{Cx}$, $h(\bm{z}) = (\bm{A}_{n}\bm{z})\circ(\bm{S}_{n}\bm{x}_{n-1}(\bm{z}))$. Then it holds that $f(\bm{z}) = g(h(\bm{z}))$.

By \cref{lemma_lip_comp_func}, we have: $\text{Lip}(f) \leq \text{Lip}(g)\text{Lip}(h)$. This enables us to compute an upper bound of each function (i.e., $g, h$) individually. 

Let us first consider the function $g(\bm{x}) = \bm{Cx}$. By \cref{lemma_lip_to_jacobian}, because g is a linear map represented by a matrix $\bm{C}$, its Jacobian is $J_{g}(\bm{x})=\bm{C}$. So: 
\begin{equation*} 
    \text{Lip}_{p}(g) = \left \| \bm{C} \right \|_{p}:=\sup_{\left \| \bm{x} \right \|_{p}=1}\left \| \bm{Cx} \right \|_{p} = \begin{cases}
    \sigma _{\max}(\bm{C}) & \text{ if } p = 2 \\ 
    \max_{i}\sum _{j}\left | \bm{C}_{(i,j)} \right | & \text{ if } p = \infty \,.
    \end{cases}
\end{equation*}
where $\left \| \bm{C} \right \|_{p}$ is the operator norm on matrices induced by the vector p-norm, and $\sigma _{\max}(\bm{C})$  is the largest singular value of $\bm{C}$. 

Now, let us consider the function $\bm{x}_{n}(\bm{z}) = h(\bm{z}) = (\bm{A}_{n}\bm{z})\circ(\bm{S}_{n}\bm{x}_{n-1}(\bm{z}))$. Its Jacobian is given by:

\begin{equation*}
J_{\bm{x}_{n}} = \diag(\bm{A}_{n}\bm{z})\bm{S}_{n}J_{\bm{x}_{n-1}}+\diag(\bm{S}_{n}\bm{x}_{n-1})\bm{A}_{n}, \qquad J_{\bm{x}_{1}} = \diag(\bm{S}_{1})\bm{A}_{1}\,.
\end{equation*}

\begin{equation*}
\begin{split}
Lip_{p}(h) & = \sup_{\bm{z}:\|\bm{z}\|_p \leq 1} \| J_{\bm{x}_{n}} \|_p\\
& = \sup_{\bm{z}:\|\bm{z}\|_p \leq 1} \| \diag(\bm{A}_{n}\bm{z})\bm{S}_{n}J_{\bm{x}_{n-1}}+\diag(\bm{S}_{n}\bm{x}_{n-1})\bm{A}_{n} \|_p\\
& \leq \sup_{\bm{z}:\|\bm{z}\|_p \leq 1} \|\diag(\bm{A}_{n}\bm{z})\bm{S}_{n}J_{\bm{x}_{n-1}}\|_p + \|\diag(\bm{S}_{n}\bm{x}_{n-1})\bm{A}_{n}\|_p\qquad \text{Triangle inequality}\\
& \leq\sup_{\bm{z}:\|\bm{z}\|_p \leq 1}\|\diag(\bm{A}_{n}\bm{z})\|_p \|\bm{S}_{n}\|_p \|J_{\bm{x}_{n-1}}\|_p + \|\diag(\bm{S}_{n}\bm{x}_{n-1})\|_p \|\bm{A}_{n}\|_p \qquad \text{\cref{lemma:consistent} [consistent]}\\
& \leq \sup_{\bm{z}:\|\bm{z}\|_p \leq 1}\|\bm{A}_{n}\bm{z}\|_p \|\bm{S}_{n}\|_p \|J_{\bm{x}_{n-1}}\|_p + \|\bm{S}_{n}\bm{x}_{n-1}\|_p \|\bm{A}_{n}\|_p\\
& \leq\sup_{\bm{z}:\|\bm{z}\|_p \leq 1}\|\bm{A}_{n}\|_p \|\bm{z}\|_p \|\bm{S}_{n}\|_p \|J_{\bm{x}_{n-1}}\|_p + \|\bm{S}_{n}\|_p \|\bm{x}_{n-1}\|_p \|\bm{A}_{n}\|_p\\
& = \sup_{\bm{z}:\|\bm{z}\|_p \leq 1}\|\bm{A}_{n}\|_p \|\bm{z}\|_p \|\bm{S}_{n}\|_p \|J_{\bm{x}_{n-1}}\|_p + \|\bm{S}_{n}\|_p \|(\bm{A}_{n-1}\bm{z})\circ(\bm{S}_{n-1}\bm{x}_{n-2})\|_p \|\bm{A}_{n}\|_p\\
& \leq\sup_{\bm{z}:\|\bm{z}\|_p \leq 1}\|\bm{A}_{n}\|_p \|\bm{z}\|_p \|\bm{S}_{n}\|_p \|J_{\bm{x}_{n-1}}\|_p + \|\bm{S}_{n}\|_p \|\bm{A}_{n-1}\bm{z}\|_p \|\bm{S}_{n-1}\bm{x}_{n-2}\|_p \|\bm{A}_{n}\|_p\\
& \leq\sup_{\bm{z}:\|\bm{z}\|_p \leq 1}\|\bm{A}_{n}\|_p \|\bm{z}\|_p \|\bm{S}_{n}\|_p \|J_{\bm{x}_{n-1}}\|_p + \|\bm{S}_{n}\|_p \|\bm{A}_{n-1}\|_p \|\bm{z}\|_p \|\bm{S}_{n-1}\|_p \|\bm{x}_{n-2}\|_p \|\bm{A}_{n}\|_p\\
& = \sup_{\bm{z}:\|\bm{z}\|_p \leq 1}\|\bm{A}_{n}\|_p \|\bm{z}\|_p \|\bm{S}_{n}\|_p (\|J_{\bm{x}_{n-1}}\|_p + \|\bm{A}_{n-1}\|_p  \|\bm{S}_{n-1}\|_p \|\bm{x}_{n-2}\|_p) \\
& \leq\sup_{\bm{z}:\|\bm{z}\|_p \leq 1}\|\bm{A}_{n}\|_p \|\bm{z}\|_p \|\bm{S}_{n}\|_p (\|J_{\bm{x}_{n-1}}\|_p + \prod_{i=1}^{n-1}(\|\bm{S}_i\|_p \| \bm{A}_{i} \|_p) \|\bm{z}\|_p^{n-2})\,.
\end{split}
\end{equation*}

Then we proof the result by induction.

Inductive hypothesis:

\begin{equation*}
\sup_{\bm{z}:\|\bm{z}\|_p \leq 1} \| J_{\bm{x}_{n}} \|_p \leq n\prod_{i=1}^{n}(\|\bm{S}_i\|_p \| \bm{A}_{i} \|_p)\,.
\end{equation*}

Case $k=1$:

\begin{equation*}
\begin{split}
Lip_{p}(h) & = \sup_{\bm{z}:\|\bm{z}\|_p \leq 1} \| J_{\bm{x}_{1}} \|_p\\
& = \| \diag(\bm{S}_{1})\bm{A}_{1} \|_p\\
& \leq \| \diag(\bm{S}_{1}) \|_p \| \bm{A}_{1} \|_p\\
& \leq \| \bm{S}_{1} \|_p \| \bm{A}_{1} \|_p\,.
\end{split}
\end{equation*}

Case $k=n$:

\begin{equation*}
\begin{split}
Lip_{p}(h) & = \sup_{\bm{z}:\|\bm{z}\|_p \leq 1} \| J_{\bm{x}_{n}} \|_p\\
& \leq\sup_{\bm{z}:\|\bm{z}\|_p \leq 1}\|\bm{A}_{n}\|_p \|\bm{z}\|_p \|\bm{S}_{n}\|_p (\|J_{\bm{x}_{n-1}}\|_p + \prod_{i=1}^{n-1}(\|\bm{S}_i\|_p \| \bm{A}_{i} \|_p) \|\bm{z}\|_p^{n-2}) \\
& \leq\sup_{\bm{z}:\|\bm{z}\|_p \leq 1}\|\bm{A}_{n}\|_p \|\bm{z}\|_p \|\bm{S}_{n}\|_p ((n-1)\prod_{i=1}^{n-1}(\|\bm{S}_i\|_p \| \bm{A}_{i} \|_p) + \prod_{i=1}^{n-1}(\|\bm{S}_i\|_p \| \bm{A}_{i} \|_p) \|\bm{z}\|_p^{n-2}) \\
& \leq n\prod_{i=1}^{n}(\|\bm{S}_i\|_p \| \bm{A}_{i} \|_p)\,.
\end{split}
\end{equation*}

So:

\begin{equation*}
\begin{split}
Lip_p(\mathcal{F}_L) & \leq Lip_{p}(g) Lip_{p}(h)\\
& \leq k \| \bm{C} \|_{p}  \prod_{i=1}^{k}(\|\bm{S}_i\|_p \| \bm{A}_{i} \|_p)\,.
\end{split}
\end{equation*}
\end{proof}

\subsubsection{proof of \cref{theorem_NCP_LC_Linf}}
\label{proof_theorem_NCP_LC_Linf}
\begin{proof}
This is particular case of \cref{theorem_NCP_LC} with $p=\infty$.
\end{proof}

\section{Relationship between a Convolutional layer and a Fully Connected layer}
\label{sec:complexity_convolutional_layers_proofs}

In this section we discuss various cases of input/output types depending on the dimensionality of the input tensor and the output tensor. We also provide the proof of \cref{2d-conv-mcs}. 

\begin{theorem}
\label{1d-conv}
Let $\bm{A}\in\realnum^{n}$. Let $\bm{K}\in\realnum^{h}$ be a 1-D convolutional kernel. For simplicity, we assume $h$ is odd and $h\leq n$. Let $\bm{B}\in\realnum^{n}$, $\bm{B}=\bm{K} \star \bm{A}$ be the output of the convolution. Let $\bm{U}$ be the convolutional operator i.e., the linear operator (matrix) $\bm{U} \in \realnum^{n \times n}$ such that $\bm{B} = \bm{K} \star \bm{A} = \bm{UA}$. It holds that 
$\left \| \bm{U} \right \|_{\infty } = \left \| \bm{K} \right \|_{1}$.
\end{theorem}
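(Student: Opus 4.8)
The plan is to exploit the characterization of the $\ell_\infty$-operator norm recorded in \cref{tbl:complexity_matrix_operations}: for any matrix $\bm{U}$, one has $\|\bm{U}\|_\infty = \max_i \|\bm{U}^i\|_1$, the maximum over rows of the $\ell_1$-norm of the row. Thus it suffices to write down the rows of the convolution matrix $\bm{U}$ and compute their absolute sums.

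First I would write the zero-padded convolution explicitly. Setting $c = (h+1)/2$ for the center of the odd-length kernel and adopting the convention that $A_j = 0$ whenever $j \notin \{1, \ldots, n\}$, each output coordinate is
\begin{equation*}
    B_i = \sum_{t=1}^{h} K_t\, A_{i + t - c}, \qquad i = 1, \ldots, n\,.
\end{equation*}
Reading off the matrix with $\bm{B} = \bm{U}\bm{A}$, the $i$-th row of $\bm{U}$ carries the entry $K_t$ in column $i + t - c$ precisely when $1 \leq i + t - c \leq n$, and is zero elsewhere. In particular, distinct kernel weights land in distinct columns, so each $K_t$ contributes to a given row at most once.

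Consequently the absolute row sum is
\begin{equation*}
    \|\bm{U}^i\|_1 = \sum_{t \,:\, 1 \leq i+t-c \leq n} |K_t| \;\leq\; \sum_{t=1}^h |K_t| = \|\bm{K}\|_1\,,
\end{equation*}
with equality exactly when all $h$ column indices $i+t-c$ ($t=1,\ldots,h$) lie in $\{1,\ldots,n\}$, i.e.\ when the whole kernel fits without touching the zero padding. This holds for every \emph{interior} row, namely those with $c \leq i \leq n - (h-1)/2$. Since $h \leq n$, a short calculation shows this index range is nonempty, so at least one such full row exists.

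Putting the two observations together, every row sum is bounded by $\|\bm{K}\|_1$, and this bound is attained on the interior rows; hence $\|\bm{U}\|_\infty = \max_i \|\bm{U}^i\|_1 = \|\bm{K}\|_1$. The only real care needed is the boundary bookkeeping under zero padding: verifying that boundary rows merely drop kernel terms (thereby lowering the sum, never raising it) and that the hypotheses $h \leq n$ and $h$ odd jointly guarantee the existence of a full interior row realizing the maximum. Beyond that, the argument is a direct index count, and the same scheme will extend to the 2D and 3D cases by replacing the single shift index with a multi-index.
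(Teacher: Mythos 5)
Your proof is correct and follows essentially the same route as the paper's: both identify the banded Toeplitz structure of $\bm{U}$ (entry $K_{\frac{h+1}{2}+j-i}$ when $|i-j|\leq\frac{h-1}{2}$, zero otherwise) and compute the $\ell_\infty$-operator norm as the maximum absolute row sum, bounded by $\|\bm{K}\|_1$. If anything, you are slightly more complete than the paper, whose displayed chain for this theorem only establishes the inequality $\|\bm{U}\|_\infty \leq \|\bm{K}\|_1$; your explicit check that the interior rows $c \leq i \leq n-\frac{h-1}{2}$ exist (nonempty precisely because $h \leq n$) and carry the full kernel supplies the attainment step needed for equality, which the paper leaves implicit in the 1D case.
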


\begin{theorem}
\label{2d-conv-1c}
Let $\bm{A}\in\realnum^{n\times m}$, and let $\bm{K}\in\realnum^{h\times h}$ be a 2-D convolutional kernel. For simplicity assume $h$ is odd number and $h\leq \min(n, m)$. Let $\bm{B}\in\realnum^{n\times m}$, $\bm{B} = \bm{K} \star \bm{A}$ be the output of the convolution. Let $\bm{U}$ be the convolutional operator i.e., the linear operator (matrix) $\bm{U} \in \realnum^{nm \times nm}$ such that $\text{vec}(\bm{B})=U\text{vec}(\bm{A})$. It holds that
$\left \| \bm{U} \right \|_{\infty } = \left \| \text{vec}(\bm{K}) \right \|_{1}$.
\end{theorem}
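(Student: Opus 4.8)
The plan is to invoke the characterization of the $\ell_\infty$-operator norm as the maximum $\ell_1$-norm over the rows of $\bm{U}$ (recorded in \cref{tbl:complexity_matrix_operations}) and to identify which row is maximal. First I would write the convolution explicitly with $1$-indexing and zero padding: setting $c=(h+1)/2$ (an integer, since $h$ is odd) and extending $\bm{A}$ by zeros outside $\{1,\dots,n\}\times\{1,\dots,m\}$, the output is $B_{p,q}=\sum_{a=1}^{h}\sum_{b=1}^{h}K_{a,b}\,A_{p+a-c,\,q+b-c}$. Under the $\mathrm{vec}$ ordering this says that the row of $\bm{U}$ indexed by the output location $(p,q)$ carries the value $K_{a,b}$ in the column indexed by the input location $(p+a-c,\,q+b-c)$, for every tap $(a,b)$ whose target lands inside $\{1,\dots,n\}\times\{1,\dots,m\}$, and $0$ elsewhere. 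Hence the $\ell_1$-norm of this row equals $\sum_{(a,b)\in S_{p,q}}|K_{a,b}|$, where $S_{p,q}\subseteq\{1,\dots,h\}^2$ is the set of kernel taps that hit valid (non-padded) input entries.

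The two inequalities then follow quickly. Since $S_{p,q}\subseteq\{1,\dots,h\}^2$ for every $(p,q)$, each row has $\ell_1$-norm at most $\sum_{a,b}|K_{a,b}|=\|\mathrm{vec}(\bm{K})\|_1$, so $\|\bm{U}\|_\infty\le\|\mathrm{vec}(\bm{K})\|_1$. For the reverse inequality I would exhibit an interior output pixel whose row contains all taps: the map $a\mapsto p+a-c$ sends $\{1,\dots,h\}$ into $[1,n]$ exactly when $c\le p\le n-(h-1)/2$, and the analogous condition in the second coordinate reads $c\le q\le m-(h-1)/2$. Both intervals are nonempty precisely because $h\le n$ and $h\le m$, which is exactly the hypothesis $h\le\min(n,m)$. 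For such a $(p,q)$ we have $S_{p,q}=\{1,\dots,h\}^2$, so that row has $\ell_1$-norm $\|\mathrm{vec}(\bm{K})\|_1$, giving $\|\bm{U}\|_\infty\ge\|\mathrm{vec}(\bm{K})\|_1$; combining the bounds yields equality.

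The step to be most careful with is the boundary bookkeeping: one must check that a tap landing in the zero-padded region contributes \emph{no} entry to the row (so it drops out of the sum rather than contributing a spurious zero), and verify nonemptiness of the interior-center rectangle under the stated parity and size assumptions. This is the same mechanism as in the $1$-D statement (\cref{1d-conv}), the only difference being that the taps are now indexed by a pair $(a,b)$ and the admissible set of centers is a rectangle rather than an interval. In particular no separability of $\bm{K}$ is required, so the argument covers arbitrary $2$-D kernels.
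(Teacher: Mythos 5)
Your proof is correct. It establishes the same two facts as the paper's proof --- every row of $\bm{U}$ has $\ell_1$-norm at most $\|\mathrm{vec}(\bm{K})\|_1$, and some interior row attains it --- but by a genuinely flatter route: you index rows of $\bm{U}$ directly by output locations $(p,q)$ and read off the row entries as the kernel taps $K_{a,b}$ placed injectively at columns $(p+a-c,\,q+b-c)$, whereas the paper first partitions $\bm{U}$ into an $n\times n$ array of $m\times m$ blocks, identifies each block $\bm{U}_{(i,j)}$ with a Toeplitz matrix built from a single kernel row (thereby reusing the structure of the 1-D case, \cref{1d-conv}), and then bounds $\|\bm{U}\|_\infty \le \max_i \sum_j \|\bm{U}_{(i,j)}\|_\infty \le \sum_{i=1}^h \|\bm{K}^i\|_1$. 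The trade-off: the paper's block decomposition is modular --- it is exactly the scaffolding reused for the multi-channel result \cref{2d-conv-mcs}, where $\bm{U}$ is further partitioned over channels --- while your direct computation is more self-contained and, notably, more rigorous on the equality direction. The paper disposes of attainment in one terse line (\cref{eq:complexity_conv_eq_temp}, ``by $h\le n$, $h\le m$''), whereas you verify it properly: the admissible centers form the rectangle $(h+1)/2 \le p \le n-(h-1)/2$, $(h+1)/2 \le q \le m-(h-1)/2$, nonempty precisely under $h \le \min(n,m)$, and for such $(p,q)$ every tap lands on a valid column so the row $\ell_1$-norm is exactly $\|\mathrm{vec}(\bm{K})\|_1$. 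Your explicit handling of the boundary bookkeeping (taps landing in the zero-padded region contribute no column entry at all) is also the cleanest justification of why boundary rows only lose mass, which is what makes the upper bound an equality rather than merely an inequality.
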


\subsection{Proof of \cref{1d-conv}}

\begin{proof}

From, $\bm{B} = \bm{K} \star \bm{A} = \bm{UA}$ we can obtain the following:

\begin{equation*}
\begin{pmatrix}
u_{1,1} & u_{1,2} & \cdots  & u_{1,n}\\ 
u_{2,1} & u_{2,2} & \cdots  & u_{2,n}\\ 
\vdots  & \vdots  & \ddots  & \vdots \\ 
u_{n,1} & u_{n,2} & \cdots  & u_{n,n}
\end{pmatrix}(\bm{A}^{1},\cdots ,\bm{A}^{n})^{\top}=(\bm{K}^{1},\cdots ,\bm{K}^{h})^{\top}\star(\bm{A}^{1},\cdots , \bm{A}^{n})^{\top}\,.
\end{equation*}

We observe that:

\begin{equation*}
u_{i,j}=\begin{cases}
\bm{K}^{\frac{h+1}{2}+j-i} & \text{ if } \left | i-j \right |\leq \frac{h-1}{2}\,; \\ 
0 & \text{ if } \left | i-j \right |> \frac{h-1}{2}\,.
\end{cases}
\end{equation*}
Then, it holds that:
\begin{equation*}
\begin{split}
\left \| \bm{U} \right \|_{\infty } & = \max_{i=1}^{n}\sum_{j=1}^{n}\left | u_{i,j} \right |
 \leq \max_{i=1}^{n}\sum_{j=1}^{h}\left | \bm{K}^{j} \right |
= \left \| \bm{K} \right \|_{1}\,.
\end{split}
\end{equation*}
\end{proof}

\subsection{Proof of \cref{2d-conv-1c}}
\begin{proof}
We partition $\bm{U}$ into $n\times n$ partition matrices of shape $m\times m$. Then the $(i,j)^{\text{th}}$ partition matrix $\bm{U}_{(i,j)}$ describes the relationship between the $\bm{B}^{i}$ and the $\bm{A}^{j}$. So, $\bm{U}_{(i,j)}$ is also similar to the Toeplitz matrix in the previous result.
\begin{equation*}
\bm{U}_{(i,j)}=\begin{cases}
\bm{M}_{\frac{h+1}{2}+j-i} & \text{ if } \left | i-j \right |\leq \frac{h-1}{2}\,; \\ 
\bm{0} & \text{ if } \left | i-j \right |> \frac{h-1}{2}\,.
\end{cases}
\end{equation*}
Meanwhile, the matrix $M$ satisfies:
\begin{equation*}
m_{i (s,l)}=\begin{cases}
k_{(i, \frac{h+1}{2}+l-s)} & \text{ if } \left | s-l \right |\leq \frac{h-1}{2}\,; \\ 
0 & \text{ if } \left | s-l \right |> \frac{h-1}{2}\,.
\end{cases}
\end{equation*}
Then, we have the following:
\begin{equation*}
\begin{split}
\left \| \bm{U} \right \|_{\infty } & = \max_{i=1}^{n\times m}\sum_{j=1}^{n\times m}\left | u_{i,j} \right |
 \leq \max_{i=1}^{n}\sum_{j=1}^{n}\left \| \bm{U}_{i,j} \right \|_{\infty}
 \leq \max_{i=1}^{n}\sum_{j=1}^{h}\left \| \bm{K}^{j} \right \|_{1}
 = \sum_{i=1}^{h}\left \| \bm{K}^{i} \right \|_{1} = \| \text{vec}(\bm{K})\|_1\,.
\end{split}
\end{equation*}
In addition, by $h\leq n$, $h\leq m$ we have:
\begin{equation}
\left \| \bm{U}^{\frac{h+1}{2}+\bm{M}(\frac{h-1}{2})} \right \|_{1} = \sum_{i=1}^{h}\left \| \bm{K}^{i} \right \|_{1}\,.
\label{eq:complexity_conv_eq_temp}
\end{equation}
Then, it holds that $\left \| \bm{U} \right \|_{\infty } = \sum_{i=1}^{h}\left \| \bm{K}^{i} \right \|_{1}$.
\end{proof}

\subsection{Proof of \cref{2d-conv-mcs}}
\begin{proof}
We partition $\bm{U}$ into $o\times r$ partition matrices of shape $nm \times nm$. Then the $(i,j)^{\text{th}}$ partition of the matrix $\bm{U}_{(i,j)}$ describes the relationship between the $i^{\text{th}}$ channel of $\bm{B}$ and the $j^{\text{th}}$ channel of $\bm{A}$. Then, the following holds: $\left \| \bm{U}_{(i,j)} \right \|_{\infty}=\sum_{i=1}^{h}\left \| \bm{K}_{ij}^{i} \right \|_{1}$, where  $\bm{K}_{ij}$ means the two-dimensional tensor obtained by the third dimension of $\bm{K}$ takes $j$ and the fourth dimension of $\bm{K}$ takes $i$. 
\begin{equation*}
\begin{split}
\left \| \bm{U} \right \|_{\infty } & = max_{i=1}^{n\times m\times o}\sum_{j=1}^{n\times m\times r}\left | u_{(i,j)} \right |\\
& = \max_{l=0}^{o-1}\max_{i=1}^{n\times m}\sum_{s=0}^{r-1}\sum_{j=1}^{n\times m}\left | u_{(i+nml,j+nms)} \right |\\
& \leq \max_{l=0}^{o-1}\sum_{s=0}^{r-1}(\max_{i=1}^{n\times m}\sum_{j=1}^{n\times m}\left | u_{(i+nml,j+nms)} \right |)\\
& = \max_{l=0}^{o-1}\sum_{s=0}^{r-1}\left \| \bm{U}_{(l+1, s+1)} \right \|_{\infty}\\
& = \max_{l=1}^{o}\sum_{s=1}^{r}\left \| \bm{U}_{(l, s)} \right \|_{\infty}\\
& = \max_{l=1}^{o}\sum_{s=1}^{r}\sum_{i=1}^{h}\left \| \bm{K}_{ls}^{i} \right \|_{1}\\
& \leq \max_{l=1}^{o}\left \| \hat{\bm{K}}^{l} \right \|_{1}\\
& = \left \| \hat{\bm{K}} \right \|_{\infty }\,.
\end{split}
\end{equation*}

Similar to \cref{eq:complexity_conv_eq_temp}: for every $nm$ rows, we choose $\frac{k+1}{2}^{\text{th}}$ row. Then its 1-norm is equal to this $nm$ rows of the $\hat{\bm{K}}$'s $\infty$-norm. So the equation holds.
\end{proof}

\section{Auxiliary numerical evidence}
\label{sec:complexity_supplementary_additional_experiments}

A number of additional experiments are conducted in this section. Unless explicitly mentioned otherwise, the experimental setup remains similar to the one in the main paper. The following experiments are conducted below: 

\begin{enumerate}
    \item \rebuttal{The difference between the theoretical and the algorithmic bound and their evolution during training is studied in \cref{ssec:Evolution_of_the_bound_during_training}.}

    \item An ablation study on the hidden size is conducted in \cref{ssec:complexity_experiment_hidden_size}. 

    \item An ablation study is conducted on the effect of adversarial steps in \cref{ssec:Ablation_study_on_the_effect_of_adversarial_steps}.
    
    \item We evaluate the effect of the proposed projection into the testset performance in \cref{ssec:complexity_experiment_no_noise}.

    \item We conduct experiments on four new datasets, i.e., MNIST, K-MNIST, E-MNIST-BY, NSYNTH in \cref{ssec:Experimental_results_of_More_dataset}. \rebuttal{These experiments are conducted in addition to the datasets already presented in the main paper.}

    \item In \cref{ssec:Experimental_results_of_more_types_of_attacks} experiments on three additional adversarial attacks, i.e., FGSM-0.01, APGDT and TPGD, are performed.   
    
    \item We conduct an experiment using the NCP model in \cref{ssec:Experimental_results_in_NCP_model}.
    
    \item \rebuttal{The layer-wise bound (instead of a single bound for all matrices) is explored in \cref{ssec:complexity_experiments_supplementary_layerwise_bound}.}
    
    \item \rebuttal{The comparison with adversarial defense methods is conducted in \cref{ssec:complexity_experiments_supplementary_Adversarial_defense}.}

\end{enumerate}

\subsection{Theoretical and algorithmic bound}
\label{ssec:Evolution_of_the_bound_during_training}

\rebuttal{As mentioned in \cref{sec:Training_and_Algorithm}, projecting the quantity $\theta = \left \| \bullet_{i=1}^{k}\bm{U}_{i} \right \|_{\infty}$ onto their level set corresponds to a difficult non-convex problem. Given that we have an upper bound 
\begin{equation*}
\theta = \left \| \bullet_{i=1}^{k}\bm{U}_{i} \right \|_{\infty} \leq \Pi_{i=1}^k \| \bm{U}_i \|_\infty \eqqcolon \gamma\,.
    \end{equation*}
we want to understand in practice how tight is this bound. In \cref{fig:complexity_bound_ratio_PN4_FashionMNIST_KMNIST} we compute the ratio $\frac{\gamma}{\theta}$ for \shortpolynamesingle-4. In \cref{fig:rebuttal_ratio_hidden_rank} the ratio is illustrated for randomly initialized matrices (i.e., untrained networks).}
\begin{figure*}
\centering
    \centering
    \subfloat[]{\includegraphics[width=0.51\linewidth]{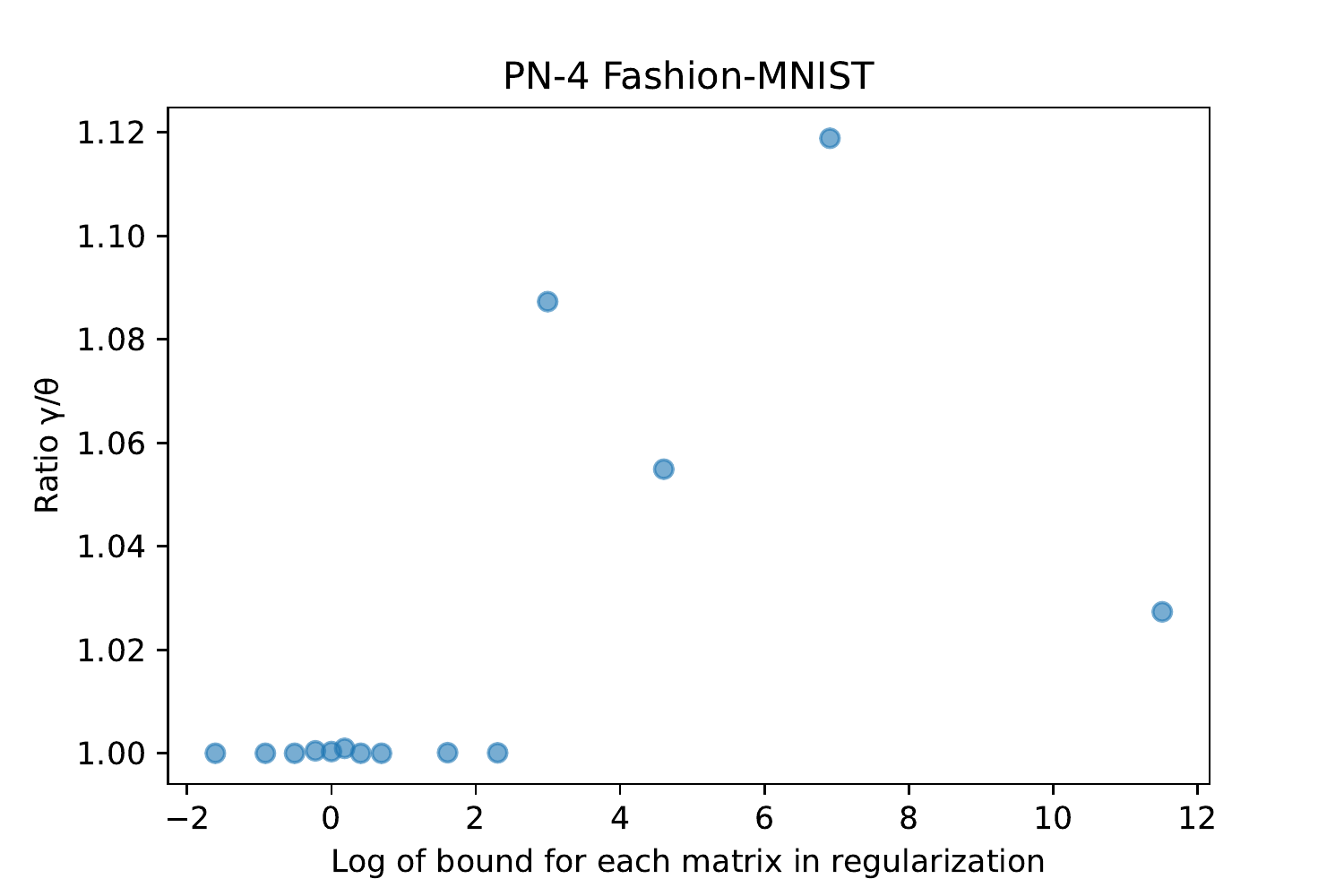}\hspace{-4mm}}
    \subfloat[]{\includegraphics[width=0.51\linewidth]{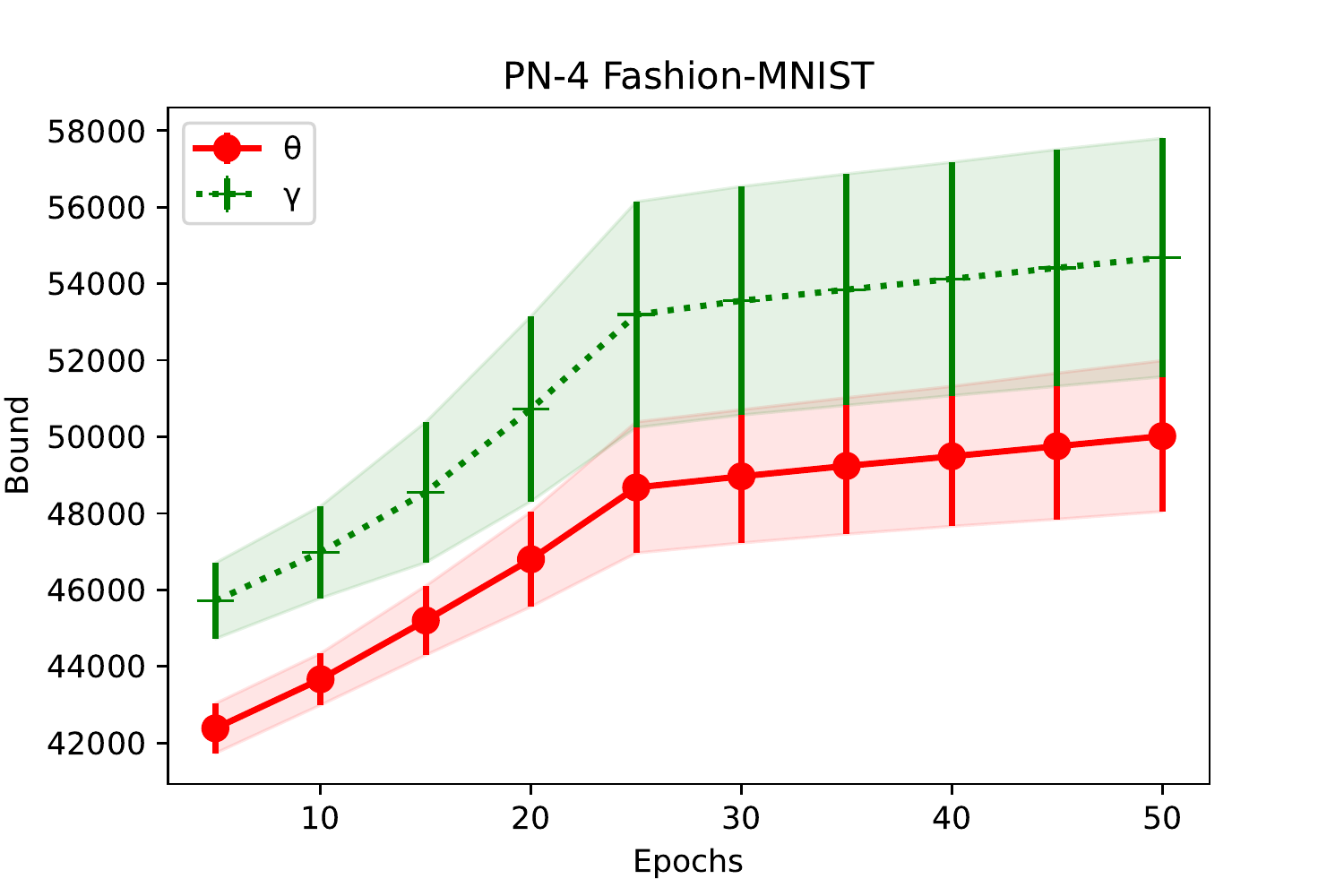}} \\
    \subfloat[]{\includegraphics[width=0.51\linewidth]{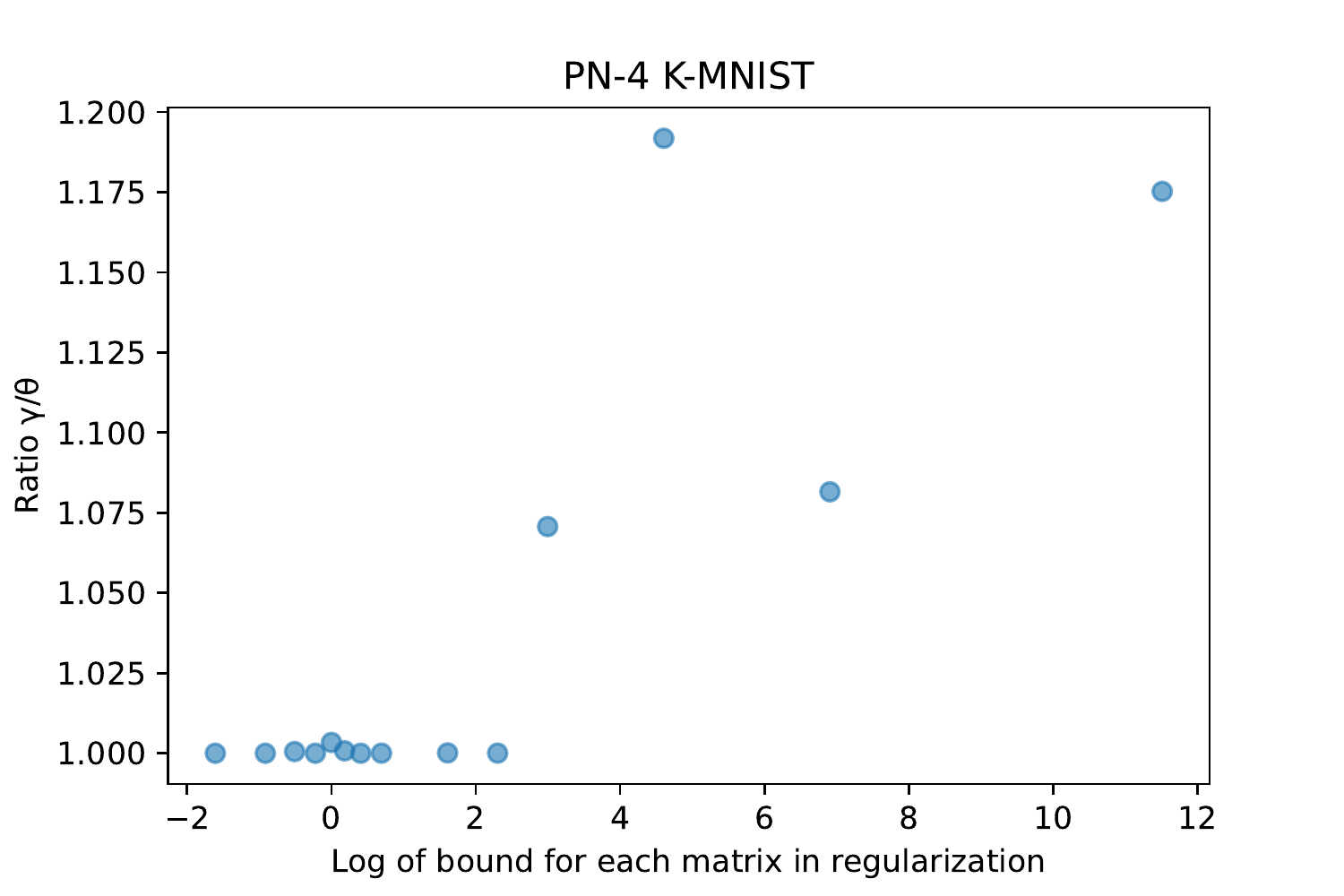}\hspace{-4mm}}
    \subfloat[]{\includegraphics[width=0.51\linewidth]{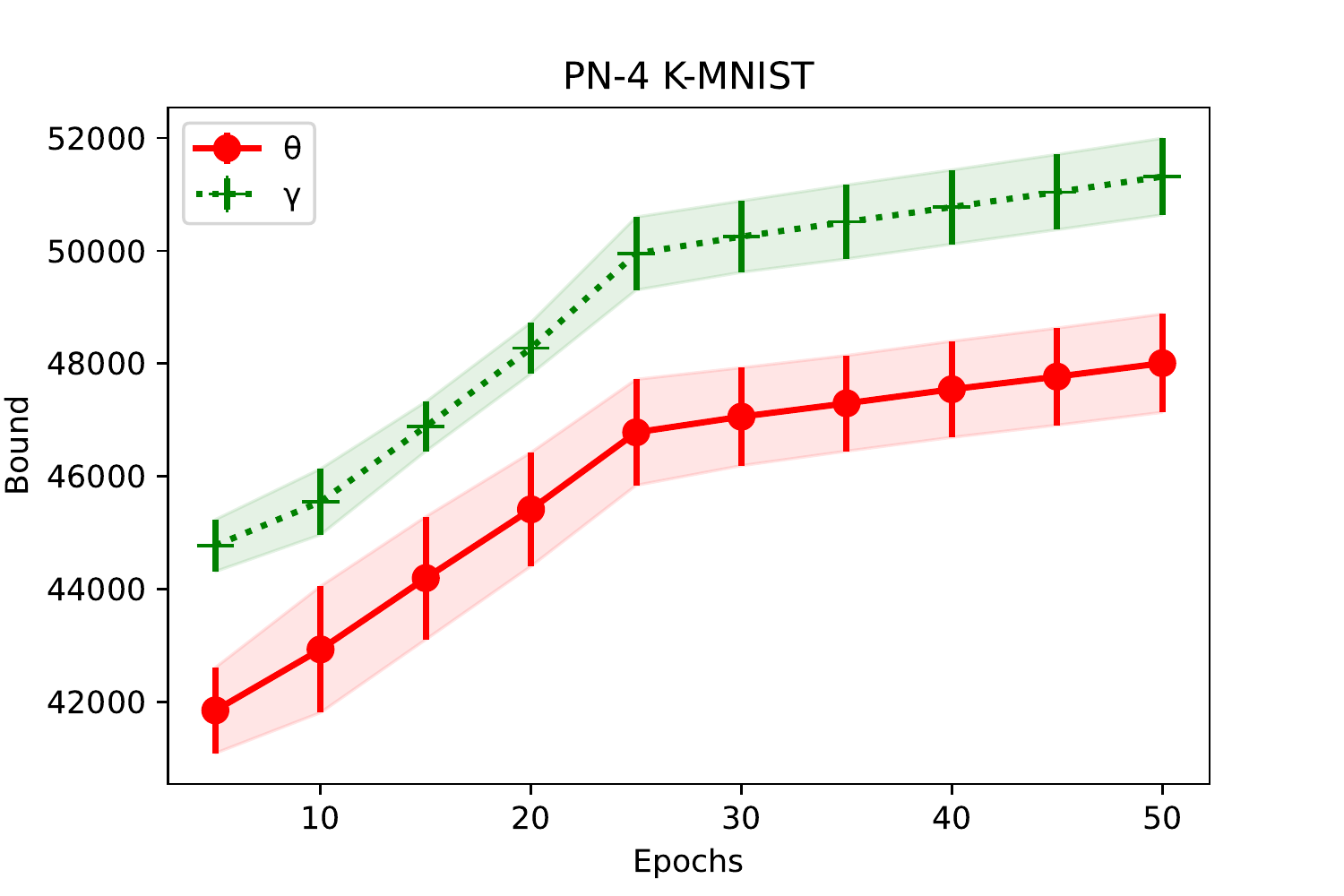}} 
\caption{\rebuttal{Visualization of the difference between the bound results on Fashion-MNIST (top row) and on K-MNIST (bottom row). Specifically, in (a) and (c) we visualize the ratio $\frac{\gamma}{\theta} = \frac{\prod_{i=1}^{k} \| \bm{U}_i \|_\infty}{\left \| \bullet_{i=1}^{k}\bm{U}_{i} \right \|_{\infty}}$ for different log bound values for \shortpolynamesingle-4. In (b), (d) the exact values of the two bounds are computed over the course of the unregularized training. Notice that there is a gap between the two bounds, however importantly the two bounds are increasing at the same rate, while their ratio is close to 1.}}
\label{fig:complexity_bound_ratio_PN4_FashionMNIST_KMNIST}
\end{figure*}

\begin{figure*}
\centering
    \centering
    \subfloat[]{\includegraphics[width=0.5\linewidth]{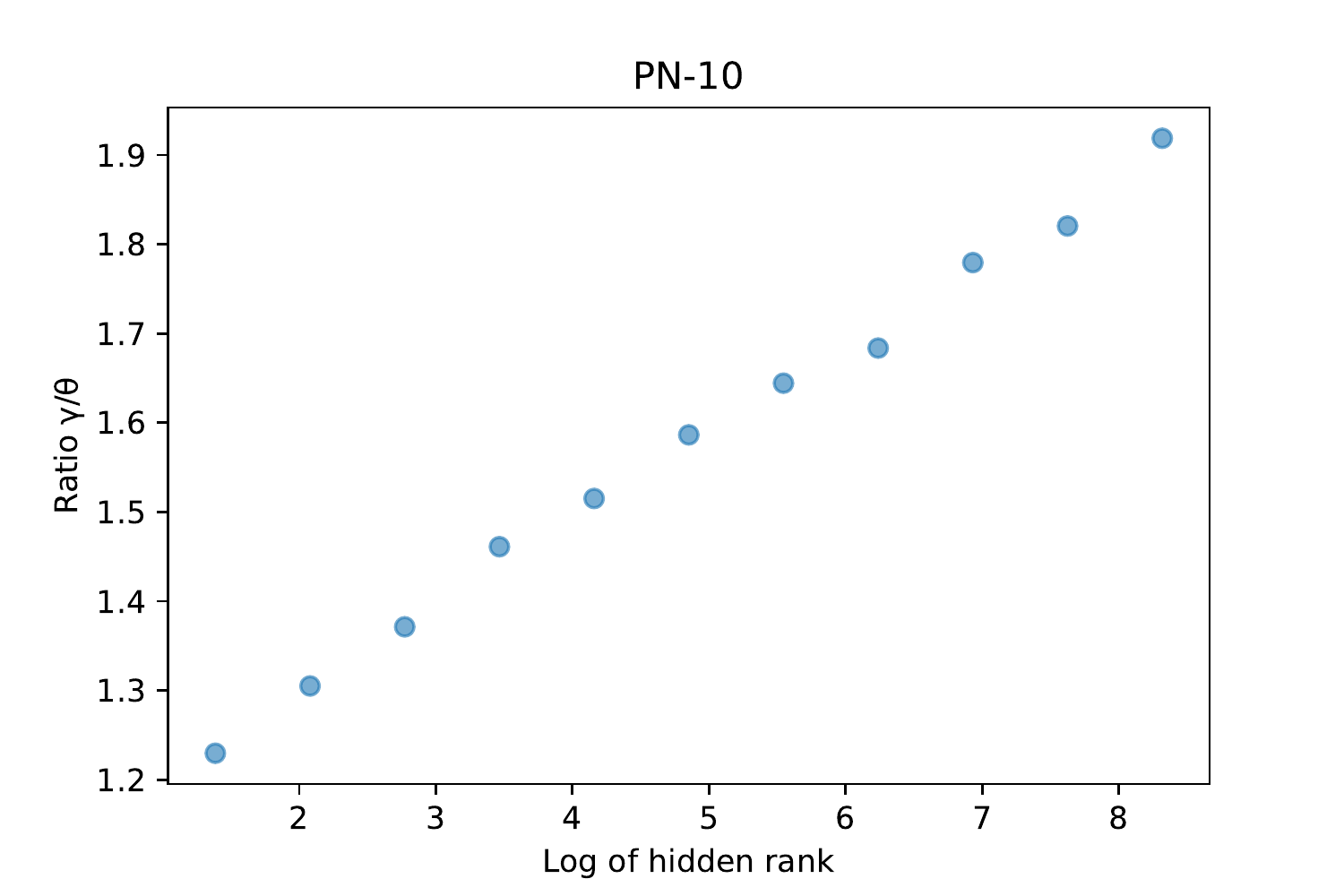}}
    \subfloat[]{\includegraphics[width=0.5\linewidth]{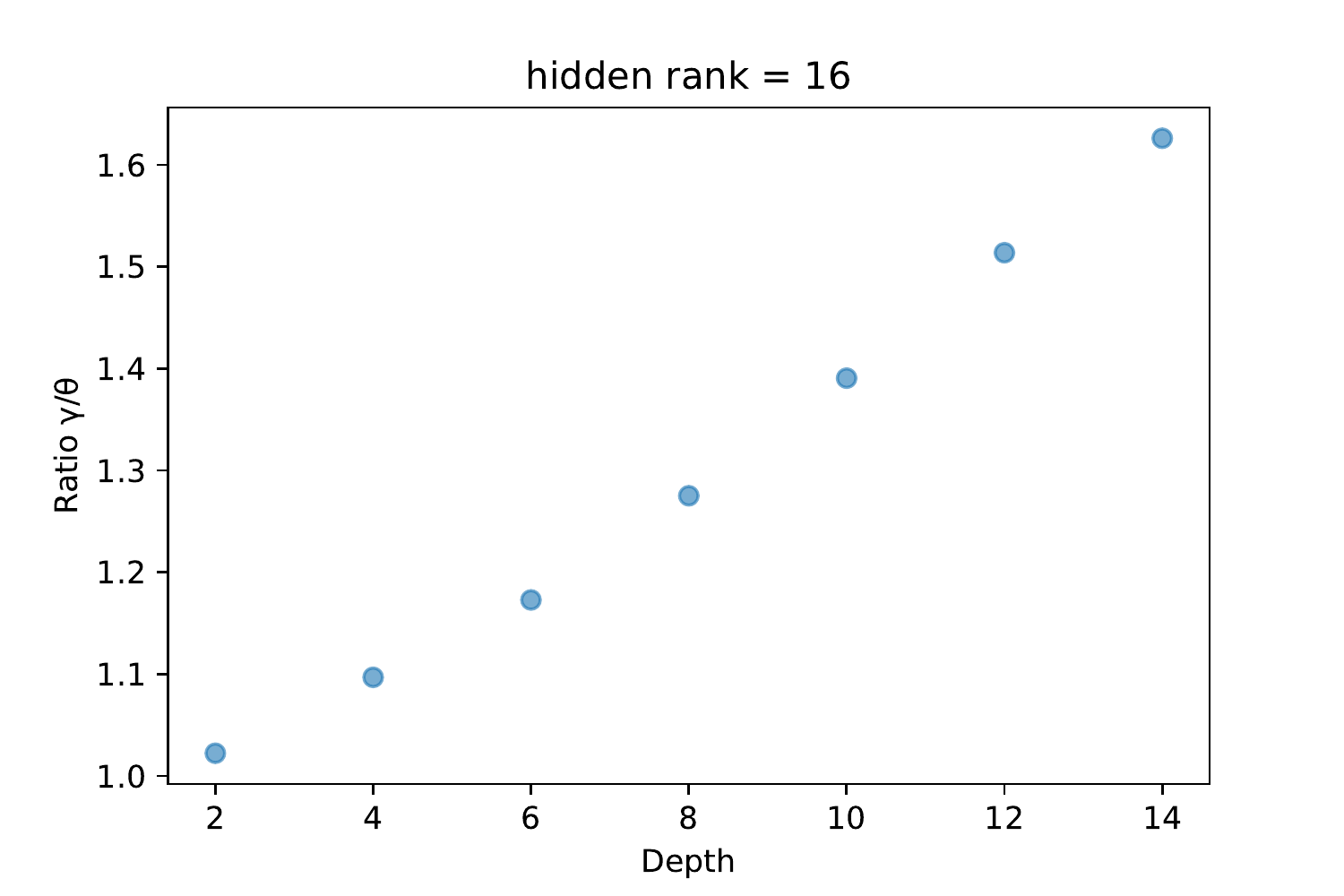}}
\caption{\rebuttal{Visualization of the ratio $\frac{\prod_{i=1}^{k} \| \bm{U}_i \|_\infty}{\left \| \bullet_{i=1}^{k}\bm{U}_{i} \right \|_{\infty}}$ in a randomly initialized network (i.e., using normal distribution random matrices). Specifically, in (a) we visualize the ratio for different log hidden rank values for \shortpolynamesingle-10. In (b) we visualize the ratio for different depth values for hidden rank = 16. Neither of two plots contain any regularization. }}
\label{fig:rebuttal_ratio_hidden_rank}
\end{figure*}

\subsection{Ablation study on the hidden size}
\label{ssec:complexity_experiment_hidden_size}

Initially, we explore the effect of the hidden rank of \shortpolynamesingle-4 and \shortpolynamesingle-10 on Fashion-MNIST. \cref{fig:Hidden_size} exhibits the accuracy on both the training and the test-set for both models. We observe that \shortpolynamesingle-10 has a better accuracy on the training set, however the accuracy on the test set is the same in the two models. We also note that increasing the hidden rank improves the accuracy on the training set, but not on the test set. 

\begin{figure}[htbp]
\centering
\includegraphics[width=0.5\textwidth]{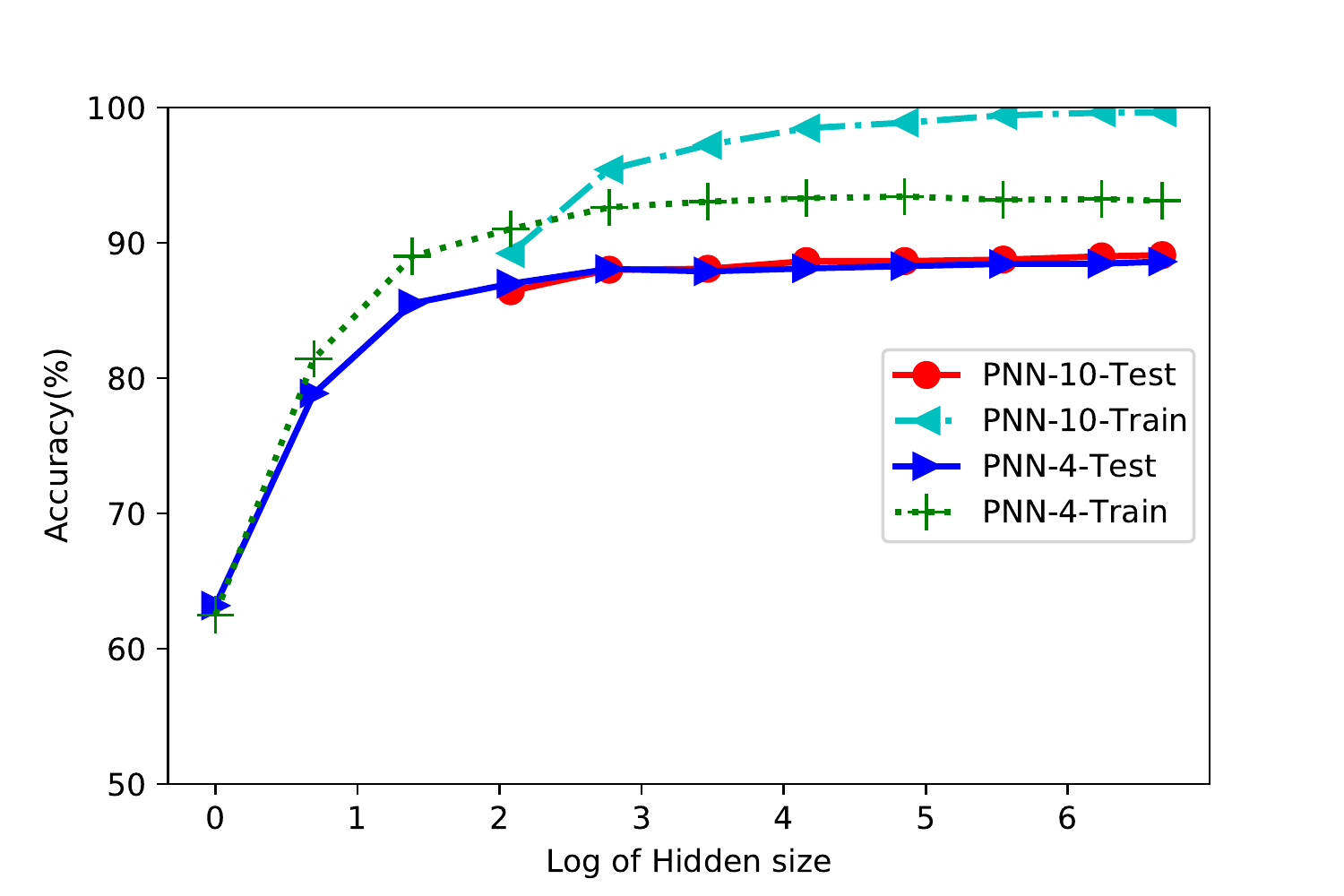}
\caption{Accuracy of \shortpolynamesingle-4 and \shortpolynamesingle-10 when the hidden rank varies (plotted in log-scale).} 
\label{fig:Hidden_size}
\end{figure}

\subsection{Ablation study on the effect of adversarial steps}
\label{ssec:Ablation_study_on_the_effect_of_adversarial_steps}

Our next experiment scrutinizes the effect of the number of adversarial steps on the robust accuracy. We consider in all cases a projection bound of $1$, which provides the best empirical results. We vary the number of adversarial steps and report the accuracy in \cref{fig:Ablation_study_on_the_effect_of_adversarial_steps_fashionmnist_and_kmnist}. The results exhibit a similar performance both in terms of the dataset (i.e., Fashion-MNIST and K-MNIST) and in terms of the network (\shortpolynamesingle-4 and \shortpolynamesingle-Conv). Notice that when the adversarial attack has more than $10$ steps the performance does not vary significantly from the performance at $10$ steps, indicating that the projection bound is effective for stronger adversarial attacks. 

\begin{figure*}
\centering
    \centering
    \includegraphics[width=0.9\linewidth]{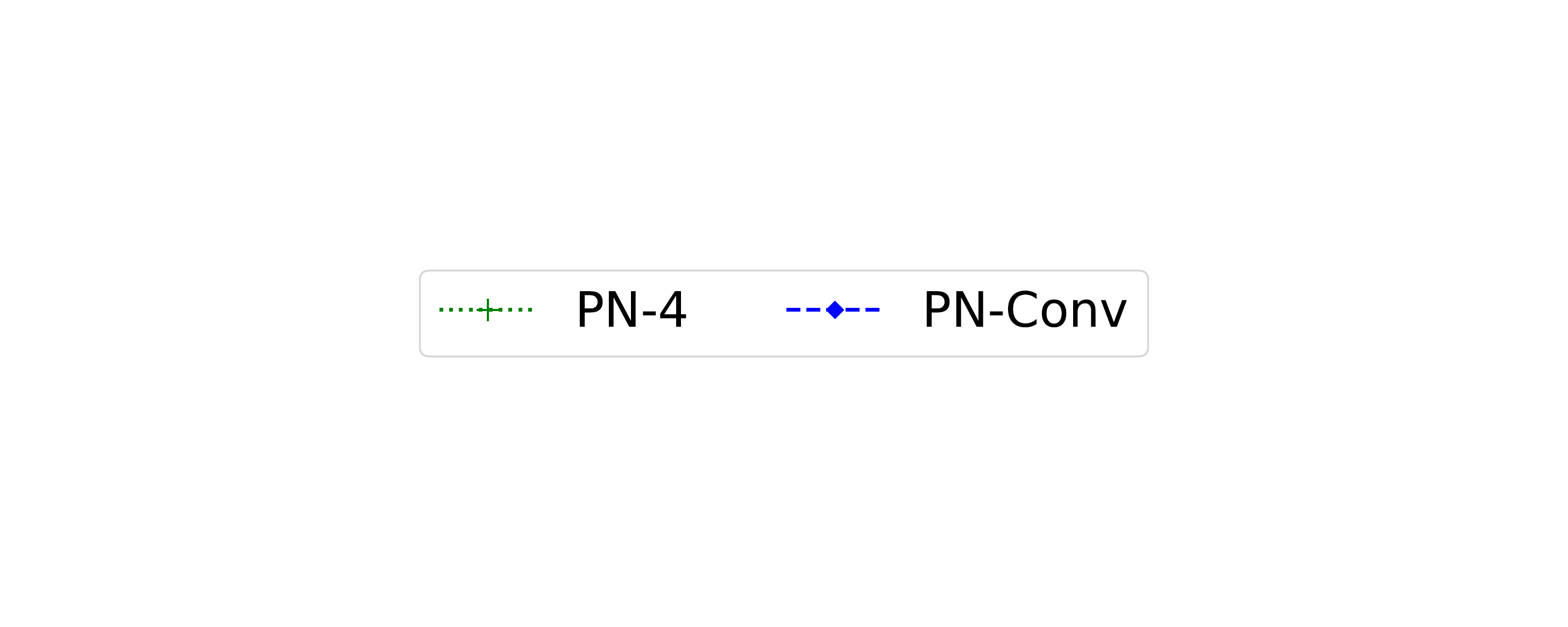} \\\vspace{-18.2mm}
    \subfloat[Fashion-MNIST]{\includegraphics[width=0.49\linewidth]{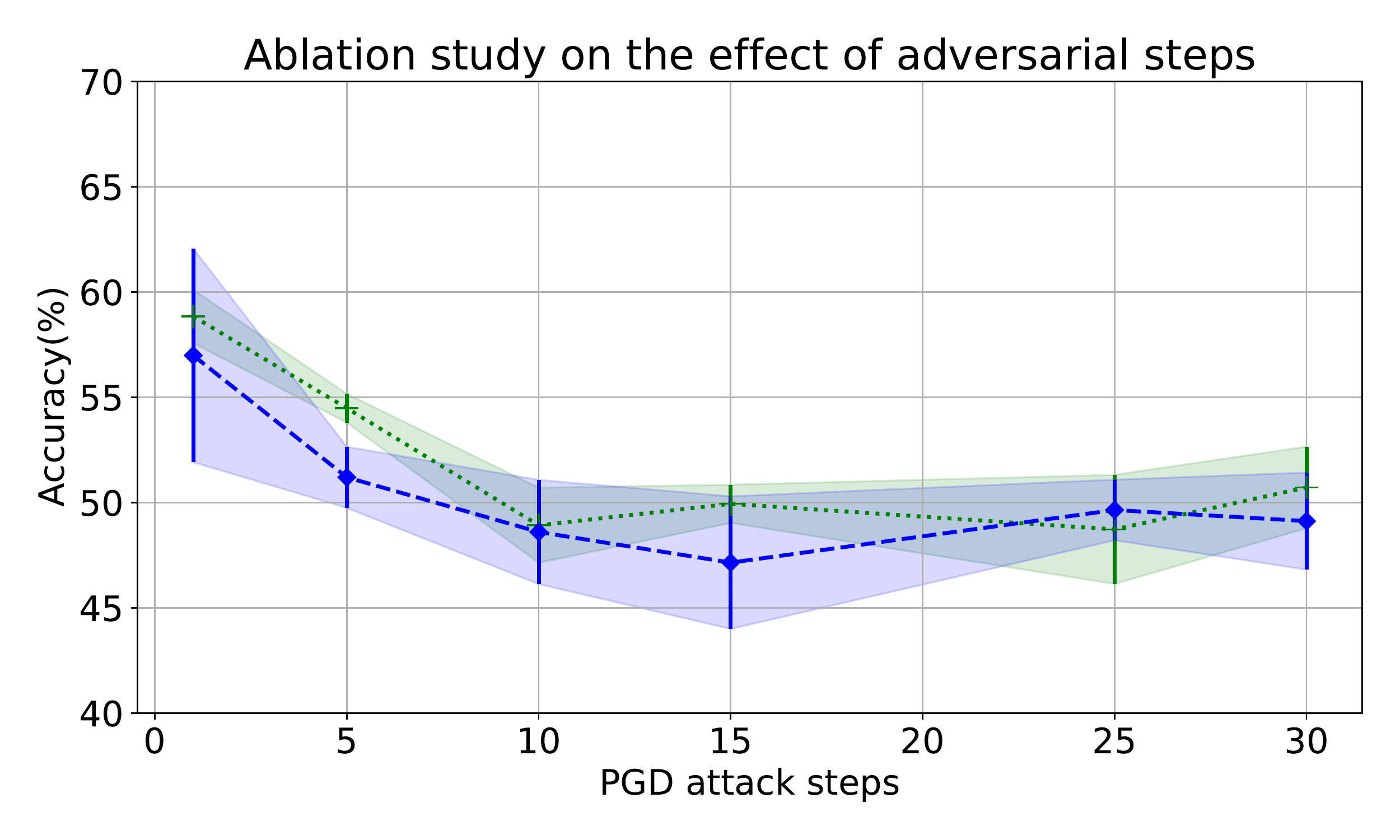}\hspace{-1mm}}
    \subfloat[K-MNIST]{\includegraphics[width=0.49\linewidth]{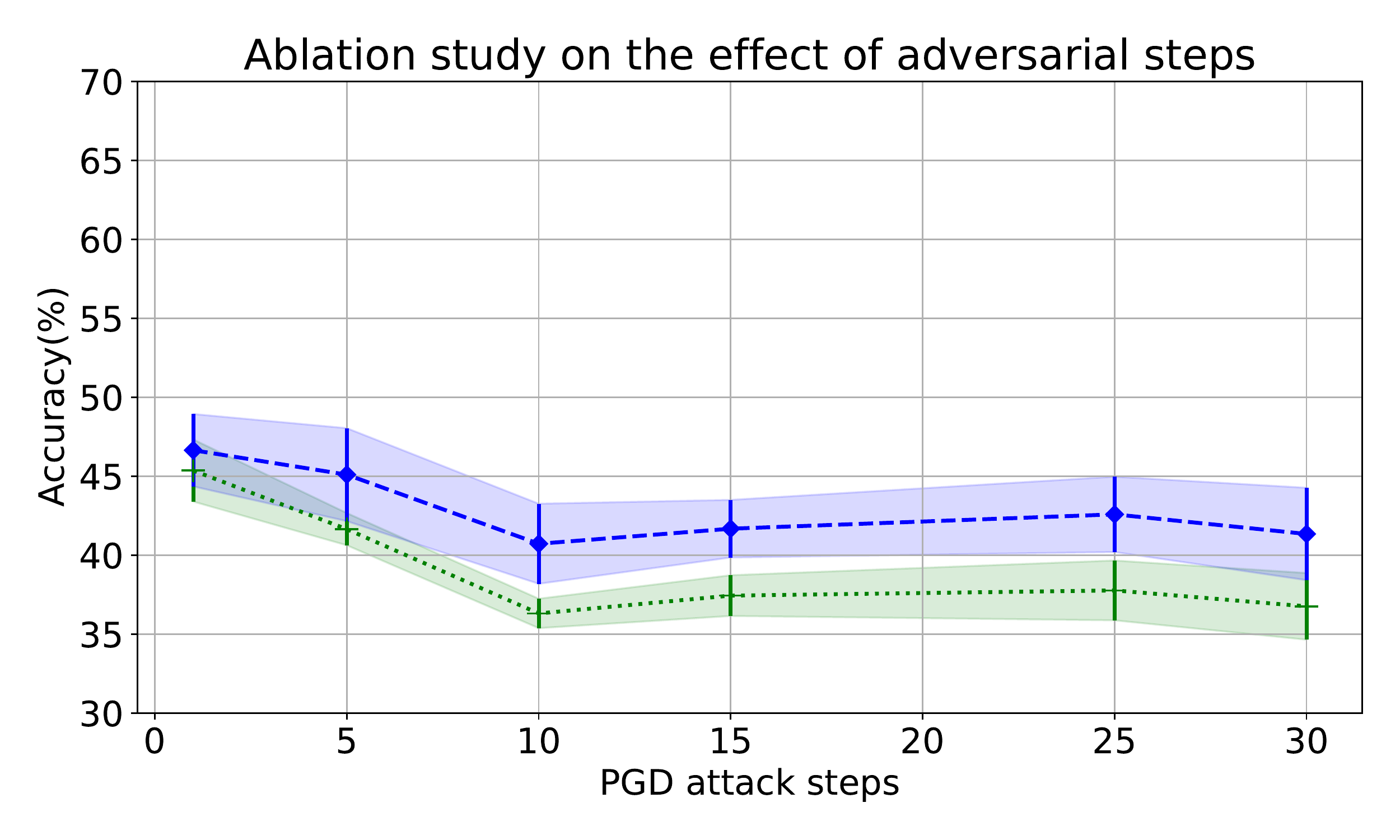}\hspace{-1mm}}
\caption{Ablation study on the effect of adversarial steps in Fashion-MNIST and K-MNIST. All methods are run by considering a projection bound of $1$. }
\label{fig:Ablation_study_on_the_effect_of_adversarial_steps_fashionmnist_and_kmnist}
\end{figure*}

\subsection{Evaluation of the accuracy of \shortpolyname{}}
\label{ssec:complexity_experiment_no_noise}

In this experiment, we evaluate the accuracy of \shortpolyname. We consider three networks, i.e., \shortpolynamesingle-4, \shortpolynamesingle-10 and \shortpolynamesingle-Conv, and train them under varying projection bounds using \cref{algo:complexity_projection_only}. Each model is evaluated on the test set of (a) Fashion-MNIST and (b) E-MNIST. 

The accuracy of each method is reported in \cref{fig:benign_proj}, where the x-axis is plotted in log-scale (natural logarithm). The accuracy is better for bounds larger than $2$ (in the log-axis) when compared to tighter bounds (i.e., values less than $0$). Very tight bounds stifle the ability of the network to learn, which explains the decreased accuracy. Interestingly, \shortpolynamesingle-4 reaches similar accuracy to \shortpolynamesingle-10 and \shortpolynamesingle-Conv in Fashion-MNIST as the bound increases, while in E-MNIST it cannot reach the same performance as the bound increases. The best bounds for all three models are observed in the intermediate values, i.e., in the region of $1$ in the log-axis for \shortpolynamesingle-4 and \shortpolynamesingle-10. 

\begin{figure}[htbp]
    \centering
    \includegraphics[width=0.9\linewidth]{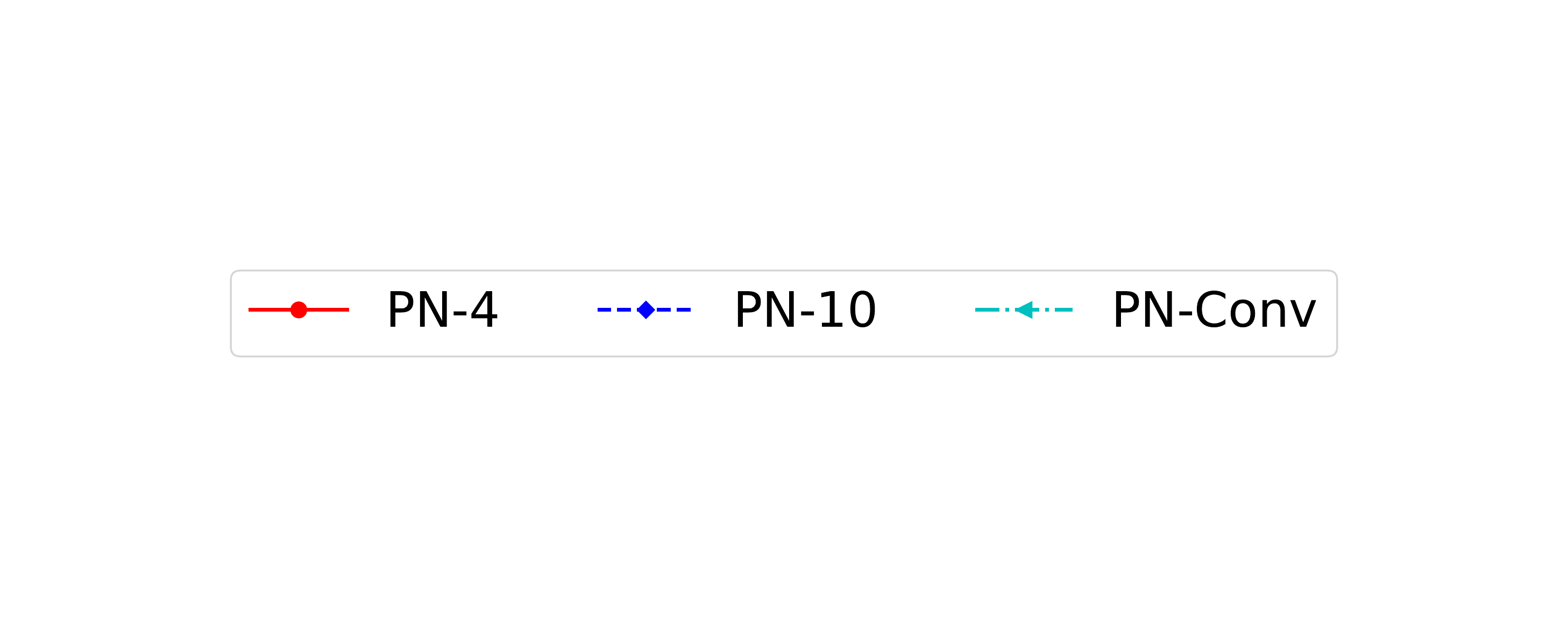} \\\vspace{-14.2mm}
    \includegraphics[width=1\linewidth]{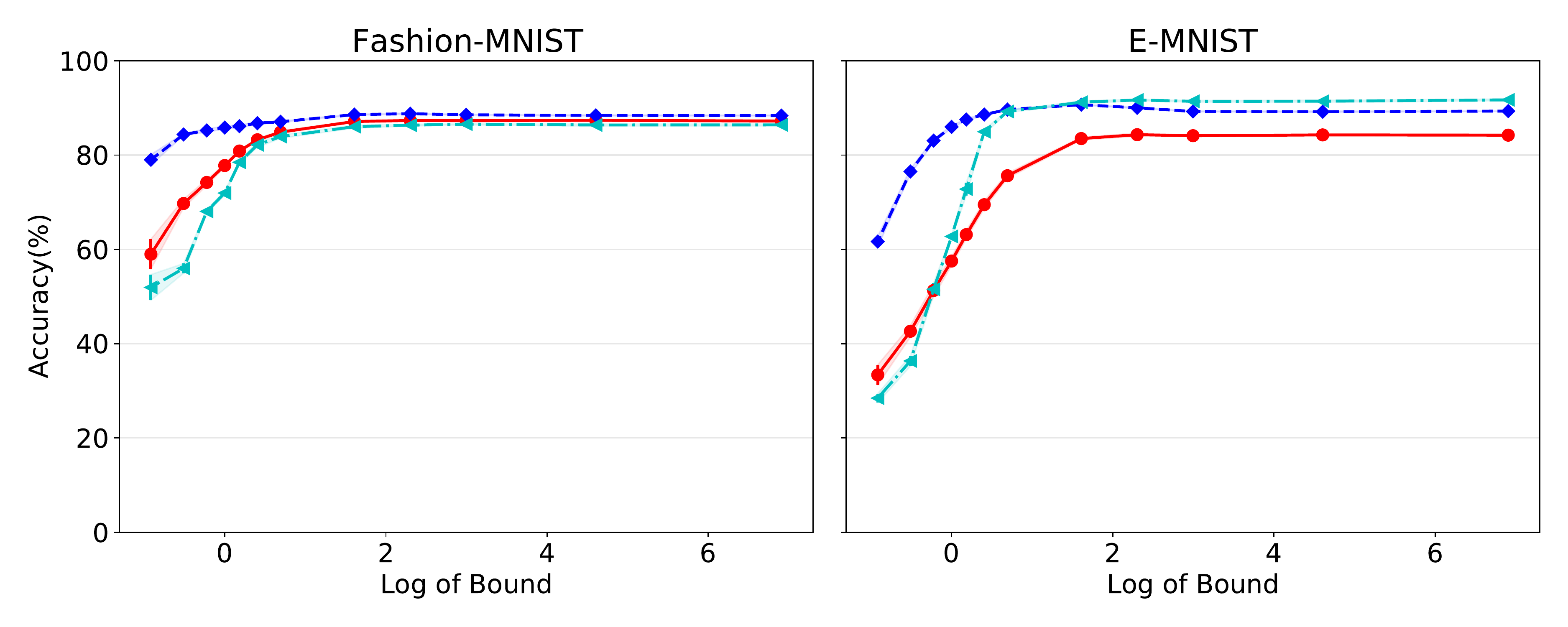}
    \caption{Accuracy of \shortpolynamesingle-4, \shortpolynamesingle-10 and \shortpolynamesingle-Conv under varying projection bounds (x-axis in log-scale) learned on (a) Fashion-MNIST, (b) E-MNIST. Notice that the performance increases for intermediate values, while it deteriorates when the bound is very tight.}
    \label{fig:benign_proj}
\end{figure}

We scrutinize further the projection bounds by training the same models only with cross-entropy loss (i.e., no bound regularization). In \cref{tab: result_no_noise}, we include the accuracy of the three networks with and without projection. Note that projection consistently improves the accuracy, particularly in the case of larger networks, i.e., \shortpolynamesingle-10.

\begin{table}[htbp]
\centering
\begin{tabular}{l@{\hspace{0.2cm}} c@{\hspace{0.5cm}} c@{\hspace{0.5cm}} c}
    \hline
    \multirow{2}{*}{Method} &  PN-4 & PN-10 & PN-Conv \\
        & \multicolumn{3}{c}{ \textcolor{blue}{\emph{Fashion-MNIST}}}\\
    \hline
    
    No projection & $87.28\pm0.18\%$ & $88.48\pm0.17\%$ & $86.36\pm0.21\%$  \\
   
    Projection & $87.32\pm0.14\%$ & $88.72\pm0.12\%$ & $86.38\pm0.26\%$  \\
    
    \hline
    & \multicolumn{3}{c}{ \textcolor{blue}{\emph{E-MNIST}}} \\
    \hline
    No projection  & $84.27\pm0.26\%$ & $89.31\pm0.09\%$ & $91.49\pm0.29\%$   \\
   
    Projection   & $84.34\pm0.31\%$ & $90.56\pm0.10\%$ & $91.57\pm0.19\%$   \\
    \hline
\end{tabular}
\vspace{2pt}
\caption{The accuracy of different \shortpolynamesingle{} models on Fashion-MNIST (top) and E-MNIST (bottom) when trained only with SGD (first row) and when trained with projection (last row).}
\label{tab: result_no_noise}
\end{table}

\subsection{Experimental results on additional datasets}
\label{ssec:Experimental_results_of_More_dataset}

To validate even further we experiment with additional datasets. We describe the datasets below and then present the robust accuracy in each case. The experimental setup remains the same as in \cref{ssec:complexity_experiment_robustness} in the main paper. As a reminder, we are evaluating the robustness of the different models under adversarial noise.

\textbf{Dataset details:} There are six datasets used in this work:

\begin{enumerate}
    \item \emph{Fashion-MNIST}~\citep{xiao2017fashion} includes grayscale images of clothing. The training set consists of $60,000$ examples, and the test set of $10,000$ examples. The resolution of each image is $28\times 28$, with each image belonging to one of the $10$ classes. 

    \item \emph{E-MNIST}~\citep{cohen2017emnist} includes handwritten character and digit images with a training set of $124,800$ examples, and a test set of $20,800$ examples. The resolution of each image is $28\times 28$. E-MNIST includes $26$ classes. We also use the variant \emph{EMNIST-BY} that includes $62$ classes with $697,932$ examples for training and $116,323$ examples for testing.

    \item \emph{K-MNIST}~\citep{clanuwat2018deep} depicts grayscale images of Hiragana characters with a training set of $60,000$examples, and a test set of $10,000$ examples. The resolution of each image is $28\times 28$. K-MNIST has $10$ classes. 
    
    \item \emph{MNIST}~\citep{726791} includes handwritten digits images. MNIST has a training set of $60,000$ examples, and a test set of $10,000$ examples. The resolution of each image is $28\times 28$.

    \item \emph{CIFAR-10}~\citep{krizhevsky2014cifar} depicts images of natural scenes. CIFAR-10 has a training set of $50,000$ examples, and a test set of $10,000$ examples. The resolution of each RGB image is $32\times 32$.

    \item \emph{NSYNTH}~\citep{nsynth2017} is an audio dataset containing $305,979$ musical notes, each with a unique pitch, timbre, and envelope. 
    
\end{enumerate}

We provide a visualization\footnote{The samples were found in \url{https://www.tensorflow.org/datasets/catalog}.} of indicative samples from \rebuttal{MNIST, Fashion-MNIST, K-MNIST and E-MNIST in \cref{fig:complexity_sample_datasets}}.

We originally train \shortpolynamesingle-4, \shortpolynamesingle-10 and \shortpolynamesingle-Conv without projection bounds. The results are reported in \cref{tab:regularization_compare_kmnist_mnist} (columns titled `No proj') for MNIST and \rebuttal{K-MNIST}, \cref{tab:Test_att_with_proj_emnistby} (columns titled `No proj') for E-MNIST-BY and \cref{tab:Test_att_with_proj_nsynth} (columns titled `No proj') for NSYNTH.  Next, we consider the performance under varying projection bounds; the accuracy in each case is depicted in \cref{fig:test_att_proj_kmnist_mnist_emnistby} for K-MNIST, MNIST and E-MNIST-BY and \cref{fig:test_att_proj_nsynth} for NSYNTH. The figures (and the tables) depict the same patterns that emerged in the two main experiments, i.e., the performance can be vastly improved for intermediate values of the projection bound. 
Similarly, we validate the performance when using adversarial training. The results in \cref{tab:regularization_compare_KMNIST_MNIST_AT} demonstrate the benefits of using projection bounds even in the case of adversarial training.

\begin{figure}[htbp]
    \centering
    \subfloat[MNIST]{\includegraphics[width=0.5\textwidth]{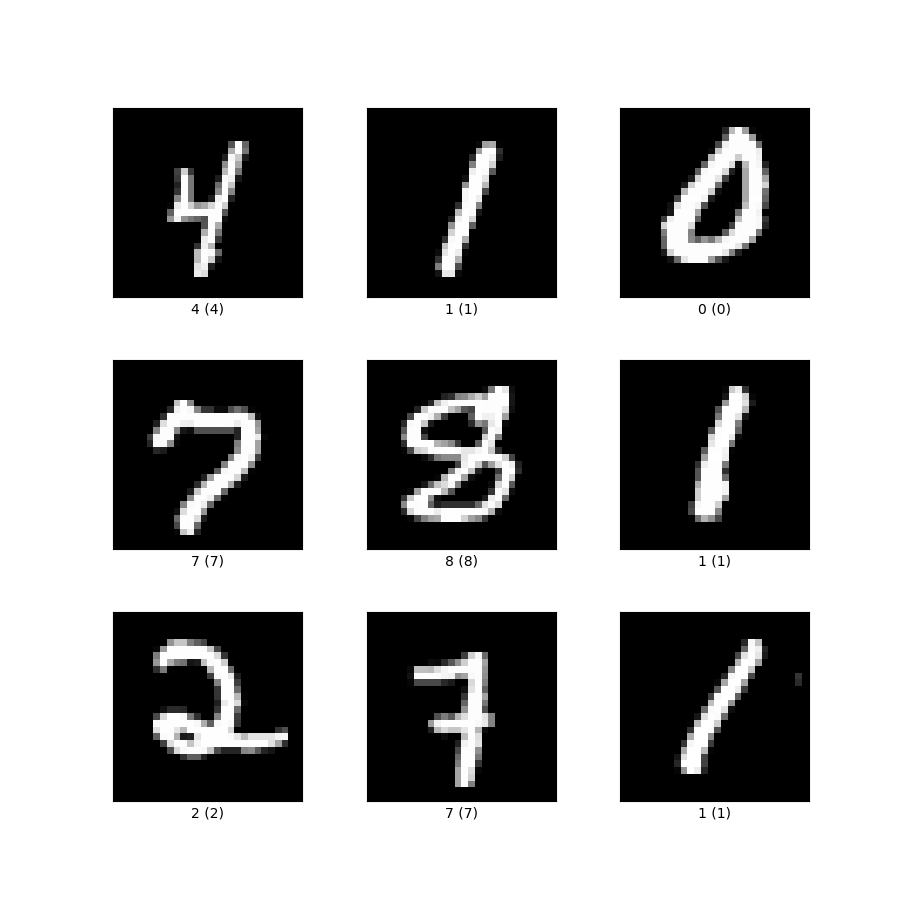}}
    \subfloat[Fashion-MNIST]{\includegraphics[width=0.5\textwidth]{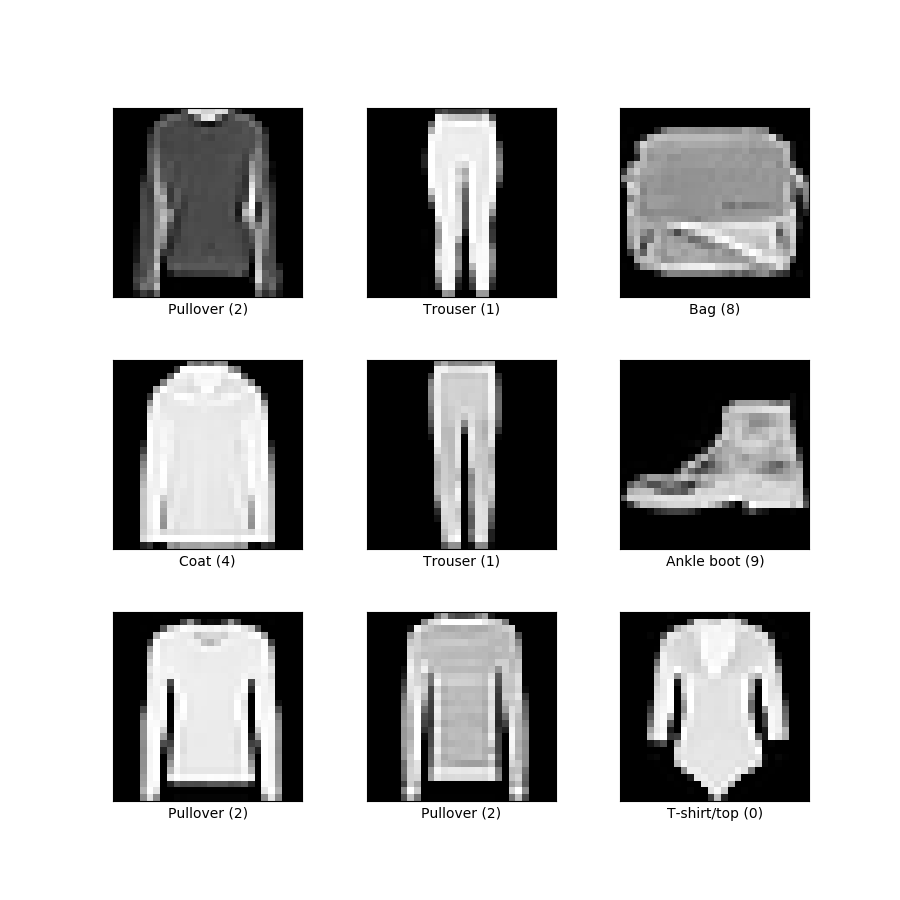}} \\
    \subfloat[K-MNIST]{\includegraphics[width=0.5\textwidth]{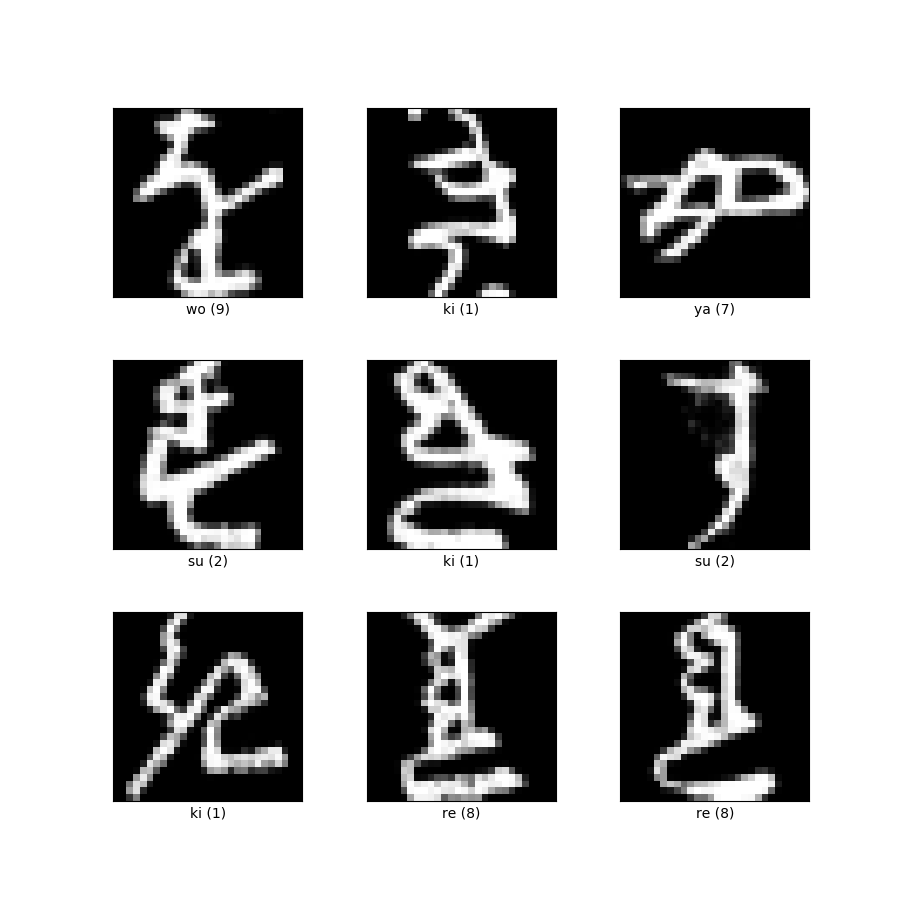}}
    \subfloat[E-MNIST]{\includegraphics[width=0.5\textwidth]{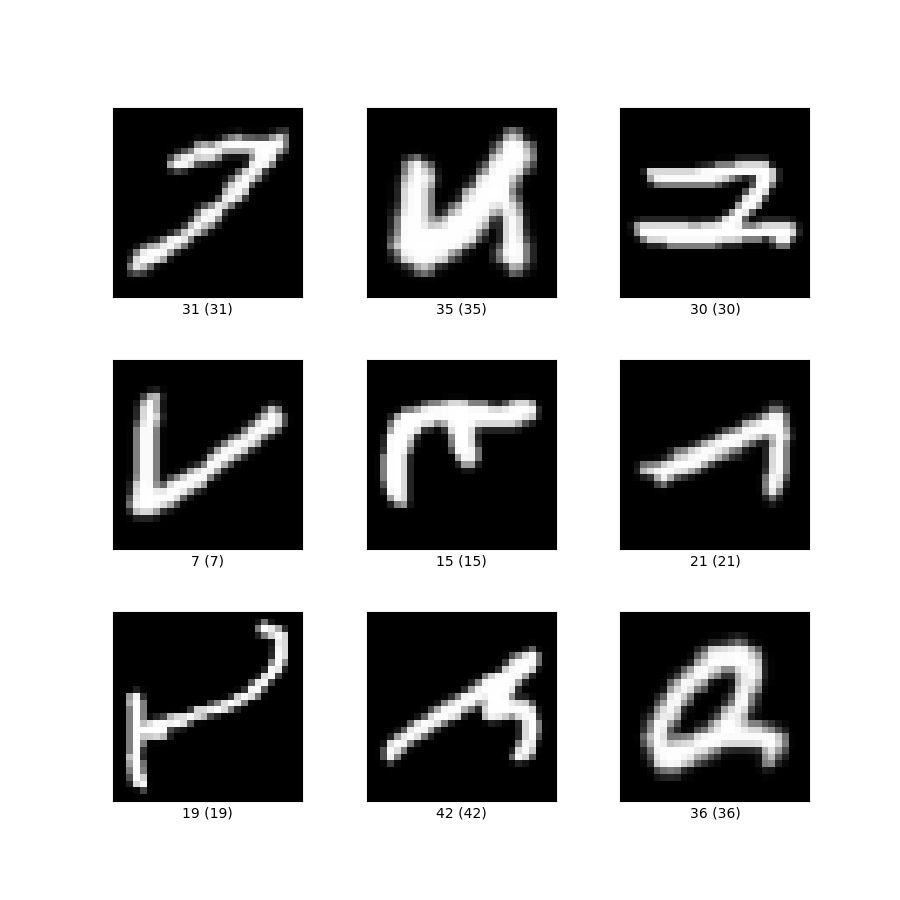}} 
    \caption{Samples from the datasets used for the numerical evidence. Below each image, the class name and the class number are denoted.}
    \label{fig:complexity_sample_datasets}
\end{figure}

\begin{figure*}
\centering
    \centering
    \includegraphics[width=0.9\linewidth]{figures_new-legend.pdf} \\\vspace{-15.9mm}
    \subfloat[K-MNIST]{\includegraphics[width=1\linewidth]{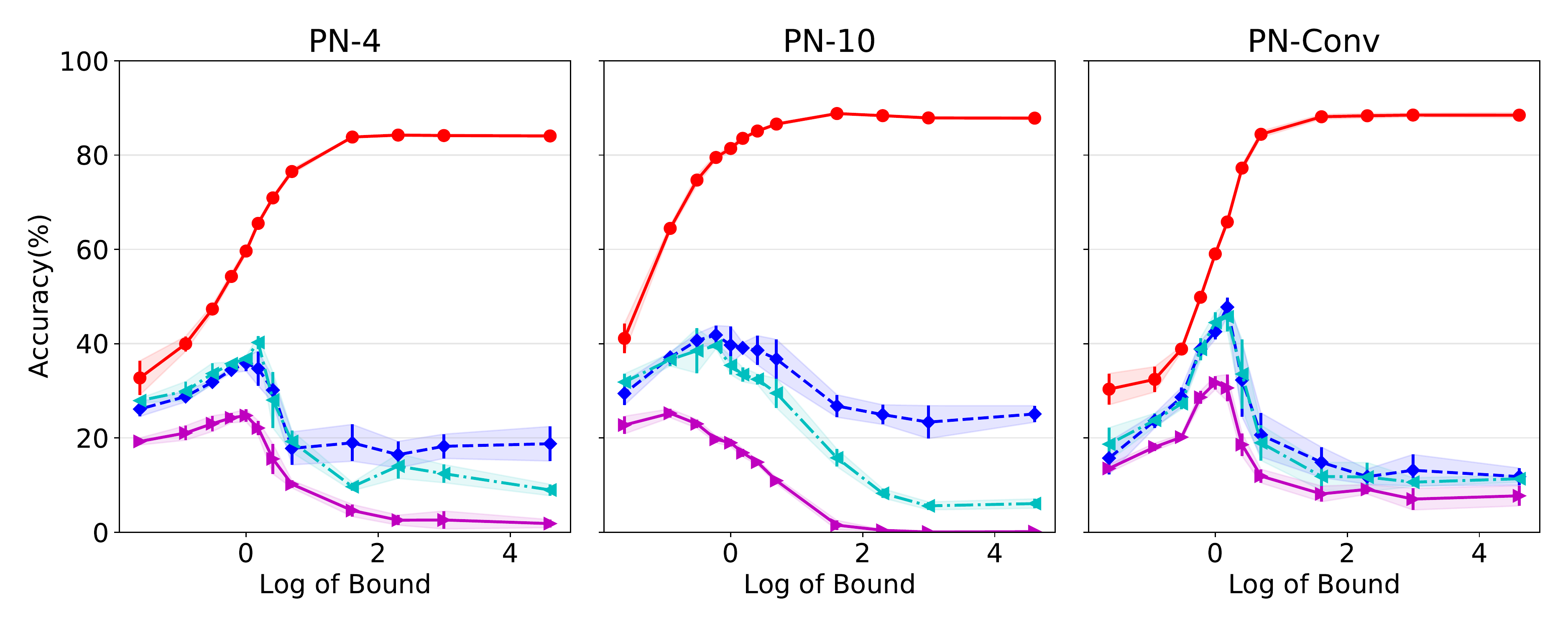}\vspace{-2.5mm}}\\
    \subfloat[MNIST]{\includegraphics[width=1\linewidth]{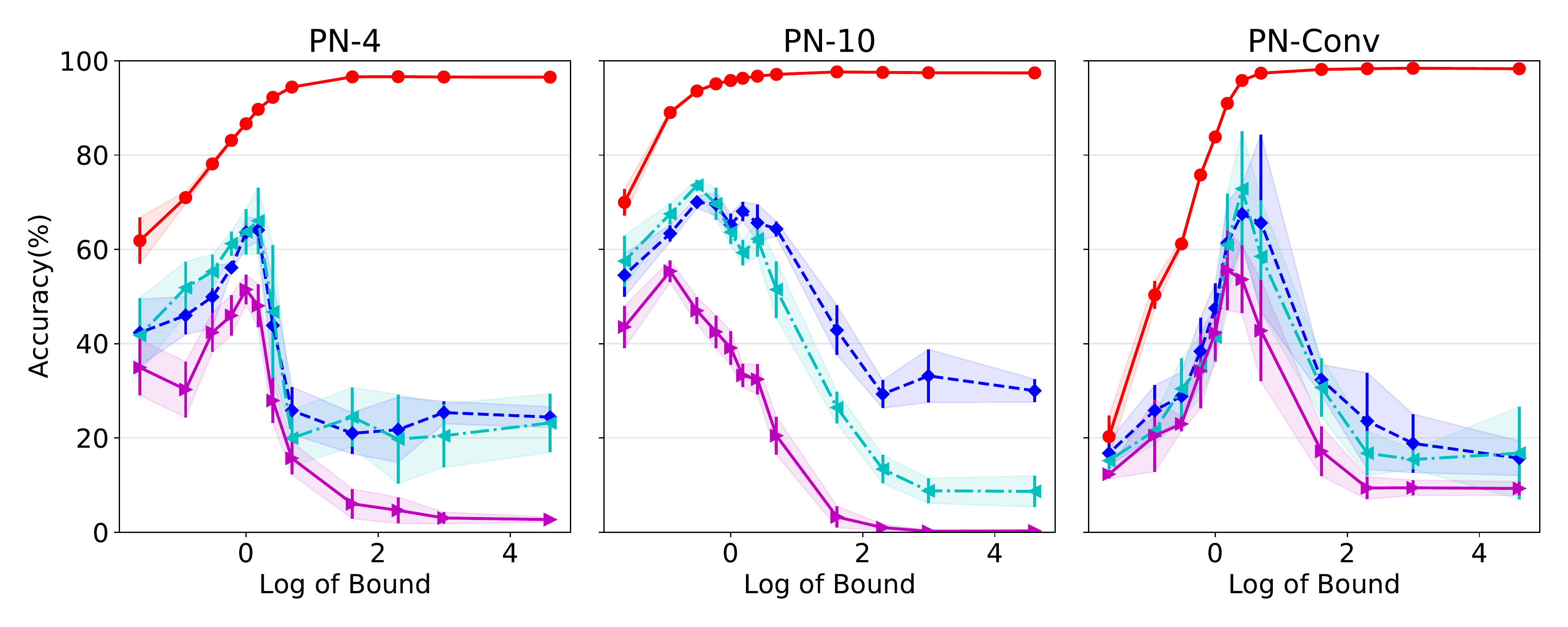}\vspace{-2.5mm} }\\
    \subfloat[E-MNIST-BY]{\includegraphics[width=1\linewidth]{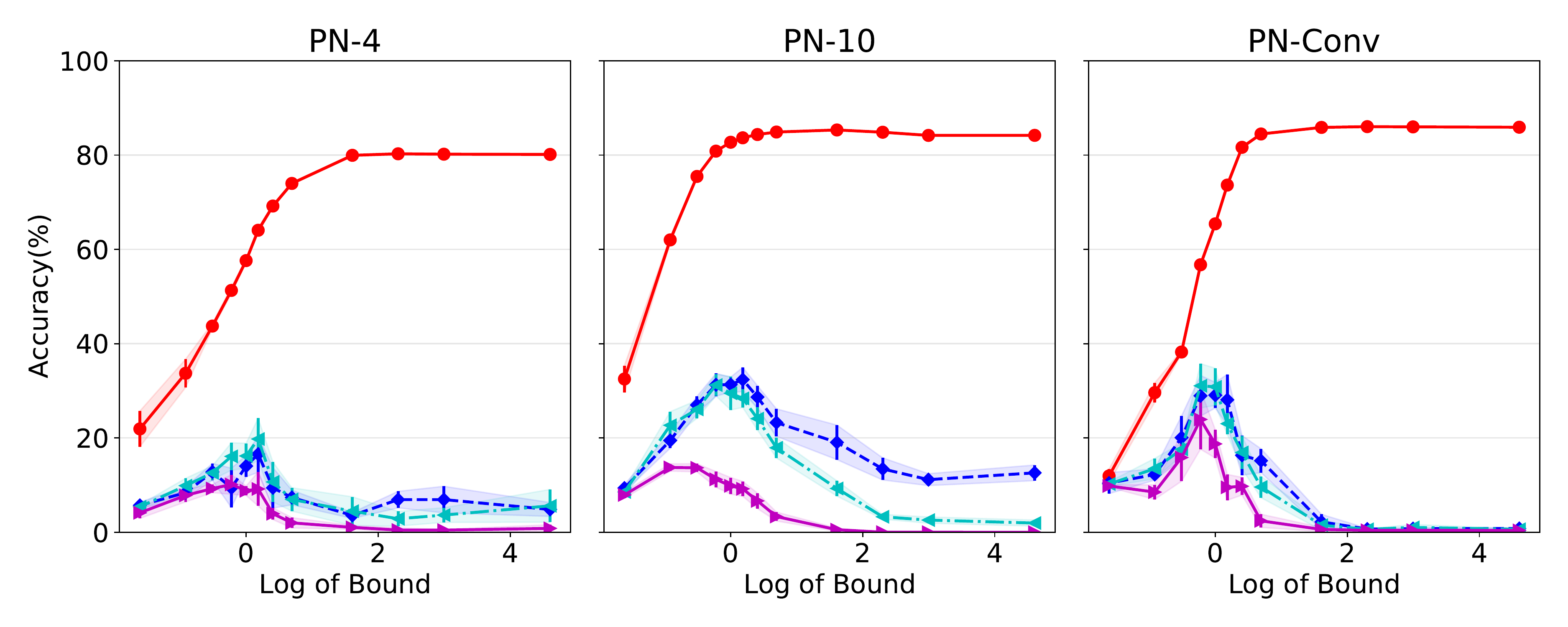}\vspace{-2.5mm}}
    \vspace{-3mm}
\caption{Adversarial attacks during testing on (a) K-MNIST (top), (b) MNIST (middle), (c) E-MNIST-BY (bottom) with the x-axis is plotted in log-scale. Note that intermediate values of projection bounds yield the highest accuracy. The patterns are consistent in all datasets and across adversarial attacks.}
\label{fig:test_att_proj_kmnist_mnist_emnistby}
\end{figure*}

\begin{table}[t]
\centering
\scalebox{0.96}{
\begin{tabular}{l@{\hspace{0.15cm}} l@{\hspace{0.15cm}} c@{\hspace{0.25cm}} c@{\hspace{0.25cm}} c@{\hspace{0.25cm}}c}
    \hline
    & \multirow{2}{*}{Method} &  No proj. & Our method & Jacobian & $L_2$  \\
      &  & \multicolumn{4}{c}{\textcolor{blue}{\emph{K-MNIST}}} \\
    \hline
    \multirow{4}{*}{\shortpolynamesingle-4} & Clean &  $84.04\pm0.30\%$ & $\bm{84.23\pm0.30\%}$ & $83.16\pm0.31\%$  & $84.18\pm0.44\%$ \\
    &     FGSM-0.1 & $18.86\pm2.61\%$ &  $\bm{35.84\pm1.67\%}$ & $22.61\pm1.30\%$ & $22.05\pm2.76\%$ \\
    &     PGD-(0.1, 20, 0.01) & $11.20\pm4.27\%$ & $\bm{40.26\pm1.36\%}$ & $16.00\pm5.63\%$ & $10.93\pm2.42\%$ \\
    &     PGD-(0.3, 20, 0.03) & $1.94\pm1.11\%$ & $\bm{24.75\pm1.32\%}$ & $4.46\pm2.59\%$ & $2.70\pm1.26\%$ \\
    \hline
    \multirow{4}{*}{\shortpolynamesingle-10} & Clean &  $87.90\pm0.24\%$ & $\bm{88.80\pm0.19\%}$ & $88.73\pm0.16\%$  & $87.93\pm0.18\%$ \\
     & FGSM-0.1 & $24.52\pm1.44\%$ &  $\bm{41.83\pm2.00\%}$ & $26.90\pm1.02\%$ & $26.62\pm1.59\%$ \\
    & PGD-(0.1, 20, 0.01) & $7.54\pm0.79\%$ & $\bm{39.55\pm0.64\%}$ & $11.50\pm1.35\%$ & $5.09\pm0.68\%$ \\
    & PGD-(0.3, 20, 0.03) & $0.05\pm0.04\%$ & $\bm{25.24\pm0.93\%}$ & $1.24\pm0.64\%$ & $0.19\pm0.12\%$ \\
    \hline
    \multirow{4}{*}{\shortpolynamesingle-Conv}  & Clean &  $88.41\pm0.37\%$ & $88.48\pm0.42\%$ & $86.57\pm0.46\%$  & $\bm{88.56\pm0.62\%}$ \\
    & FGSM-0.1 & $13.34\pm2.01\%$ &  $\bm{47.75\pm2.03\%}$ & $14.16\pm3.05\%$ & $12.43\pm2.58\%$ \\
    & PGD-(0.1, 20, 0.01) & $10.81\pm1.25\%$ & $\bm{45.68\pm3.11\%}$ & $12.05\pm0.82\%$ & $11.05\pm0.85\%$ \\
    & PGD-(0.3, 20, 0.03) & $6.91\pm2.04\%$ & $\bm{31.68\pm1.43\%}$ & $7.54\pm1.39\%$ & $6.28\pm2.37\%$ \\
    \hline
    & & \multicolumn{4}{c}{\textcolor{blue}{\emph{MNIST}}} \\
    \hline
    \multirow{4}{*}{\shortpolynamesingle-4} &     Clean &  $96.52\pm0.13\%$ & $\bm{96.62\pm0.17\%}$ & $95.88\pm0.16\%$  & $96.44\pm0.18\%$ \\
    &     FGSM-0.1 & $20.96\pm5.16\%$ &  $\bm{64.09\pm2.41\%}$ & $33.59\pm8.46\%$ & $26.07\pm5.64\%$ \\
    &     PGD-(0.1, 20, 0.01) & $14.23\pm5.39\%$ & $\bm{66.05\pm7.06\%}$ & $20.83\pm5.64\%$ & $16.06\pm5.84\%$ \\
    &     PGD-(0.3, 20, 0.03) & $2.59\pm2.01\%$ & $\bm{51.47\pm3.17\%}$ & $4.92\pm1.18\%$ & $4.26\pm2.44\%$ \\
    \hline
    \multirow{4}{*}{\shortpolynamesingle-10}  & Clean &  $97.46\pm0.11\%$ & $\bm{97.63\pm0.06\%}$ & $97.36\pm0.05\%$  & $97.53\pm0.10\%$ \\
     & FGSM-0.1 & $30.12\pm4.58\%$ &  $\bm{70.02\pm1.28\%}$ & $40.22\pm2.31\%$ & $28.77\pm2.41\%$ \\
    & PGD-(0.1, 20, 0.01) & $9.70\pm2.11\%$ & $\bm{73.57\pm1.17\%}$ & $18.74\pm5.39\%$ & $10.91\pm2.32\%$ \\
    & PGD-(0.3, 20, 0.03) & $0.47\pm0.53\%$ & $\bm{55.36\pm2.32\%}$ & $2.49\pm1.46\%$ & $0.44\pm0.29\%$ \\
    \hline
    \multirow{4}{*}{\shortpolynamesingle-Conv} & Clean &  $98.32\pm0.12\%$ & $\bm{98.40\pm0.12\%}$ & $97.88\pm0.12\%$  & $98.32\pm0.11\%$ \\
    & FGSM-0.1 & $18.98\pm2.99\%$ &  $\bm{67.50\pm6.22\%}$ & $27.02\pm9.88\%$ & $23.77\pm5.58\%$ \\
    & PGD-(0.1, 20, 0.01) & $12.57\pm2.81\%$ & $\bm{72.85\pm12.23\%}$ & $13.96\pm2.57\%$ & $13.84\pm3.18\%$ \\
    & PGD-(0.3, 20, 0.03) & $10.57\pm4.08\%$ & $\bm{55.56\pm8.48\%}$ & $10.22\pm0.52\%$ & $9.10\pm3.62\%$ \\
    \hline
\end{tabular}
}
\vspace{-2mm}
\caption{Comparison of regularization techniques on K-MNIST (top) and MNIST (bottom). In each dataset, the base networks are \shortpolynamesingle-4, i.e., a $4^{\text{th}}$ degree polynomial, on the top four rows, \shortpolynamesingle-10, i.e., a $10^{\text{th}}$ degree polynomial, on the middle four rows and PN-Conv, i.e., a $4^{\text{th}}$ degree polynomial with convolutions, on the bottom four rows. Our projection method exhibits the best performance in all three attacks, with the difference on accuracy to stronger attacks being substantial.} 
\label{tab:regularization_compare_kmnist_mnist}
\end{table}

\begin{table}[htbp]
\centering
\begin{tabular}{l@{\hspace{0.2cm}} c@{\hspace{0.5cm}} c@{\hspace{0.5cm}} c@{\hspace{0.5cm}}c}
    \hline
    \multirow{2}{*}{Method} &  AT & Our method + AT & Jacobian + AT & $L_2$ + AT \\
        & \multicolumn{4}{c}{Adversarial training (AT) with \shortpolynamesingle-10 on \textcolor{blue}{\emph{K-MNIST}}}\\
    \hline
    
    FGSM-0.1 & $70.93\pm0.46\%$ & $\bm{71.14\pm0.30\%}$ & $64.48\pm0.51\%$ & $70.90\pm0.57\%$  \\
   
    PGD-(0.1, 20, 0.01) & $60.94\pm0.71\%$  & $61.20\pm0.39\%$ & $57.89\pm0.31\%$ & $\bm{61.47\pm0.44\%}$  \\
    
    PGD-(0.3, 20, 0.03) & $30.77\pm0.26\%$ & $\bm{33.07\pm0.58\%}$ & $29.96\pm0.21\%$ & $30.35\pm0.42\%$  \\
    \hline
    & \multicolumn{4}{c}{Adversarial training (AT) with \shortpolynamesingle-10 on \textcolor{blue}{\emph{MNIST}}} \\
    \hline
    FGSM-0.1 & $91.89\pm0.30\%$ & $91.94\pm0.17\%$ & $87.85\pm0.27\%$ & $\bm{92.22\pm0.30\%}$  \\
   
    PGD-(0.1, 20, 0.01) & $87.36\pm0.29\%$  & $\bm{87.38\pm0.37\%}$ & $84.96\pm0.25\%$ & $87.26\pm0.49\%$  \\
    
    PGD-(0.3, 20, 0.03) & $61.96\pm0.92\%$ & $\bm{63.96\pm1.02\%}$ & $62.24\pm0.24\%$ & $62.44\pm0.76\%$  \\
    \hline
\end{tabular}
\vspace{2pt}
\caption{Comparison of regularization techniques on (a) K-MNIST (top) and (b) MNIST (bottom) along with adversarial training (AT). The base network is a \shortpolynamesingle-10, i.e., $10^{\text{th}}$ degree polynomial. Our projection method exhibits the best performance in all three attacks.} 
\label{tab:regularization_compare_KMNIST_MNIST_AT}
\end{table}

\begin{table}[htbp]
\centering
\scalebox{0.96}{
\begin{tabular}{l@{\hspace{1.0cm}} l@{\hspace{1.0cm}} c@{\hspace{1.5cm}} c }
    \hline
    & \multirow{2}{*}{Method} & \multicolumn{2}{c}{\shortpolynamesingle-4, \shortpolynamesingle-10 and \shortpolynamesingle-Conv on \textcolor{blue}{E-MNIST-BY}} \\
      &  & No proj. & Our method \\
    \hline
    \multirow{4}{*}{\shortpolynamesingle-4} &     Clean & $80.18\pm0.19\%$  & $\bm{80.26\pm0.17\%}$\\
    &     FGSM-0.1 & $3.65\pm0.76\%$ &$\bm{16.58\pm3.87\%}$\\
    &     PGD-(0.1, 20, 0.01) & $4.57\pm1.98\%$ & $\bm{19.77\pm4.42\%}$\\
    &     PGD-(0.3, 20, 0.03) & $0.59\pm0.40\%$ & $\bm{10.13\pm2.08\%}$\\

    \hline
    \multirow{4}{*}{\shortpolynamesingle-10} & Clean & $84.17\pm0.06\%$ & $\bm{85.32\pm0.04\%}$\\
     & FGSM-0.1 & $11.67\pm1.21\%$ & $\bm{32.37\pm2.58\%}$\\
    & PGD-(0.1, 20, 0.01) & $2.48\pm0.66\%$ & $\bm{31.22\pm2.32\%}$\\
    & PGD-(0.3, 20, 0.03) & $0.03\pm0.05\%$ & $\bm{13.74\pm0.77\%}$\\
    \hline
    \multirow{4}{*}{\shortpolynamesingle-Conv} & Clean & $85.92\pm0.08\%$ & $\bm{86.03\pm0.08\%}$\\
    & FGSM-0.1 & $0.65\pm0.17\%$ & $\bm{29.07\pm2.72\%}$\\
    & PGD-(0.1, 20, 0.01) & $1.57\pm1.40\%$ & $\bm{31.06\pm4.70\%}$\\
    & PGD-(0.3, 20, 0.03) & $0.33\pm0.06\%$ & $\bm{23.93\pm6.32\%}$\\
    \hline
\end{tabular}
}
\vspace{2pt}
\caption{Comparison of regularization techniques on E-MNIST-BY. The base network are \shortpolynamesingle-4, i.e., $4^{\text{th}}$ degree polynomial, on the top four rows, \shortpolynamesingle-10, i.e., $10^{\text{th}}$ degree polynomial, on the middle four rows and PN-Conv, i.e., a $4^{\text{th}}$ degree polynomial with convolution, on the bottom four rows. Our projection method exhibits the best performance in all three attacks, with the difference on accuracy to stronger attacks being substantial.} 
\label{tab:Test_att_with_proj_emnistby}
\end{table}

\begin{figure*}
\centering
    \centering
    \includegraphics[width=0.9\linewidth]{figures_new-legend.pdf} \\\vspace{-14.2mm}
    \includegraphics[width=1\linewidth]{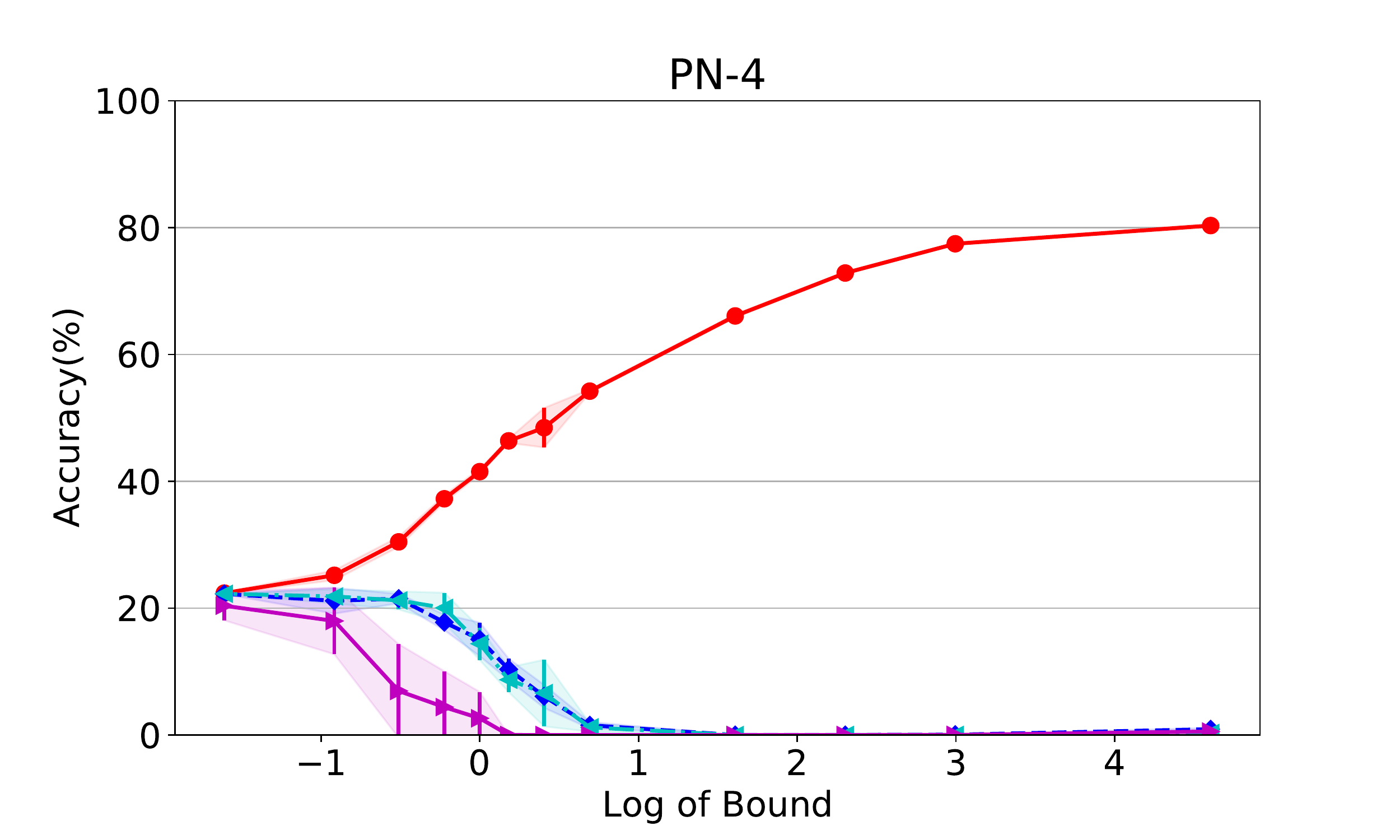}
\caption{Adversarial attacks during testing on NSYNTH.}
\label{fig:test_att_proj_nsynth}
\end{figure*}

\begin{table}[htbp]
\centering
\begin{tabular}{l@{\hspace{0.5cm}} c@{\hspace{1cm}}c} 
    \hline
    Model & \multicolumn{2}{@{}c@{\hspace{1cm}}}{\shortpolynamesingle-4} \\
    
    Projection & No-proj & Proj \\
  \hline
    Clean accuracy & $80.25\pm0.27\%$  & $\bm{80.33\pm0.26\%}$ \\

    FGSM-0.1 & $0.91\pm0.14\%$ &$\bm{22.25\pm0.04\%}$ \\
    
    PGD-(0.1, 20, 0.01) & $0.31\pm0.11\%$ & $\bm{22.27\pm0.00\%}$ \\
    
    PGD-(0.3, 20, 0.03) & $0.46\pm0.29\%$ & $\bm{20.38\pm2.30\%}$ \\
   \hline
\end{tabular}
\vspace{2pt}
\caption{Evaluation of the robustness of \shortpolynamesingle{} models on NSYNTH. Each line refers to a different adversarial attack. The projection offers an improvement in the accuracy in each case; in PGD attacks projection improves the accuracy by a remarkable margin.}
\label{tab:Test_att_with_proj_nsynth}
\end{table}

\subsection{Experimental results of more types of attacks}
\label{ssec:Experimental_results_of_more_types_of_attacks}

To further verify the results of the main paper, we conduct experiments with three additional adversarial attacks: a) FGSM with $\epsilon$ = 0.01, b) Projected Gradient Descent in Trades (TPGD)~\citep{pmlr-v97-zhang19p}, c) Targeted Auto-Projected Gradient Descent (APGDT)~\citep{pmlr-v119-croce20b}. In TPGD and APGDT, we use the default parameters for a one-step attack. 

The quantitative results are reported in \cref{tab:New_test_att_with_proj} for four datasets and the curves of Fashion-MNIST and E-MNIST are visualized in \cref{fig:new_att_fashionmnist_emnist} and the curves of K-MNIST and MNIST are visualized in \cref{fig:new_att_kmnist_mnist}. The results in both cases remain similar to the attacks in the main paper, i.e., the proposed projection improves the performance \emph{consistently} across attacks, types of networks and adversarial attacks.

\begin{figure*}
\centering
    \centering
    \includegraphics[width=0.9\linewidth]{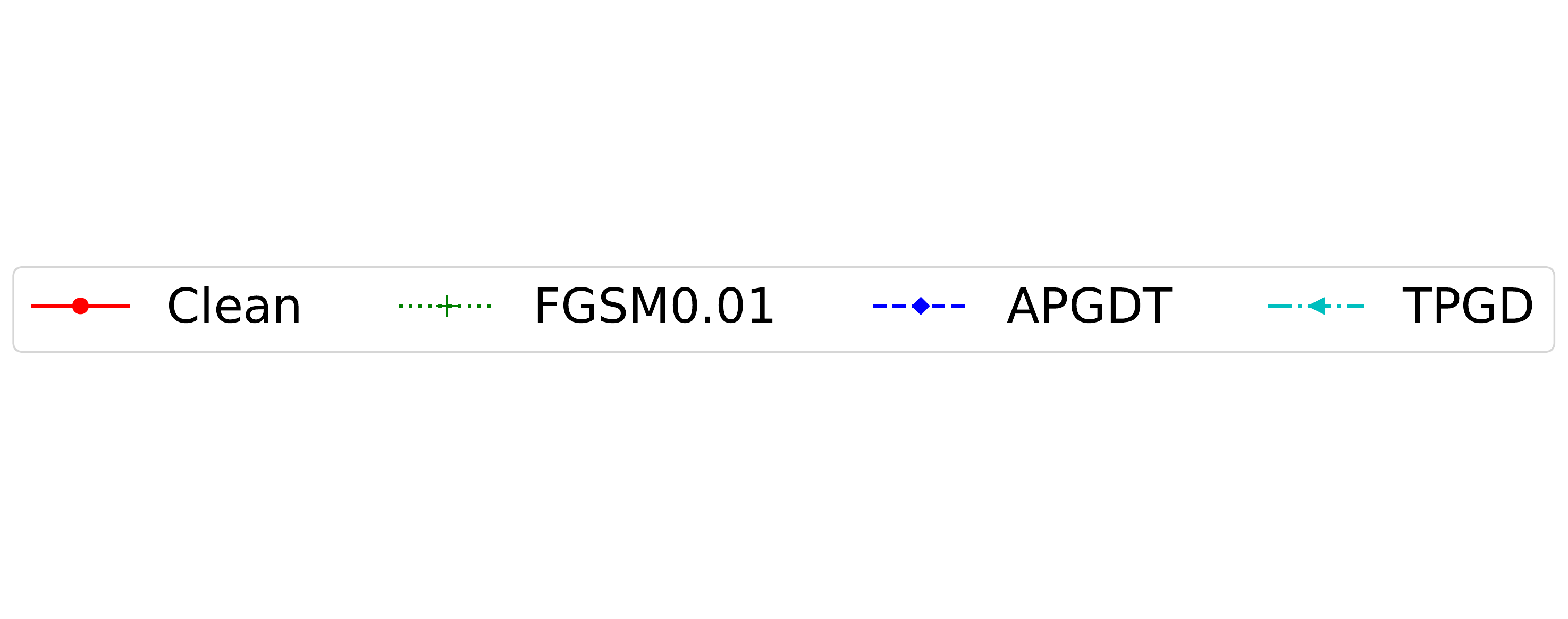} \\\vspace{-15.9mm}
    \subfloat[Fashion-MNIST]{\includegraphics[width=1\linewidth]{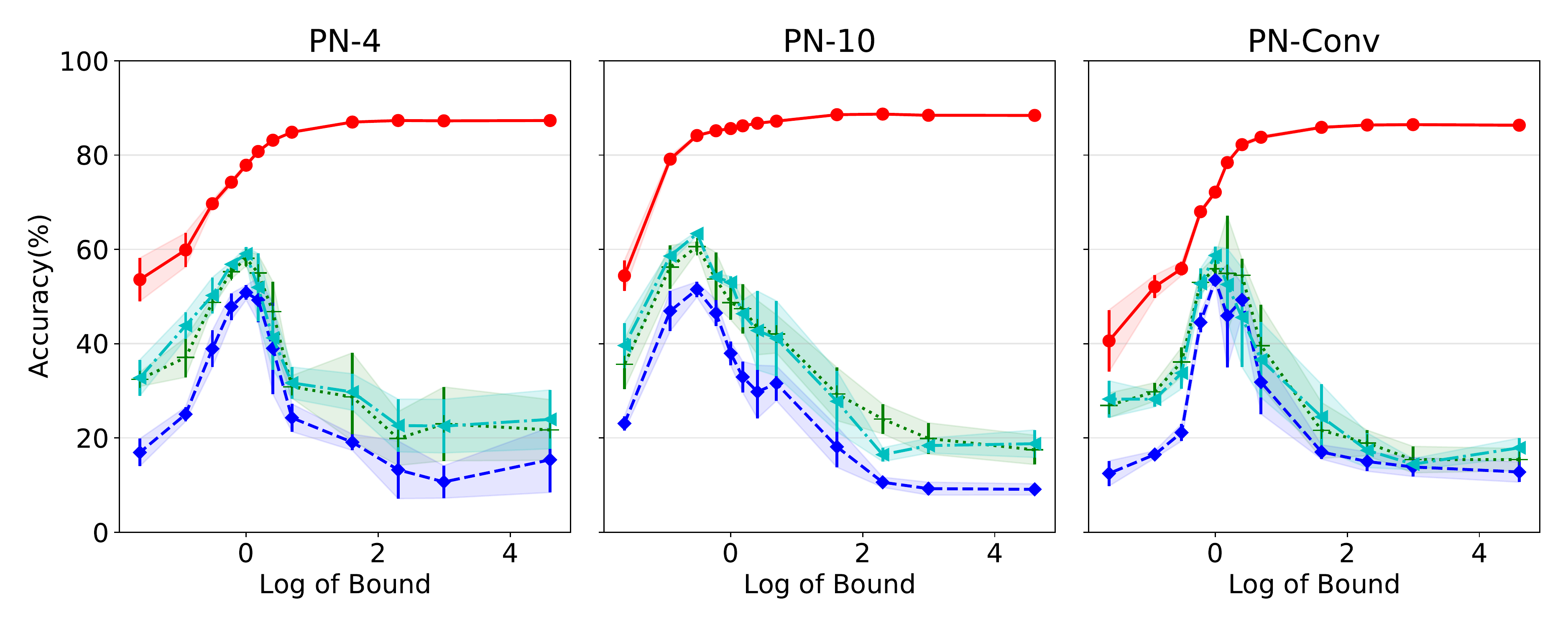}\vspace{-2.5mm}}\\
    \subfloat[E-MNIST]{\includegraphics[width=1\linewidth]{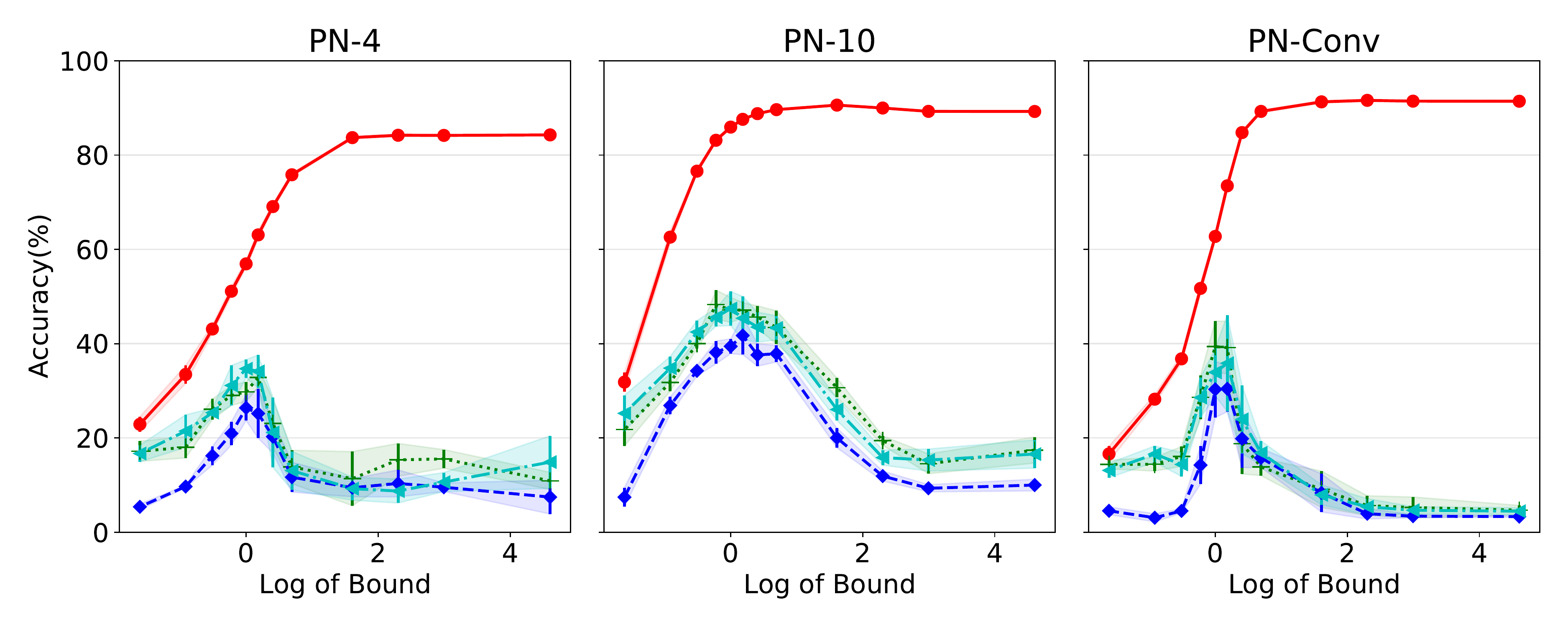}\vspace{-2.5mm}}    \vspace{-3mm}
\caption{Three new adversarial attacks during testing on (a) Fashion-MNIST (top), (b) E-MNIST (bottom
) with the x-axis is plotted in log-scale. Note that intermediate values of projection bounds yield the highest accuracy. The patterns are consistent in both datasets and across adversarial attacks.}
\label{fig:new_att_fashionmnist_emnist}
\end{figure*}

\begin{figure*}
\centering
    \centering
    \includegraphics[width=0.9\linewidth]{figures_new-legend_2.pdf} \\\vspace{-15.9mm}
    \subfloat[K-MNIST]{\includegraphics[width=1\linewidth]{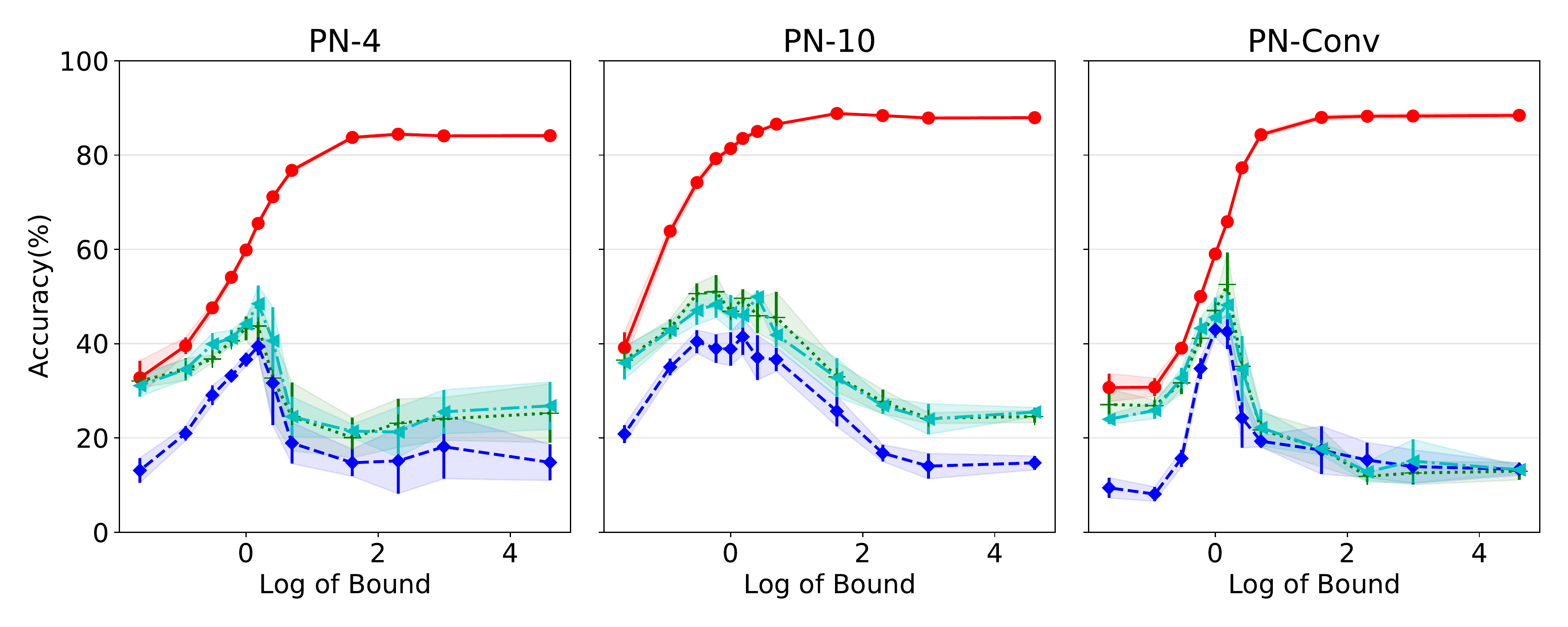}\vspace{-2.5mm}}\\
    \subfloat[MNIST]{\includegraphics[width=1\linewidth]{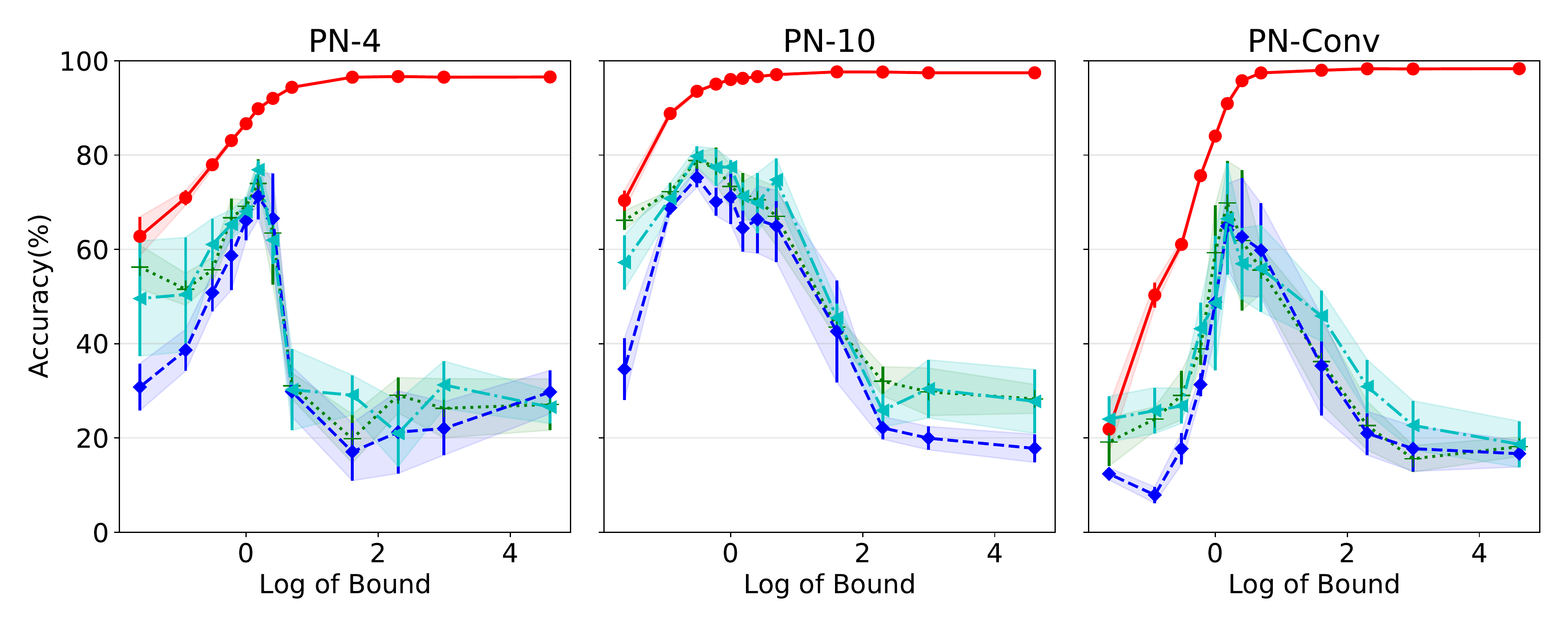}\vspace{-2.5mm} }
    \vspace{-3mm}
\caption{Three new adversarial attacks during testing on (a) K-MNIST (top), (b) MNIST (bottom)  with the x-axis is plotted in log-scale. Note that intermediate values of projection bounds yield the highest accuracy. The patterns are consistent in both datasets and across adversarial attacks.}
\label{fig:new_att_kmnist_mnist}
\end{figure*}

\begin{table}[t]
\centering
\scalebox{0.96}{
\begin{tabular}{l@{\hspace{0.15cm}} l@{\hspace{0.15cm}} c@{\hspace{0.25cm}} c@{\hspace{0.25cm}} c@{\hspace{0.25cm}}c}
    \hline
    & \multirow{2}{*}{Method} &  No proj. & Our method\\
      &  & \multicolumn{2}{c}{\textcolor{blue}{\emph{Fashion-MNIST}}} \\
    \hline
    \multirow{3}{*}{\shortpolynamesingle-4} &     FGSM-0.01 & $26.49\pm3.13\%$ & $\bm{58.09\pm1.63\%}$\\
    &     APGDT & $16.59\pm4.35\%$ & $\bm{50.83\pm1.55\%}$ \\
    &     TPGD & $26.88\pm6.78\%$ &$\bm{59.03\pm1.45\%}$\\
    \hline
    \multirow{3}{*}{\shortpolynamesingle-10} &  FGSM-0.01 & $18.59\pm1.82\%$ & $\bm{60.56\pm1.06\%}$\\
    & APGDT & $8.76\pm1.14\%$ & $\bm{51.93\pm1.91\%}$\\
    & TPGD & $14.53\pm1.49\%$ & $\bm{63.33\pm0.51\%}$\\
    \hline
    \multirow{3}{*}{\shortpolynamesingle-Conv} & FGSM-0.01 & $15.30\pm3.10\%$ & $\bm{55.90\pm2.60\%}$\\
    & APGDT & $11.88\pm1.33\%$ & $\bm{53.49\pm0.72\%}$\\
    & TPGD & $14.50\pm1.59\%$ & $\bm{58.72\pm1.87\%}$\\
    \hline
    & & \multicolumn{2}{c}{\textcolor{blue}{\emph{E-MNIST}}} \\
    \hline
    \multirow{3}{*}{\shortpolynamesingle-4} &     FGSM-0.01 & $13.40\pm5.16\%$ & $\bm{32.83\pm2.08\%}$\\
    &     APGDT & $9.33\pm4.00\%$ & $\bm{26.38\pm2.70\%}$ \\
    &     TPGD & $17.40\pm3.11\%$ & $\bm{34.68\pm1.92\%}$\\
    \hline
    \multirow{3}{*}{\shortpolynamesingle-10}  & FGSM-0.01 & $14.47\pm1.80\%$ & $\bm{48.28\pm3.06\%}$\\
    & APGDT & $10.13\pm0.93\%$ & $\bm{41.72\pm4.05\%}$\\
    & TPGD & $13.97\pm0.88\%$ & $\bm{47.44\pm3.62\%}$\\
    \hline
    \multirow{3}{*}{\shortpolynamesingle-Conv}& FGSM-0.01 & $4.71\pm1.10\%$ & $\bm{39.37\pm5.43\%}$\\
    & APGDT & $3.58\pm0.66\%$ & $\bm{30.43\pm4.87\%}$\\
    & TPGD & $4.08\pm0.33\%$ & $\bm{35.85\pm10.20\%}$\\
    \hline
    & & \multicolumn{2}{c}{\textcolor{blue}{\emph{K-MNIST}}} \\
    \hline
    \multirow{3}{*}{\shortpolynamesingle-4} & FGSM-0.01 & $23.31\pm5.34\%$ & $\bm{43.74\pm5.97\%}$\\
    &     APGDT & $17.02\pm6.97\%$ & $\bm{39.43\pm1.89\%}$ \\
    &     TPGD & $23.45\pm7.67\%$ & $\bm{48.46\pm3.84\%}$\\
    \hline
    \multirow{3}{*}{\shortpolynamesingle-10}  & FGSM-0.01 & $26.87\pm2.14\%$ & $\bm{50.99\pm3.52\%}$\\
    & APGDT & $16.23\pm1.32\%$ & $\bm{41.46\pm3.85\%}$\\
    & TPGD & $22.63\pm0.99\%$ & $\bm{49.91\pm1.37\%}$\\
    \hline
    \multirow{3}{*}{\shortpolynamesingle-Conv}& FGSM-0.01 & $12.31\pm2.03\%$ & $\bm{52.58\pm6.80\%}$\\
    & APGDT & $13.47\pm2.19\%$ & $\bm{42.94\pm1.68\%}$\\
    & TPGD & $14.25\pm2.51\%$ & $\bm{48.19\pm3.02\%}$\\
    \hline
    & & \multicolumn{2}{c}{\textcolor{blue}{\emph{MNIST}}} \\
    \hline
    \multirow{3}{*}{\shortpolynamesingle-4}  &     FGSM-0.01 & $34.14\pm7.63\%$ & $\bm{73.95\pm5.18\%}$\\
    &     APGDT & $29.88\pm9.47\%$ & $\bm{71.26\pm4.88\%}$ \\
    &     TPGD & $27.01\pm9.77\%$ &$\bm{76.88\pm1.98\%}$\\
    \hline
    \multirow{3}{*}{\shortpolynamesingle-10} & FGSM-0.01 & $32.34\pm4.67\%$ & $\bm{78.83\pm1.63\%}$\\
    & APGDT & $19.55\pm1.72\%$ & $\bm{75.22\pm2.05\%}$\\
    & TPGD & $28.11\pm3.87\%$ & $\bm{79.74\pm2.07\%}$\\
    \hline
    \multirow{3}{*}{\shortpolynamesingle-Conv}  & FGSM-0.01 & $22.73\pm3.10\%$ & $\bm{69.83\pm8.91\%}$\\
    & APGDT & $17.95\pm3.39\%$ & $\bm{64.94\pm8.96\%}$\\
    & TPGD & $21.82\pm3.07\%$ & $\bm{66.47\pm11.83\%}$\\
    \hline
\end{tabular}
}
\vspace{-2mm}
\caption{Evaluation of the robustness of \shortpolynamesingle{} models on four datasets with three new types of attacks. Each line refers to a different adversarial attack. The projection offers an improvement in the accuracy in each case.} 
\label{tab:New_test_att_with_proj}
\end{table}

\subsection{Experimental results in NCP model}
\label{ssec:Experimental_results_in_NCP_model}

To complement, the results of the CCP model, we conduct an experiment using the NCP model. That is, we use a $4^{\text{th}}$ degree polynomial expansion, called NCP-4, for our experiment. We conduct an experiment in the K-MNIST dataset and present the result with varying bound in \cref{fig:ncp_kmnist}. Notice that the patterns remain similar to the CCP model, i.e., intermediate values of the projection bound can increase the performance significantly.

\begin{figure*}
\centering
    \centering
    \includegraphics[width=0.9\linewidth]{figures_new-legend.pdf} \\\vspace{-14.2mm}
    \includegraphics[width=1\linewidth]{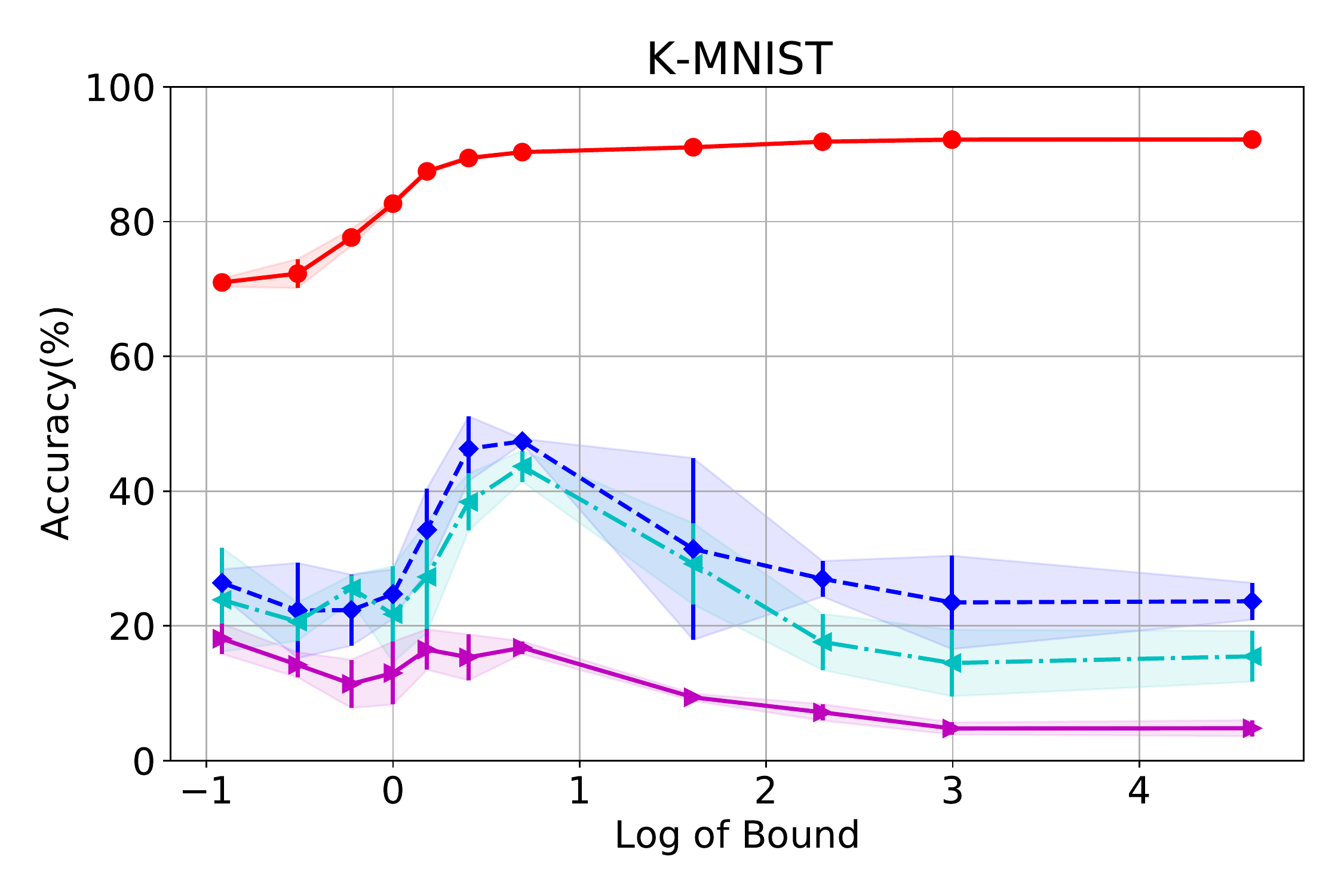}
\caption{Experimental result of K-MNIST in NCP model.}
\label{fig:ncp_kmnist}
\end{figure*}

\subsection{Layer-wise bound}
\label{ssec:complexity_experiments_supplementary_layerwise_bound}

\rebuttal{To assess the flexibility of the proposed method, we assess the performance of the layer-wise bound. In the previous sections, we have considered using a single bound for all the matrices, i.e., $\| \bm{U}_i \|_\infty \leq \lambda$, because the projection for a single matrix has efficient projection algorithms. However, \cref{CCP_RC_Linf_relax_bound} enables each matrix $\bm{U}_i$ to have a different bound $\lambda_i$. We assess the performance of having different bounds for each matrix $\bm{U}_i$.}

\rebuttal{We experiment on \shortpolynamesingle-4 that we set a different projection bound for each matrix $\bm{U}_i$. Specifically, we use five different candidate values for each $\lambda_i$ and then perform the grid search on the  Fashion-MNIST FGSM-0.01 attack. The results on Fashion-MNIST in \cref{tab:complexity_rebuttal_Bound_search_method} exhibit how the layer-wise bounds outperform the previously used single bound\footnote{The single bound is mentioned as `Our method' in the previous tables. In this experiment both `single bound' and `layer-wise bound' are proposed.}. The best performing values for \shortpolynamesingle-4 are $\lambda_1=1.5, \lambda_2=2, \lambda_3=1.5, \lambda_4=2, \mu=0.8$. The values of $\lambda_i$ in the first few layers are larger, while the value in the output matrix $C$ is tighter.}

\rebuttal{To scrutinize the results even further, we evaluate whether the best performing $\lambda_i$ can improve the performance in different datasets and the FGSM-0.1 attack. In both cases, the best performing $\lambda_i$ can improve the performance of the single bound.}

\begin{table}[t]
\centering
\scalebox{0.96}{
\begin{tabular}{l@{\hspace{0.15cm}} c@{\hspace{0.15cm}} c@{\hspace{0.25cm}} c@{\hspace{0.25cm}} c@{\hspace{0.25cm}} c@{\hspace{0.25cm}}c}
    \hline
    \multirow{2}{*}{Method} &  No proj. & Jacobian & $L_2$ & Single bound & Layer-wise bound\\
      &  & \multicolumn{3}{c}{\textcolor{blue}{\emph{Fashion-MNIST}}} \\
    \hline
    FGSM-0.01 & $26.49\pm3.13\%$  & $39.88\pm4.59\%$ & $24.36\pm1.95\%$ & $58.09\pm1.63\%$ & $\bm{63.95\pm1.26\%}$\\
    FGSM-0.1 & $12.92\pm2.74\%$  & $17.90\pm6.51\%$ & $13.80\pm3.65\%$  &  $46.43\pm0.95\%$ &$\bm{55.14\pm3.65\%}$\\
    \hline
    & & \multicolumn{3}{c}{\textcolor{blue}{\emph{K-MNIST}}} \\
    \hline
    FGSM-0.01 & $23.31\pm5.34\%$ & $25.46\pm3.51\%$ & $27.85\pm7.62\%$ & $43.74\pm5.97\%$ & $\bm{49.61\pm1.44\%}$\\
    FGSM-0.1 & $18.86\pm2.61\%$  & $22.61\pm1.30\%$ & $22.05\pm2.76\%$ &  $35.84\pm1.67\%$ & $\bm{47.54\pm3.74\%}$\\
    \hline
    & & \multicolumn{3}{c}{\textcolor{blue}{\emph{MNIST}}} \\
    \hline
    FGSM-0.01 & $34.14\pm7.63\%$ & $32.78\pm6.94\%$ & $29.31\pm3.95\%$ & $73.95\pm5.18\%$ & $\bm{79.23\pm3.65\%}$\\
    FGSM-0.1 & $20.96\pm5.16\%$  & $33.59\pm8.46\%$ & $26.07\pm5.64\%$ &  $64.09\pm2.41\%$ & $\bm{74.97\pm5.60\%}$\\
    \hline
\end{tabular}
}
\vspace{-2mm}
\caption{\rebuttal{Evaluation of our layer-wise bound versus our single bound. To avoid confusion with previous results, note that 'single bound' corresponds to 'Our method' in the rest of the tables in this work. The different $\lambda_i$ values are optimized on Fashion-MNIST FGSM-0.01 attack. Then, the same $\lambda_i$ values are used for training the rest of the methods. The proposed layer-wise bound outperforms the single bound by a large margin, improving even further by baseline regularization schemes. } }
\label{tab:complexity_rebuttal_Bound_search_method}
\end{table}

\subsection{Adversarial defense method}
\label{ssec:complexity_experiments_supplementary_Adversarial_defense}

\rebuttal{One frequent method used against adversarial perturbations are the so called adversarial defense methods. We assess the performance of adversarial defense methods on the \shortpolyname{} when compared with the proposed method.}

\rebuttal{We experiment on \shortpolynamesingle-4 in Fashion-MNIST. We chose three different methods: gaussian denoising, median denoising and guided denoising~\citep{liao2018defense}. Gaussian denoising and median denoising are the methods of using gaussian filter and median filter for feature denoising~\citep{xie2019feature}. The results in \cref{tab:complexity_rebuttal_denoising_method} show that in both attacks our method performs favourably to the adversarial defense methods.}

\begin{table}[t]
\centering
\scalebox{0.76}{
\begin{tabular}{l@{\hspace{0.15cm}} c@{\hspace{0.15cm}} c@{\hspace{0.25cm}} c@{\hspace{0.25cm}} c@{\hspace{0.25cm}} c@{\hspace{0.25cm}}c}
    \hline
    \multirow{2}{*}{Method} &  No proj. & Single bound & Layer-wise bound & Gaussian denoising & Median denoising & Guided denoising\\
      &  & \multicolumn{4}{c}{\textcolor{blue}{\emph{Fashion-MNIST}}} \\
    \hline
    FGSM-0.01 & $26.49\pm3.13\%$ & $58.09\pm1.63\%$ &  $\bm{63.95\pm1.26\%}$& $18.80\pm3.08\%$ & $19.68\pm3.20\%$ & $29.69\pm5.37\%$ \\
    FGSM-0.1 & $12.92\pm2.74\%$ &  $46.43\pm0.95\%$ & $\bm{55.14\pm3.65\%}$& $14.14\pm2.77\%$ & $14.02\pm1.95\%$ & $22.94\pm5.65\%$ \\
    \hline
\end{tabular}
}
\vspace{-2mm}
\caption{\rebuttal{Comparison of the proposed method against adversarial defense methods on feature denoising~\citep{xie2019feature} and guided denoising~\citep{liao2018defense}. Notice that the single bound (cf. \cref{ssec:complexity_experiments_supplementary_layerwise_bound} for details) already outperforms the proposed defense methods, while the layer-wise bounds further improves upon our single bound case.} }
\label{tab:complexity_rebuttal_denoising_method}
\end{table}

 \end{document}